\documentclass{article} 
\usepackage{iclr2025_conference,times}


\usepackage{amsmath,amsfonts,bm}









\def\eqref#1{equation~\ref{#1}}









\def\1{\bm{1}}




\def\rvc{{\mathbf{c}}}

\def\rvg{{\mathbf{g}}}

\def\rvx{{\mathbf{x}}}

\def\rvz{{\mathbf{z}}}







\DeclareMathAlphabet{\mathsfit}{\encodingdefault}{\sfdefault}{m}{sl}
\SetMathAlphabet{\mathsfit}{bold}{\encodingdefault}{\sfdefault}{bx}{n}













\usepackage{titletoc}
\newcommand\DoToC{%
  \startcontents
  \printcontents{}{1}{\hrulefill\vskip0pt}
  \vskip0pt \noindent\hrulefill
  }

\usepackage{enumitem}
\usepackage{makecell}
\usepackage{algorithm}
\usepackage{algorithmic}
\usepackage{mathtools}
\usepackage{amsthm}
\usepackage[mathscr]{eucal}
\usepackage{bm}
\usepackage{graphicx}
\usepackage{multirow}
\usepackage{amsmath,lipsum}
\usepackage{amssymb} 
\usepackage{wrapfig}
\usepackage{ulem}

\usepackage{tabularx}
\usepackage{cuted}
\usepackage[utf8]{inputenc}
\usepackage{ragged2e}

\usepackage[english]{babel}

\newtheorem{theorem}{Theorem}
\newtheorem{corollary}{Corollary}

\newtheorem{definition}{Definition}

\setlength{\floatsep}{5pt plus 2pt minus 2pt}
\setlength{\textfloatsep}{5pt plus 2pt minus 2pt}
\setlength{\intextsep}{5pt plus 2pt minus 2pt}
\usepackage{subfiles}
\usepackage{xr}
\definecolor{myblue}{HTML}{b2f0ff}
\definecolor{myblue2}{HTML}{cef5ff}
\definecolor{myblue3}{HTML}{e7faff}






\usepackage{color}

\newcommand{\syf}[1]{%
  \textcolor{black}{#1}%
}




\usepackage[utf8]{inputenc} 
\usepackage[T1]{fontenc}    
\usepackage{hyperref}       
\usepackage{url}            
\usepackage{booktabs}       
\usepackage{amsfonts}       
\usepackage{nicefrac}       
\usepackage{microtype}      
\usepackage{xcolor}         

\title{On the Identification of Temporally Causal
Representation with Instantaneous Dependence}


\author{Zijian Li$^{\dagger \bullet *}$\, Yifan Shen$^{\bullet}$\thanks{These authors contributed equally to this work. } \, Kaitao Zheng$^{\ddagger}$\, \textbf{Ruichu Cai}$^{\ddagger}$\, Xiangchen Song$^{\dagger}$\, 
Mingming Gong$^{\star}$\, \\ \textbf{Guangyi Chen}$^{\dagger \bullet}$\,   
\textbf{Kun Zhang}$^{\dagger \bullet}$ \\
$^{\dagger}$Carnegie Mellon University, Pittsburgh PA, USA \\
$^{\bullet}$Mohamed bin Zayed University of Artificial Intelligence, Abu Dhabi, UAE \\
$^{\ddagger}$Guangdong University of Technology, Guangzhou, China \\
$^{\star}$The University of Melbourne \\}

%

\iclrfinalcopy 
\begin{document}

\maketitle

\begin{abstract}
Temporally causal representation learning aims to identify the latent causal process from time series observations, but most methods require the assumption that the latent causal processes do not have instantaneous relations. Although some recent methods achieve identifiability in the instantaneous causality case, they require either interventions on the latent variables or grouping of the observations, which are in general difficult to obtain in real-world scenarios. To fill this gap, we propose an \textbf{ID}entification framework for instantane\textbf{O}us \textbf{L}atent dynamics (\textbf{IDOL}) by imposing a sparse influence constraint that the latent causal processes have \textcolor{black}{sparse time-delayed and instantaneous relations}. 
\textcolor{black}{Specifically, we establish identifiability results of the latent causal process \textcolor{black}{up to a Markov equivalence class} based on sufficient variability and the sparse influence constraint by employing contextual information.} \textcolor{black}{We further explore under what conditions the identification can be extended to the causal graph.} Based on these theoretical results, we incorporate a temporally variational inference architecture to estimate the latent variables and a gradient-based sparsity regularization to identify the latent causal process. Experimental results on simulation datasets illustrate that our method can identify the latent causal process. Furthermore, \textcolor{black}{evaluations on human motion forecasting benchmarks indicate the effectiveness in real-world settings. Source code is available at \href{https://github.com/DMIRLAB-Group/IDOL}{https://github.com/DMIRLAB-Group/IDOL}}.
\end{abstract}

\section{Introduction}



Time series analysis \citep{zhang2023self,tang2021probabilistic,li2023difformer,wu2022timesnet,luo2024moderntcn}, which has been found widespread applications across diverse fields such as weather \citep{bi2023accurate,wu2023interpretable}, finance \citep{tsay2005analysis,huynh2022few}, and human activity recognition \citep{yang2015deep,kong2022human}, aims to capture the underlying patterns \citep{wang2019time,jin2022multivariate} behind the time series data. To achieve this, one solution is to estimate the latent causal processes \citep{tank2021neural,li2023transferable,zheng2019using}. However, without further assumptions \citep{locatello2020sober}, it is challenging to identify latent causal processes, i.e., ensure that the estimated latent causal process is correct.

Researchers have exploited Independent Component Analysis (ICA) \citep{hyvarinen2000independent,comon1994independent,hyvarinen2013independent,zhang2007kernel}, where observations are generated from latent variables via a linear mixing function, to identify the latent causal process. To deal with nonlinear cases, different types of assumptions including sufficient changes \citep{yao2021learning,khemakhem2020variational,xie2022multi} and structural sparsity \citep{lachapelle2022disentanglement,zheng2022identifiability} are proposed to meet the independent change of latent variables. Specifically, several works leverage auxiliary variables to achieve strong identifiable results of latent variables \citep{halva2020hidden,halva2021disentangling,hyvarinen2016unsupervised,hyvarinen2017nonlinear,khemakhem2020variational,hyvarinen2019nonlinear}. To seek identifiability in an unsupervised manner, other researchers \citep{ng2024identifiability,zheng2024generalizing} propose the structural sparsity on the generation process from latent sources to observation. However, these methods usually do not have instantaneous relations, e.g., the latent variables are mutually independent conditional on their time-delayed parents \textcolor{black}{at previous timestamps}. Hence, further assumptions are used for identifiability. For example, \cite{lippe2023causal} demonstrate identifiability by assuming that there exist interventions on latent variables, and \cite{morioka2023causal} 
yield identifiability with the assumption of the grouping of observational variables. Recently,  \cite{zhang2024causal} use the sparse structures of latent variables to achieve identifiability of static data under distribution shift. 
Beyond it, in this work, we demonstrate that temporal identifiability benefits from the sparsity constraint on the causal influences in the latent causal processes by employing the temporal contextual information, \textcolor{black}{i.e., the variables across multiple timestamps.}
Please refer to Appendix \ref{app:related_works} and \ref{app:instantaneous_motivation} for further discussion of related works, including the identification of latent variables and instantaneous time series.
\begin{figure}
    \centering
\includegraphics[width=\columnwidth]{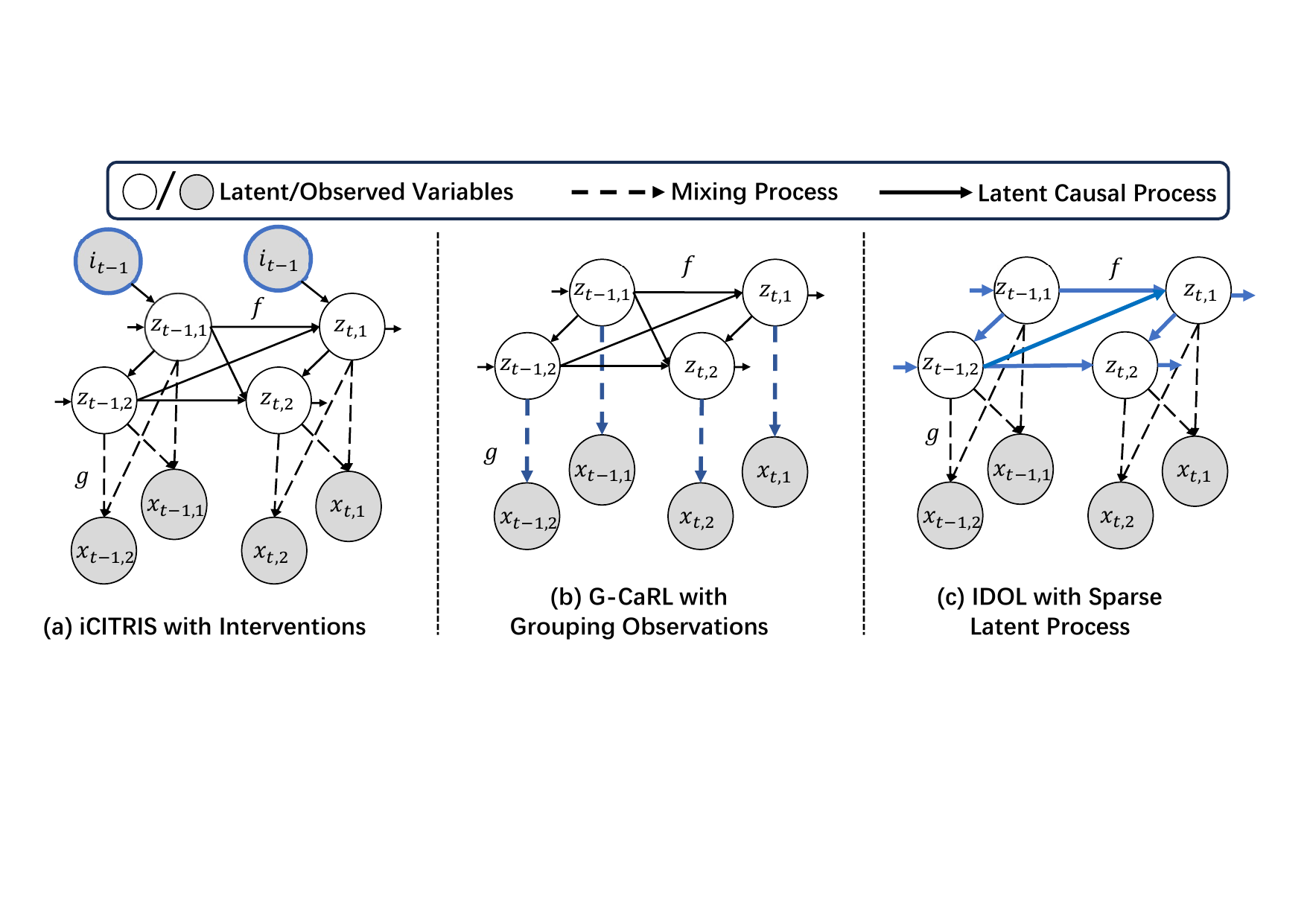}
    \caption{Three different data generation processes with time-delayed and instantaneous dependencies. (a) iCITRIS \citep{lippe2023causal} requires intervention variables $\textbf{i}_t$ for latent variables (the gray nodes with blue rim.). (b) G-CaRL \citep{morioka2023causal} assumes \textcolor{black}{that observed variables can be grouped according to which latent variables they are connected to} (the blue dotted lines), (c) IDOL requires a \textbf{sparse} latent causal process (the blue solid lines).}
    \label{fig:motivation}
\end{figure}

Although these methods \citep{lippe2023causal,morioka2023causal} have demonstrated the identifiability of latent variables with instantaneous dependencies, they often impose strict assumptions on the latent variables and observations. Specifically, as shown in Figure \ref{fig:motivation} (a), iCITRIS \citep{lippe2023causal} reaches identifiability by assuming that interventions $\bm{I}_t$, i.e., the gray nodes with blue rim, on the latent variables, which is expensive and difficult in practice. For instance, considering the joints as the latent variables of a human motion process, it is almost impossible to intervene in the joints of the knee when a human is walking. Moreover, as shown in Figure \ref{fig:motivation} (b), G-CaRL \citep{morioka2023causal} employs the grouping of observations to identify the latent variables, i.e., the blue dotted lines. However, the grouping of observations is usually expensive, for example, grouping the sensors on a human body from the joints of skeletons requires massive of expert knowledge and labor. Therefore, it is urgent to find a more general approach with an appropriate assumption to identify the latent causal process with instantaneous dependencies.

To identify the latent variables in more relaxed assumptions, we present a general \textbf{ID}entification framework for instantane\textbf{O}us \textbf{L}atent dynamics (\textbf{IDOL}) with the only assumption of the sparse causal influences in the latent processed named ``sparse latent process'' as shown in Figure \ref{fig:motivation} (c), where the transition process denoted by the blue solid lines is sparse. It is worth noting that the sparse latent process assumption is common and naturally appears in many practical time series data. In the example of human motion forecasting, a skeleton of the human body, which can be considered to be a latent causal process, is usually sparse. This is because there are a few connections among the joints of the human body. Based on the \textit{latent process sparsity}, we first establish the relationship between the ground truth and the estimated latent variables by employing sufficient variability. Sequentially, we propose the sparse latent process assumption to achieve the component-wise identifiability of latent variables by leveraging the contextual information of time series data, \textcolor{black}{which implies the identification of Markov equivalence class. Furthermore, we can extend to the identification of the causal graph when the endpoints of instantaneous edges do not share identical time-delayed parents, i,e, the example in Figure~\ref{fig:motivation} (c) precisely satisfies this condition.} Building upon this identifiability framework, we develop the \textbf{IDOL} model, which involves a variational inference neural architecture to reconstruct the latent variables and a gradient-based sparsity penalty to encourage the sparse causal influences in the latent process for identifiability. Evaluation results on the simulation datasets support the theoretical claims and experiment results on several benchmark datasets for time series forecasting highlight the effectiveness of our method. 

\section{Problem Setup}
\subsection{Time Series Generative Model under Instantaneous Latent Causal Process}
We begin with the data generation process as shown in Figure \ref{fig:motivation} (c) where the latent causal process is sparse. To facilitate clarity, we adopt the terminology widely used in ICA literature like observation $\rvx_t$, latent variables $\rvz_t$ and mixing function $g$ \citep{hyvarinen2016unsupervised}. Suppose that we have time series data with discrete timestamps $\textbf{X}=\{\rvx_1,\cdots,\rvx_t,\cdots,\rvx_T\}$, where $\rvx_t$ is generated from latent variables $\rvz_t\in \mathcal{Z}\subseteq \mathbb{R}^n$ via an invertible and nonlinear mixing function $\rvg$. We have:
\begin{equation}
\small
\label{equ:g1}
    \rvx_t=\rvg(\rvz_t).
\end{equation}
Specifically, the $i$-th dimension of the latent variable $z_{t,i}$ is generated via the latent causal process, which is assumed to be related to the time-delayed parent variables $\text{Pa}_d(z_{t,i})$ and the instantaneous parents $\text{Pa}_e(z_{t,i})$. Formally, it can be written via a structural equation model \citep{neuberg2003causality} as 
\begin{equation}
\small
\label{equ:g2}
\begin{split}
    z_{t,i}= f_i(\text{Pa}_d(z_{t,i}),\text{Pa}_e(z_{t,i}),\epsilon_{t,i}) \quad \text{with} \quad \epsilon_{t,i}\sim p_{\epsilon_{i}},
\end{split}
\end{equation}
where $\epsilon_{t,i}$ denotes the temporally and spatially
independent noise extracted from a distribution $p_{\epsilon_{i}}$. 
To better understand this data generation process, we take an example of human motion forecasting, where the joint positions can be considered as latent variables. In this case, the rigid body effects among joints result in instantaneous effects, and the human movement trajectory recorded by the sensor are observed variables. As a result, we can consider the process from latent joint positions to movement trajectory as the mixing process.


\subsubsection{Identifiability of Latent Causal Processes}
Based on the aforementioned generation process, we further provide the definition of the identifiability of latent causal process with instantaneous dependency in Definition~\ref{def: identifibility}. Moreover, if the estimated latent processes can be identified at least up to permutation and component-wise invertible transformation, the latent causal relations are also immediately identifiable \textcolor{black}{up to a Markov equivalence class \citep{spirtes2001causation}}. \textcolor{black}{We further show how to go beyond the Markov equivalence class and identify the instantaneous causal relations with a mild assumption in Theorem \ref{th: identification_of_process}.}

\begin{definition}[Identifiable Latent Causal Process \citep{yao2022temporally,yao2021learning}]
\label{def: identifibility}
Let $\mathbf{X} = \{\mathbf{x}_1, \dots, \mathbf{x}_T\}$ be a sequence of observed variables generated by the true latent causal processes specified by $(f_i, p({\epsilon_i}), \mathbf{g})$ given in Equation (\ref{equ:g1}) and (\ref{equ:g2}). A learned generative model $(\hat{f}_i, \hat{p}({\epsilon_i}), \hat{\mathbf{g}})$ is observational equivalent to $(f_i, p({\epsilon_i}), \mathbf{g})$ if the model distribution $p_{\hat{f}_i, \hat{p}_\epsilon, \hat{\mathbf{g}}}(\{\mathbf{x}\}_{t=1}^T)$ matches the data distribution $ p_{f_i, p_\epsilon,\mathbf{g}}(\{\mathbf{x}\}_{t=1}^T)$ for any value of $\{\mathbf{x}\}_{t=1}^T$.
We say latent causal processes are identifiable if observational equivalence can lead to a version of the generative process up to a permutation $\pi$ and component-wise invertible transformation $\mathcal{T}$:
\begin{equation}
\small
\label{eq:iden}
     p_{\hat{f}_i, \hat{p}_{\epsilon_i},\hat{\mathbf{g}}}\big(\{\mathbf{x}\}_{t=1}^T\big) = p_{f_i, p_{\epsilon_i}, \mathbf{g}} \big(\{\mathbf{x}\}_{t=1}^T\big) 
     \ \ \Rightarrow\ \   \hat{\mathbf{g}} = \mathbf{g}  \circ  \pi  \circ \mathcal{T}.
\end{equation}
\end{definition} 
Once the mixing process gets identified, the latent variables will be immediately identified up to permutation and component-wise invertible transformation: 
\begin{equation}
\small
    \mathbf{\hat{z}}_t 
    = \mathbf{\hat{g}}^{-1}(\mathbf{x}_t) 
    = \big(\mathcal{T}^{-1} \circ \pi^{-1} \circ \mathbf{g}^{-1}\big)(\mathbf{x}_t) 
    = \big(\mathcal{T}^{-1} \circ \pi^{-1} \big)\big(\mathbf{g}^{-1}(\mathbf{x}_t)\big) 
    = \big(\mathcal{T}^{-1} \circ \pi^{-1} \big)(\mathbf{z}_t).
\end{equation}


\section{Identification Results for Latent Causal Process}
Given the definition of identification of latent causal process, we show how to identify the latent causal process in Figure \ref{fig:motivation} (c) under a sparse latent process. Please note that our theorem is discussed under Markov Network. For more details, please refer to Appendix~\ref{app: Markov Network}. Specifically, we first leverage the connection between conditional independence and cross derivatives \citep{lin1997factorizing} and the sufficient variability of temporal data to discover the relationships between the estimated and the true latent variables, which is shown in Theorem \ref{Th1}. Moreover, we establish the identifiability result of latent variables by enforcing the sparse causal influences, as shown in Theorem \ref{Th2}. Moreover, we also show that the existing identifiability results of the temporally causal representation learning methods are a special case of our \textbf{IDOL} method, which is shown in Corollary \ref{main:coroll}. Finally, under a mild assumption, we achieve identification of the causal graph underlying the latent causal process, as shown in Theorem~\ref{th: identification_of_process}.

\subsection{Relationships between Ground-truth and Estimated Latent Variables}


\syf{
In this section, we figure out the relationships between ground truth and estimated latent variables under a temporal scenario with instantaneous dependence. In order to incorporate contextual information to aid in the identification of latent variables $\rvz_t$, latent variables of $L$ consecutive timestamps including $\rvz_t$, are taken into consideration. Without loss of generality, we consider a simplified case, where the length of the sequence is 2, i.e., $L=2$, and the time lag is 1, i.e., $\tau=1$. Please refer to Appendix \ref{app: Extension to Multiple Lags and Sequence Lengths} for the general case of multiple lags and sequence lengths.}
\syf{
\begin{theorem}
\label{Th1}
\textcolor{black}{(\textbf{Identifiability of Temporally Latent Process})}
For a series of observations $\mathbf{x}_t\in\mathbb{R}^n$ and estimated latent variables $\mathbf{\hat{z}}_{t}\in\mathbb{R}^n$ with the corresponding process $\hat{f}_i, \hat{p}(\epsilon), \hat{g}$, where $\hat{g}$ is invertible, suppose the process subject to observational equivalence $\mathbf{x}_t = \mathbf{\hat{g}}(\mathbf{\hat{z}}_{t})$. Let $\mathcal{M}_{\mathbf{z}_t}$ be the variable set of two consecutive timestamps and the corresponding Markov network respectively. Suppose the following assumptions hold:
    \begin{itemize}[leftmargin=*]
        \item \underline{\textcolor{black}{A1 (Smooth and Positive Density):}} \textcolor{black}{The conditional probability function of the latent variables $\rvc_t$ is smooth and positive, i.e., $p(\rvz_t|\rvz_{t-1})$ is third-order differentiable and $p(\rvz_t|\rvz_{t-1})>0$ over $\mathbb{R}^{2n}$, }
        \item \underline{A2 (Sufficient Variability):} 
        Denote $|\mathcal{M}_{\rvz_t}|$ as the number of edges in Markov network $\mathcal{M}_{\rvz_t}$. Let
        \begin{equation}
        \small
        \begin{split}
            w(m)=
            &\Big(\frac{\partial^3 \log p(\rvz_t|\rvz_{t-1})}{\partial z_{t,1}^2\partial z_{t-1,m}},\cdots,\frac{\partial^3 \log p(\rvz_t|\rvz_{t-1})}{\partial z_{t,n}^2\partial z_{t-1,m}}\Big)\oplus \\
            &\Big(\frac{\partial^2 \log p(\rvz_t|\rvz_{t-1})}{\partial z_{t,1}\partial z_{t-1,m}},\cdots,\frac{\partial^2 \log p(\rvz_t|\rvz_{t-1})}{\partial z_{t,n}\partial z_{t-1,m}}\Big)\oplus \Big(\frac{\partial^3 \log p(\rvz_t|\rvz_{t-1})}{\partial z_{t,i}\partial z_{t,j}\partial z_{t-1,m}}\Big)_{(i,j)\in \mathcal{E}(\mathcal{M}_{\rvz_t})},
        \end{split}
        \end{equation}
    where $\oplus$ denotes concatenation operation and $(i,j)\in\mathcal{E}(\mathcal{M}_{\rvz_t})$ denotes all pairwise indice such that $z_{t,i},z_{t,j}$ are adjacent in $\mathcal{M}_{\rvz_t}$.
        For $m\in[1,\cdots,n]$, there exist $2n+|\mathcal{M}_{\rvz_t}|$ different values of $\rvz_{t-1,m}$, such that the $2n+|\mathcal{M}_{\rvz_t}|$ values of vector functions $w(m)$ are linearly independent.
    \end{itemize}
    Then for any two different entries \textcolor{black}{$\hat{z}_{t,k},\hat{z}_{t,l} $ of $ \hat{\rvz}_t \in \mathbb{R}^{n}$} that are \textbf{not adjacent} in the Markov network $\mathcal{M}_{\hat{\rvz}_t}$ over estimated $\hat{\rvz}_t$,\\
    \textbf{(i)} Each ground-truth latent variable \textcolor{black}{$z_{t,i}$ of $ \rvz_t \in \mathbb{R}^{n}$} is a function of at most one of $\hat{z}_{k}$ and $\hat{z}_{l}$,\\
    \textbf{(ii)} For each pair of ground-truth latent variables $z_{t,i}$ and $z_{t,j}$ \textcolor{black}{ of $ \rvz_t \in \mathbb{R}^{n}$} that are \textbf{adjacent} in $\mathcal{M}_{\rvz_t}$ over $\rvz_{t}$, they can not be a function of $\hat{z}_{t,k}$ and $\hat{z}_{t,l}$ respectively.
\end{theorem}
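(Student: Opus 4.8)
## Proof Strategy for Theorem~\ref{Th1}

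\textbf{Overall approach.} The plan is to follow the now-standard ``change-of-variables plus sufficient-variability'' argument (as in \citet{yao2022temporally}) but carried out at the level of the Markov network rather than assuming full conditional independence among the latent coordinates. The starting point is the invertible map $\vh = \hat{\vg}^{-1} \circ \vg$ relating true and estimated latents, so that $\hat{\rvz}_t = \vh(\rvz_t)$ (componentwise through time as well, using $L=2$, $\tau=1$). From observational equivalence of the joint densities over the two-timestamp window, I would write the log-density of $(\rvz_{t-1},\rvz_t)$ both in terms of the true factorization implied by the SEM~(\ref{equ:g2}) and its Markov network $\mathcal{M}_{\rvz_t}$, and in terms of the estimated one, linked by the Jacobian of $\vh$. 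The key structural input is that the true $\log p(\rvz_t \mid \rvz_{t-1})$ decomposes according to $\mathcal{M}_{\rvz_t}$: it is a sum of node potentials and edge potentials over pairs $(i,j) \in \mathcal{E}(\mathcal{M}_{\rvz_t})$, so that $\frac{\partial^2 \log p}{\partial z_{t,i}\partial z_{t,j}} = 0$ whenever $z_{t,i}, z_{t,j}$ are non-adjacent.

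\textbf{Key steps in order.} First, I would establish the Jacobian identity: differentiating the change-of-variables relation gives $\log p(\hat{\rvz}_{t-1},\hat{\rvz}_t) = \log p(\rvz_{t-1},\rvz_t) - \log|\det J_{\vh}|$. Second, take partial derivatives with respect to a pair $\hat{z}_{t,k}, \hat{z}_{t,l}$ that are non-adjacent in $\mathcal{M}_{\hat{\rvz}_t}$: on the estimated side the cross term $\frac{\partial^2 \log \hat p}{\partial \hat z_{t,k} \partial \hat z_{t,l}}$ vanishes. Expanding the right-hand side via the chain rule produces a sum of terms containing first and second derivatives of the true $\log p$ weighted by entries of $J_{\vh}$ and its derivatives. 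Third — and this is where A2 does the work — I would further differentiate this identity with respect to each $z_{t-1,m}$. Because the terms coming from $\log|\det J_{\vh}|$ do not depend on $\rvz_{t-1}$ (as $\vh$ acts within a timestamp), those drop out, leaving a linear relation among exactly the $2n + |\mathcal{M}_{\rvz_t}|$ scalar functions assembled in $w(m)$, with coefficients built from the Jacobian entries. The sufficient-variability hypothesis — $2n + |\mathcal{M}_{\rvz_t}|$ values of $z_{t-1,m}$ making the $w(m)$ linearly independent — forces all those coefficients to vanish. Fourth, I would translate the vanishing coefficients into statements about $J_{\vh}$: the coefficients multiplying the ``$\partial^3 / \partial z_{t,i}^2 \partial z_{t-1,m}$'' and ``$\partial^2$'' blocks give, for each $i$, $\frac{\partial \hat z_{t,k}}{\partial z_{t,i}}\cdot \frac{\partial \hat z_{t,l}}{\partial z_{t,i}} = 0$, which is exactly claim (i) — no true coordinate $z_{t,i}$ can influence both $\hat z_{t,k}$ and $\hat z_{t,l}$. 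The coefficients multiplying the edge block $(i,j)\in\mathcal{E}(\mathcal{M}_{\rvz_t})$ give $\frac{\partial \hat z_{t,k}}{\partial z_{t,i}}\cdot\frac{\partial \hat z_{t,l}}{\partial z_{t,j}} + \frac{\partial \hat z_{t,k}}{\partial z_{t,j}}\cdot\frac{\partial \hat z_{t,l}}{\partial z_{t,i}} = 0$; combined with (i) this yields claim (ii), that adjacent $z_{t,i}, z_{t,j}$ cannot be routed separately into $\hat z_{t,k}$ and $\hat z_{t,l}$.

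\textbf{Anticipated main obstacle.} The routine part is the chain-rule bookkeeping; the delicate part is \emph{correctly identifying which cross-derivative terms survive} once we restrict $\log p$ to respect the Markov network $\mathcal{M}_{\rvz_t}$ instead of assuming full factorization. In the classical temporally-causal setting the latent coordinates at time $t$ are conditionally independent given $\rvz_{t-1}$, so only the ``squared'' node terms appear; here the edge potentials contribute genuine mixed third derivatives $\frac{\partial^3 \log p}{\partial z_{t,i}\partial z_{t,j}\partial z_{t-1,m}}$, and I must verify that the dimension count in A2 ($2n + |\mathcal{M}_{\rvz_t}|$) is exactly right — i.e., that no additional independent functional directions are generated by the expansion, and that the map from Jacobian-coefficient vectors to the $w(m)$-coefficients is the one whose kernel gives precisely (i) and (ii). A secondary subtlety is handling the symmetry $(i,j)$ versus $(j,i)$ in the edge block without double-counting, and making sure the argument is symmetric in $\hat z_{t,k} \leftrightarrow \hat z_{t,l}$ so that both can be swapped freely. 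Once the coefficient-matching is pinned down, the conclusion follows by the standard linear-independence-forces-zero argument.
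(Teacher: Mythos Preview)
Your proposal is essentially the paper's own argument: change-of-variables identity, vanishing cross-derivative on the estimated side for non-adjacent $\hat z_{t,k},\hat z_{t,l}$, chain-rule expansion picking up only node and edge terms of $\mathcal{M}_{\rvz_t}$, a further derivative in $z_{t-1,m}$ to eliminate $\log|\det J_h|$, and then A2 forcing all $2n+|\mathcal{M}_{\rvz_t}|$ coefficients to zero.

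One slip to fix: you set $\hat{\rvz}_t=\vh(\rvz_t)$ and then differentiate with respect to $\hat z_{t,k},\hat z_{t,l}$; carrying out that chain rule yields factors $\frac{\partial z_{t,i}}{\partial \hat z_{t,k}}$, not $\frac{\partial \hat z_{t,k}}{\partial z_{t,i}}$ as you wrote. The resulting zero conditions are $\frac{\partial z_{t,i}}{\partial \hat z_{t,k}}\cdot\frac{\partial z_{t,i}}{\partial \hat z_{t,l}}=0$ and (for adjacent $i,j$) $\frac{\partial z_{t,i}}{\partial \hat z_{t,k}}\cdot\frac{\partial z_{t,j}}{\partial \hat z_{t,l}}+\frac{\partial z_{t,j}}{\partial \hat z_{t,k}}\cdot\frac{\partial z_{t,i}}{\partial \hat z_{t,l}}=0$, which is exactly what statements (i) and (ii) assert (``$z_{t,i}$ is a function of at most one of $\hat z_{t,k},\hat z_{t,l}$''), whereas your paraphrase ``no $z_{t,i}$ can influence both $\hat z_{t,k},\hat z_{t,l}$'' is the dual statement and is \emph{not} what the theorem claims. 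With the Jacobian direction corrected, the rest goes through as you outline.
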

}

\textbf{Proof Sketch.} The proof can be found in Appendix \ref{app:the1}. First, we establish a bijective transformation between the ground-truth $\rvz_t$ and the estimated $\hat{\rvz}_t$ to connect them together. Next, we utilize the structural properties of the ground-truth Markov network $\mathcal{M}_{\rvz_t}$ to constrain the structure of the estimated Markov network $\mathcal{M}_{\hat{\rvz}_t}$ through the connection between conditional independence and cross derivatives \citep{lin1997factorizing}, i.e., $z_{t,i} \perp z_{t,j} | \rvz_t \backslash \{z_{t,i}, z_{t,j}\}$ implies $\frac{\partial^2 \log p(\rvz_t)}{\partial z_{t,i}\partial z_{t,j}}=0$. By introducing the sufficient variability assumption, we further construct a linear system with a full rank coefficient matrix to ensure that the only solution holds, i.e.,
\begin{equation} 
\label{eq: solutions of th1}
\small
    \frac{\partial z_{t,i}}{\partial \hat{z}_{t,k}}\cdot\frac{\partial z_{t,i}}{\partial \hat{z}_{t,l}}=0, \quad
    \frac{\partial z_{t,i}}{\partial \hat{z}_{t,k}}\cdot\frac{\partial z_{t,j}}{\partial \hat{z}_{t,l}}=0, \quad
    \frac{\partial^2 z_{t,i}}{\partial z_{t,k}\partial z_{t,l}}=0, 
\end{equation}
where $\frac{\partial z_{t,i}}{\partial \hat{z}_{t,k}}\cdot\frac{\partial z_{t,i}}{\partial \hat{z}_{t,l}}=0$ and $\frac{\partial z_{t,i}}{\partial \hat{z}_{t,k}}\cdot\frac{\partial z_{t,j}}{\partial \hat{z}_{t,l}}=0$ correspond to statement (i) statement (ii), respectively.


\syf{Theorem~\ref{Th1} provides an insight, such that when observational equivalence holds, the variables that are directly related in the true Markov network must be functions of directly related variables in the estimated Markov network. Please note that $\tau$ and $L$ can be easily generalized to any value by making some modifications on the assumption $A2$. The $\tau$ timestamps that are conditioned on provide sufficient changes, and $L$ sequence length provides a sparse structure, which will be discussed in Theorem~\ref{Th2}. A detailed discussion is given in Appendix~\ref{app: Extension to Multiple Lags and Sequence Lengths}. 
}

\subsection{Identification of Latent Variables}

\syf{
In this subsection, we demonstrate that given further sparsity assumptions, the latent Markov network over $\rvz_t$ and the latent variables are also identifiable. For a better explanation of the identifiability of latent variables, we first introduce the definition of the \textbf{Intimate Neighbor Set} \citep{zhang2024causal}.
}
\begin{definition}[Intimate Neighbor Set \citep{zhang2024causal}]
\label{def: Intimate}
    Consider a Markov network $\mathcal{M}_Z$ over variables set $Z$, and the intimate neighbor set of variable $z_{t,i}$ is
    \small{
        \begin{equation}\nonumber
        \begin{split}
            \Psi_{\mathcal{M}_{\rvz_t}}\!(z_{t,i})\!\triangleq\!\{z_{t,j} | z_{t,j}\text{ is adjacent to } z_{t,i}, \text{ and it is also adjacent to all other neighbors of }z_{t,i}, z_{t,j}\!\in\!\rvz_t\!\backslash\!\{z_{t,i}\}\!\}.
        \end{split}
        \end{equation}
    }
\end{definition}

In other words, $\Psi$ contains all neighbors, iff all neighbors of $z_ {t,i}$ are also in the same \textbf{unique} clique as $z_{t,i}$, else \textcolor{black}{$\Psi=\emptyset$}. We further provide an example in the Appendix~\ref{app: Intimate Neighbor Set}.




\syf{
Based on the conclusions of Theorem~\ref{Th1}, we consider $3$ consecutive timestamps, i.e., $\rvz_t=\{\rvz_{t-1},\rvz_{t}\}$, where the identifiability of $\rvz_t$ can be assured with the help of the contextual information  $\rvz_{t-1}$ and $\rvz_{t+1}$.
}
\begin{theorem}\label{Th2}
\textbf{(Component-wise Identification of Latent Variables with instantaneous dependencies.)} Suppose that the observations are generated instantaneous latent process, and $\mathcal{M}_{\rvz_t}$ is the Markov network over \textcolor{black}{$\rvz_t=\{\rvz_{t-1},\rvz_{t},\rvz_{t+1}\}\in\mathbb{R}^{2n}$}. Except for the smooth, positive density and sufficient variability assumptions, we further make the following assumption:
\begin{itemize}[leftmargin=*]
    \item \underline{A3 (Sparse Latent Process):} 1) For any $z_{it}\in \rvz_t$, the intimate neighbor set of $z_{it}$ is an empty set. 2) For any $z_{it} \in \rvz_t$, the intimate neighbor set of $z_{it}$ is not an empty set, but there exists a direct child $z_{jt}$, whose intimate neighbor set is an empty set. 
\end{itemize}

\syf{When the observational equivalence is achieved} with the minimal number of edges of the estimated Markov network of $\mathcal{M}_{\hat{\rvz}_t}$, then we have the following two statements:

(i) The estimated Markov network $\mathcal{M}_{\hat{\rvz}_t}$ is \textcolor{black}{isomorphic} \footnote{\textcolor{black}{Please refer to the definition of isomorphism of Markov networks in Appendix \ref{app: Isomorphism}.}} to the ground-truth Markov network $\mathcal{M}_{\rvz_t}$.

(ii) There exists a permutation $\pi$ of the estimated latent variables, such that $z_{it}$ and $\hat{z}_{\pi(i)t}$ is one-to-one corresponding, i.e., $z_{it}$ is component-wise identifiable.
\end{theorem}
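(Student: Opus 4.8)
The plan is to build on Theorem~\ref{Th1}, which already tells us that non-adjacency in $\mathcal{M}_{\hat{\rvz}_t}$ forces each ground-truth $z_{t,i}$ to depend on at most one of the two estimated coordinates, and that adjacent pairs in $\mathcal{M}_{\rvz_t}$ cannot map onto such a non-adjacent pair. The first step is to translate these derivative conditions into a graph-theoretic statement: the diffeomorphism $h = \hat{\rvg}^{-1}\circ\rvg$ relating $\rvz_t$ and $\hat{\rvz}_t$ induces, via its Jacobian support, a correspondence between cliques of $\mathcal{M}_{\rvz_t}$ and cliques of $\mathcal{M}_{\hat{\rvz}_t}$. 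Concretely, I would show that whenever $\hat z_{t,k}$ and $\hat z_{t,l}$ are non-adjacent in $\mathcal{M}_{\hat\rvz_t}$, the sets $S_k = \{i : \partial z_{t,i}/\partial \hat z_{t,k}\neq 0\}$ and $S_l$ are disjoint, and moreover no edge of $\mathcal{M}_{\rvz_t}$ crosses between $S_k$ and $S_l$. Hence the partition of ground-truth indices induced by ``which estimated coordinates influence them'' refines the connected-component / clique structure of $\mathcal{M}_{\rvz_t}$.

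Next I would bring in the minimality hypothesis. Since $\hat{\rvg}$ is chosen among all observationally-equivalent models to minimize $|\mathcal{M}_{\hat\rvz_t}|$, and since the true model is itself a feasible choice (so $|\mathcal{M}_{\hat\rvz_t}| \le |\mathcal{M}_{\rvz_t}|$), the estimated network cannot have fewer edges than necessary to ``host'' the image of the true conditional-independence structure under $h$. I would argue that any edge of $\mathcal{M}_{\rvz_t}$ must be ``witnessed'' by a corresponding edge (or a clique) in $\mathcal{M}_{\hat\rvz_t}$: if $z_{t,i}\sim z_{t,j}$ but the estimated coordinates carrying $z_{t,i}$ and $z_{t,j}$ were all pairwise non-adjacent, Theorem~\ref{Th1}(ii) would be contradicted. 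Combining ``$\mathcal{M}_{\hat\rvz_t}$ has at least as many edges as $\mathcal{M}_{\rvz_t}$'' (from minimality applied in the reverse direction, using that $h^{-1}$ plays the symmetric role) with ``edges map to edges'' gives a bijection between edge sets, which upgrades to an isomorphism $\mathcal{M}_{\hat\rvz_t}\cong\mathcal{M}_{\rvz_t}$; this is statement~(i).

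For statement~(ii) I would use Assumption~A3 to rule out ``diagonal'' mixing inside a clique. The obstruction Theorem~\ref{Th1} leaves open is that two ground-truth variables in a common clique could both be genuine functions of two estimated variables in a common clique (a local invertible reparametrization of that clique). The intimate-neighbor condition is precisely designed to forbid this: if $z_{t,i}$ has empty intimate neighbor set, then it has a neighbor $z_{t,j}$ that is \emph{not} adjacent to some other neighbor $z_{t,i'}$ of $z_{t,i}$; I would feed the pair $(z_{t,j},z_{t,i'})$ — or rather the estimated coordinates they correspond to — back into Theorem~\ref{Th1}(i) to force the Jacobian of $h$ to be (a permutation of) diagonal at those coordinates, so $z_{t,i}$ depends on exactly one $\hat z_{t,\pi(i)}$. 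Part~2 of A3 handles the remaining case by descending to a direct child with empty intimate neighbor set and propagating the conclusion back up along the edge. Iterating over all $i$ and using that $h$ is a bijection yields the permutation $\pi$ and component-wise identifiability.

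The main obstacle I anticipate is the edge-counting/minimality argument in step two: one has to be careful that ``$\hat{\rvg}$ minimizes the number of edges'' genuinely forces a tight match rather than merely an inequality in one direction, because a priori the Jacobian support of $h$ could create spurious adjacencies in $\mathcal{M}_{\hat\rvz_t}$ (making it denser) that are nonetheless unavoidable for \emph{some} observationally-equivalent model but not the minimal one. Pinning down that the minimal network is exactly the pushforward of $\mathcal{M}_{\rvz_t}$ — and that the true model realizes this minimum — requires a clean lemma relating Jacobian-support composition to Markov-network edges, and this is where I would expect most of the technical work (likely the content of the cross-derivative computation together with a combinatorial argument about which fill-in edges are forced). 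The reduction in step three is comparatively mechanical once the isomorphism and the A3 case analysis are set up correctly.
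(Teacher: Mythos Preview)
Your outline tracks the paper's two-step structure (Markov-network isomorphism, then component-wise identification via A3), and the use of Theorem~\ref{Th1} is right. But the specific device that makes Step~1 go through cleanly is missing, and it is exactly what resolves the ``main obstacle'' you flagged.

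The paper does not work with support sets $S_k$ or clique correspondences. Instead it first invokes a permutation lemma: because the Jacobian of $h$ is invertible, the Leibniz determinant formula forces the existence of a permutation $\pi$ with $\partial z_{t,i}/\partial \hat z_{t,\pi(i)}\neq 0$ for \emph{every} $i$. This $\pi$ is fixed before any edge-counting. Now suppose $(i,j)\in\mathcal{E}(\mathcal{M}_{\rvz_t})$ but $(\pi(i),\pi(j))\notin\mathcal{E}(\mathcal{M}_{\hat\rvz_t})$. Applying Theorem~\ref{Th1}(ii) to the non-adjacent pair $\hat z_{t,\pi(i)},\hat z_{t,\pi(j)}$ gives $\frac{\partial z_{t,i}}{\partial \hat z_{t,\pi(i)}}\cdot\frac{\partial z_{t,j}}{\partial \hat z_{t,\pi(j)}}=0$, contradicting the choice of $\pi$. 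Hence $\mathcal{E}(\mathcal{M}_{\rvz_t})\subseteq\mathcal{E}(\mathcal{M}_{\hat\rvz_t^{\pi}})$, and minimality (the true process is a feasible model) forces equality and thus isomorphism \emph{via the already-fixed} $\pi$. There is no need to argue a separate ``$\ge$'' direction via $h^{-1}$, nor to worry about which estimated coordinates ``carry'' a given $z_{t,i}$: the permutation supplies the correspondence up front. Your approach would eventually need this lemma anyway to extract a vertex bijection from an edge-count equality.

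For statement~(ii), Case~1 of A3 is as you describe: with the isomorphism in hand, the empty intimate-neighbor condition produces a $z_{t',k}$ adjacent to $z_{t,i}$ but not $z_{t,j}$; the same non-adjacency holds in the estimated network (by isomorphism), and Theorem~\ref{Th1}(ii) applied to $(\hat z_{t',\pi(k)},\hat z_{t,\pi(j)})$ kills $\partial z_{t,i}/\partial\hat z_{t,\pi(j)}$. Your Case~2 sketch (``propagate back up along the edge'') is too optimistic, however. The paper's argument here is a separate computation: it writes out the factorized conditional log-density, differentiates with respect to $\hat z_{t,\pi(j)}$ and then with respect to a \emph{time-delayed} coordinate $\hat z_{t-1,k}$, uses that the child $z_{t,j}$ is already component-wise identified (so cross-terms vanish), and invokes sufficient variability once more to conclude $\partial z_{t,j}/\partial\hat z_{t,k}=0$. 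This is not a purely graph-theoretic propagation; it needs another pass through the derivative machinery of Theorem~\ref{Th1} with the temporal structure playing an explicit role.
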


\begin{wrapfigure}{r}{8.5cm}
    \centering
    \includegraphics[width=0.6\columnwidth]{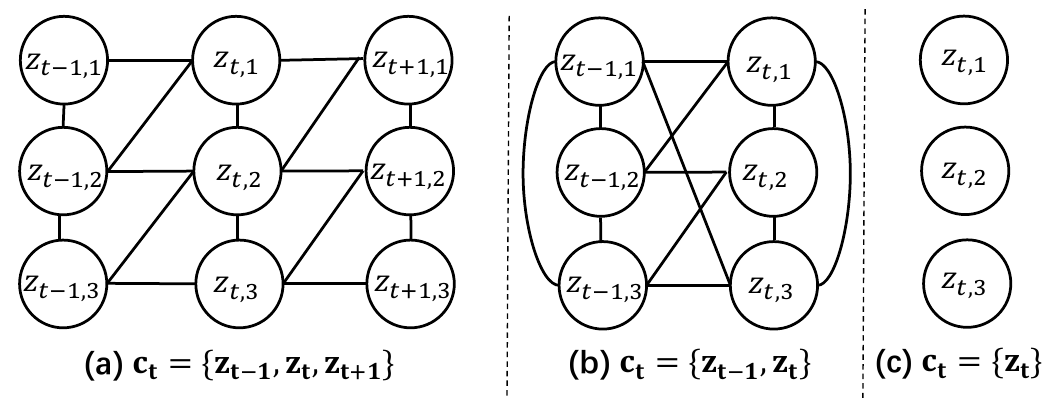}
    \caption{Examples of Markov networks with different types of contextual information that satisfy the sparse latent process assumption. Case (i) uses both historical and future information for identifiability. Case (ii) uses only historical information. Case (iii) that does not include instantaneous dependencies can identify latent variables without any contextual information, which degenerates to TDRL \citep{yao2022temporally}. Please note that in all cases, we show only the Markov net of $\rvz_t$ conditioned on previous timestamps while omitting the conditions for simplicity.
    }
    \label{fig:instan_example}
\end{wrapfigure}

\textbf{Proof Sketch.} 
\syf{Given the fact that there always exists a row permutation for each invertible matrix such that the permuted diagonal entries are non-zero \citep{zheng2024generalizing}, we utilize the conclusion of Theorem~\ref{Th1} to demonstrate that any edge present in the true Markov network will necessarily exist in the estimated one. Furthermore, when the number of estimated edges reaches a minimum, the identifiability of Markov network can be achieved. Finally, we once again leverage Theorem~\ref{Th1} to illustrate that the permutation which allows the Markov network to be identified can further lead to a component-wise level identifiability of latent variables. The detailed proof and discussion of assumptions are given in Appendix~\ref{app:the2} and~\ref{app:assumption_discussion}, respectively.}




\textbf{Discussion.} \textcolor{black}{Intuitively, Theorem~\ref{Th2} tells us that any latent variable $z_{t,i}$ is identifiable when its intimate set is empty, which benefits from the sparse causal influence.} 
Compared with the existing identifiability results for instantaneous dependencies like intervention or grouping observations, this assumption 
\syf{is more practical and reasonable} in real-world time series datasets. 


One more thing to note is that it is not necessary to utilize all contextual information for identifiability. In fact, as long as there exist consecutive timestamps where the intimate sets of all variables in $\rvz_t$ are empty, the identifiability can be established.



\begin{corollary}
\label{main:coroll}
\textbf{(General Case for Component-wise Identification.)} 
\syf{Suppose that the observations are generated by Equation (1) and (2), and there exists $\rvc_t=\{\rvz_{t-a},\cdots,\rvz_{t}\}$ with the corresponding Markov network $\mathcal{M}_{\rvc_t}$. Suppose assumptions A1 and A2 hold true, and for any $z_{t,i}\in \rvz_t$, the intimate neighbor set of $z_{t,i}$ is an empty set. 
When the observational equivalence is achieved with the minimal number of edges of the estimated Markov network of $\mathcal{M}_{\hat{\rvc}_t}$, there must exist a permutation $\pi$ of the estimated latent variables, such that $z_{t,i}$ and $\hat{z}_{t,\pi(i)}$ is one-to-one corresponding, i.e., $z_{t,i}$ is component-wise identifiable.}
\end{corollary}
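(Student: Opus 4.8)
The plan is to reduce Corollary~\ref{main:coroll} to the machinery already developed for Theorem~\ref{Th2}, since the corollary is exactly the special case where part~1) of Assumption A3 holds (every intimate neighbor set is empty) and the contextual window is $\rvc_t=\{\rvz_{t-a},\dots,\rvz_t\}$ rather than the symmetric $\{\rvz_{t-1},\rvz_t,\rvz_{t+1}\}$. First I would observe that Theorem~\ref{Th1} is stated for a generic collection of consecutive timestamps with the conditioning set providing sufficient variability; under assumptions A1 and A2 applied to $\rvc_t$ (with the Markov network $\mathcal{M}_{\rvc_t}$), the same bijective transformation $h=\hat{g}^{-1}\circ g$ between $\rvz$ and $\hat\rvz$ exists, and the cross-derivative/linear-independence argument yields: for any two entries $\hat z_{t,k},\hat z_{t,l}$ of $\hat\rvz_t$ that are non-adjacent in $\mathcal{M}_{\hat\rvc_t}$, each ground-truth $z_{t,i}$ depends on at most one of them, and adjacent ground-truth pairs cannot be split as functions of $\hat z_{t,k}$ and $\hat z_{t,l}$ separately. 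So the structural constraint of Theorem~\ref{Th1} transfers verbatim to the window $\rvc_t$.

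Next I would run the ``minimal edges'' half of the Theorem~\ref{Th2} proof. Using the fact that every invertible matrix admits a row permutation making the diagonal nonzero, we can assume (after relabelling $\hat\rvz$ by that permutation $\pi$) that $\partial z_{t,i}/\partial\hat z_{t,i}\neq 0$ for all $i$. Then statement~(ii) of Theorem~\ref{Th1}, applied contrapositively, forces: whenever $z_{t,i}$ and $z_{t,j}$ are adjacent in $\mathcal{M}_{\rvc_t}$, the corresponding $\hat z_{t,i},\hat z_{t,j}$ must be adjacent in $\mathcal{M}_{\hat\rvc_t}$ — otherwise $z_{t,i}$ would be a function of $\hat z_{t,i}$ and $z_{t,j}$ a function of $\hat z_{t,j}$ with $i\neq j$, a contradiction. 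Hence $|\mathcal{E}(\mathcal{M}_{\hat\rvc_t})|\geq|\mathcal{E}(\mathcal{M}_{\rvc_t})|$, and equality (the estimation picks the minimal-edge model consistent with observational equivalence, and $\mathcal{M}_{\rvc_t}$ itself is attainable) forces the two networks to be isomorphic via $\pi$ and forces the adjacency pattern to match exactly.

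Finally I would upgrade isomorphism to component-wise identifiability. With the edge sets in bijection, for each $i$ the only ground-truth variable that can be a function of $\hat z_{t,i}$ together with $\hat z_{t,i}$'s non-neighbors is $z_{t,i}$ itself: here the empty-intimate-neighbor-set hypothesis is exactly what is needed, because it guarantees that for each $i$ there is a pair $\hat z_{t,k},\hat z_{t,l}$ of non-adjacent estimated variables such that $z_{t,i}$ is ``pinned'' — it cannot depend on two distinct $\hat z$'s whose removal from a clique is permitted, so the Jacobian of $h$ restricted to the relevant block is (after permutation) diagonal, giving $z_{t,i}=T_i(\hat z_{t,\pi(i)})$ for a component-wise invertible $T_i$. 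I would then remark that this is a strict generalization: taking $a=1$ and adding a symmetric future timestamp recovers Theorem~\ref{Th2} case~1), and taking $a$ large with no instantaneous edges recovers the classical temporal result (Corollary statement already flags the TDRL specialization). The main obstacle I anticipate is the bookkeeping in the isomorphism step — verifying that the minimal-edge estimated network cannot ``reroute'' edges (e.g., drop a true edge while adding a spurious one elsewhere to keep the count) — which requires carefully chaining statement~(i) and~(ii) of Theorem~\ref{Th1} over all non-adjacent pairs simultaneously; the sufficient-variability count $2n+|\mathcal{M}_{\rvc_t}|$ must be checked to still dominate the number of constraints when the window is one-sided of length $a+1$, but this is a routine adaptation of Assumption A2 rather than a new idea.
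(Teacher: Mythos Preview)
Your proposal is correct and follows essentially the same approach as the paper: the paper's proof of the corollary is a two-sentence remark that it is ``similar to that of Theorem~\ref{Th2}'' with the only change being the choice of subsequence window, and you have spelled out precisely this reduction (Theorem~\ref{Th1} transfers to the window $\rvc_t$, the minimal-edge argument gives isomorphism, and the empty-intimate-set hypothesis yields component-wise identifiability). Your more detailed treatment of the bookkeeping is sound and exceeds what the paper provides, but the route is the same.
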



\textbf{Discussion.} 
\syf{
In this part, we give a further discussion on how contextual information can help to identify current latent variables. Let us take Figure~\ref{fig:instan_example} (i) as an example, when we consider the current latent variables $z_{t,1}, z_{t,2}$ and $z_{t,3}$ only, it is easy to see that neither $z_{t,1}$ nor $z_{t,3}$ are identifiable since both of their intimate set are $\{z_{t,2}\}$. However, when the historical information $\rvz_{t-1}$ is taken into consideration, where $z_{t-1,1}$ affects only $z_{t,1}$ but not $z_{t,2}$, then $z_{t,1}$ will immediately be identified since the intimate set of $z_{t,1}$ will be empty thanks to the unique influence from $z_{t-1,1}$. Similarly, with a future variable $z_{t+1,3}$ which is influenced by $z_{t,3}$ but not $z_{t,1}$, then $z_{t,3}$ can be identified as well. Besides, under some circumstances, only part of the contextual information is needed. Therefore, the length of the required time window $L$ can be reduced as well, which leads to a more relaxed requirement of assumption A2, according to Theorem~\ref{Th1}. As shown in Figure~\ref{fig:instan_example} (ii), with only historical information, the identifiability of $\rvz_t$ can already be achieved. Furthermore, some existing identifiability results \citep{yao2021learning,yao2022temporally} can be considered as a special case of our theoretical results when the instantaneous effects vanish. For example, Figure~\ref{fig:instan_example} (iii) provides the example of TDRL \citep{yao2022temporally}, where all intimate sets are empty naturally, and the identifiability is assured. }


Finally, we explore under what circumstances the causal graph of latent variables is also identifiable.

\begin{theorem}
\textbf{(Identification of Latent Causal Process.)}
\label{th: identification_of_process}
Suppose that the observations are generated by Equation~(\ref{equ:g1})-(\ref{equ:g2}), and that $\mathcal{M}_{\rvc_t}$ is the Markov network over $\rvc_t=\{\rvz_{t-1},\rvz_{t}\}\in\mathbb{R}^{2n}$. Suppose that all assumptions for Theorem~\ref{Th2} hold. We further make the following assumption: for any pair of adjacent latent variables $z_{t,i}, z_{t,j}$ at time step $t$, their time-delayed parents are not identical, i.e., $\text{Pa}_d(z_{t,i})\not=\text{Pa}_d(z_{t,j})$.
Then, the causal graph of the latent causal process is identifiable.
\end{theorem}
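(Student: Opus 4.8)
The plan is to build on the component-wise identifiability already secured by Theorem~\ref{Th2}. By that theorem, under A1--A3, when observational equivalence holds with the minimal number of edges in $\mathcal{M}_{\hat{\rvc}_t}$, the estimated Markov network $\mathcal{M}_{\hat{\rvc}_t}$ is isomorphic to the ground-truth $\mathcal{M}_{\rvc_t}$, and there is a permutation $\pi$ making $z_{t,i}$ and $\hat z_{t,\pi(i)}$ correspond up to a component-wise invertible transformation. So after relabeling we may assume $\hat z_{t,i} = h_i(z_{t,i})$ for strictly monotone $h_i$, and the recovered Markov structure over $\rvc_t$ matches the true one. The only remaining gap is the orientation: a Markov network (undirected) determines the causal DAG only up to its Markov equivalence class, so the time-delayed edges $\mathrm{Pa}_d(\cdot)\to z_{t,i}$ and the instantaneous edges within $\rvz_t$ must still be oriented correctly.

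The key observation I would exploit is that time ordering already orients every delayed edge: any edge between $z_{t-1,\cdot}$ and $z_{t,\cdot}$ in the recovered Markov network must point forward in time, so $\mathrm{Pa}_d(z_{t,i})$ is recovered exactly (this uses only that the generating process respects the temporal order and that component-wise identifiability preserves the time index of each latent, which follows because $\rvz_{t-1}$ and $\rvz_t$ play structurally distinct roles in $\mathcal{M}_{\rvc_t}$). It remains to orient each instantaneous edge $z_{t,i} \sim z_{t,j}$. Here I would use the new assumption $\mathrm{Pa}_d(z_{t,i}) \neq \mathrm{Pa}_d(z_{t,j})$ together with faithfulness/the Markov property: pick a delayed parent, say $z_{t-1,m}$, that is a parent of exactly one endpoint, say $z_{t,i}$ but not $z_{t,j}$. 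Then in the true DAG the path $z_{t-1,m} \to z_{t,i} \to z_{t,j}$ is a chain (not a collider at $z_{t,i}$, because $z_{t-1,m}$ is a parent of $z_{t,i}$), whereas if the instantaneous edge were oriented $z_{t,j}\to z_{t,i}$ we would have a collider $z_{t-1,m}\to z_{t,i}\leftarrow z_{t,j}$. These two configurations are distinguishable from the Markov network restricted to $\{z_{t-1,m}, z_{t,i}, z_{t,j}\}$ plus its equivalence class, because a collider at $z_{t,i}$ would force a v-structure, i.e. $z_{t-1,m}$ and $z_{t,j}$ nonadjacent while both adjacent to $z_{t,i}$ — a pattern that is invariant across the Markov equivalence class. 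Running this argument for every instantaneous edge orients the whole instantaneous subgraph, and combined with the forward-oriented delayed edges this pins down the entire causal DAG up to the permutation $\pi$.

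Concretely the steps are: (1) invoke Theorem~\ref{Th2} to get the isomorphism of Markov networks and component-wise identifiability, reducing the problem to orienting the recovered undirected graph; (2) orient all cross-time (delayed) edges by the temporal order, recovering $\mathrm{Pa}_d(z_{t,i})$ for every $i$; (3) for each instantaneous edge $z_{t,i}\sim z_{t,j}$, use the assumption $\mathrm{Pa}_d(z_{t,i})\neq\mathrm{Pa}_d(z_{t,j})$ to find a discriminating delayed parent and read off whether adding the edge creates a v-structure with that parent, thereby fixing the orientation; (4) check consistency — that the resulting orientation is acyclic (guaranteed since instantaneous edges among $\rvz_t$ inherit the true acyclic order and delayed edges only go forward) — and conclude the causal graph is identified. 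I would also remark that the estimated process $\hat f_i, \hat p_\epsilon$ then matches $f_i, p_\epsilon$ up to the reparametrization by $h_i$, so "the causal graph is identifiable" in the sense of edge set and orientation.

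The main obstacle I anticipate is step (3): one must argue carefully that the collider/chain distinction is actually recoverable from what Theorem~\ref{Th2} hands us. Theorem~\ref{Th2} gives an undirected Markov network, not a PDAG with identified v-structures, so I cannot just "read off colliders." I would need to go back to the conditional-independence structure: a v-structure $z_{t-1,m}\to z_{t,i}\leftarrow z_{t,j}$ corresponds exactly to $z_{t-1,m}$ and $z_{t,j}$ being marginally (or suitably conditionally) dependent given $z_{t,i}$ but the edge $z_{t-1,m}\sim z_{t,j}$ being absent in the moralized graph — whereas a chain moralizes without adding that edge only if there is no other common child. This is where the assumption that $z_{t,i}$ and $z_{t,j}$ have distinct delayed parents does real work: it ensures the discriminating parent $z_{t-1,m}$ exists and is not "married into" $z_{t,j}$ by moralization through a shared child, so the presence or absence of the $z_{t-1,m}\sim z_{t,j}$ edge in the (recovered, hence true) Markov network faithfully signals the orientation. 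Making this moralization bookkeeping precise, and confirming it is robust to the component-wise transformation $h_i$ (which it is, since monotone reparametrization preserves all conditional independence relations), is the technical heart of the argument.
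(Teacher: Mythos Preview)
Your high-level plan matches the paper's: invoke Theorem~\ref{Th2} for component-wise identifiability, orient the cross-time edges by temporal order, and then orient each instantaneous edge $z_{t,i}\sim z_{t,j}$ by finding a delayed parent of exactly one endpoint and checking whether the resulting triple is a chain or a v-structure. That is exactly the skeleton of the paper's argument.

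There is, however, a real gap in how you propose to execute steps (2) and (3). You repeatedly work from the \emph{Markov network} $\mathcal{M}_{\rvc_t}$ that Theorem~\ref{Th2} hands you, but that object is the \emph{moralized} graph, not the DAG skeleton. In particular, a cross-time edge $z_{t-1,m}\sim z_{t,j}$ in $\mathcal{M}_{\rvc_t}$ need not mean $z_{t-1,m}\in\mathrm{Pa}_d(z_{t,j})$: it can be a moral edge induced by a common instantaneous child $z_{t,k}$ with $z_{t-1,m}\to z_{t,k}\leftarrow z_{t,j}$. So your claim in step~(2) that ``$\mathrm{Pa}_d(z_{t,i})$ is recovered exactly'' from the cross-time edges of the recovered Markov network is not justified, and without it you cannot pick the ``discriminating delayed parent'' in step~(3). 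Relatedly, in your final paragraph the moralization bookkeeping is inverted: for a v-structure $z_{t-1,m}\to z_{t,i}\leftarrow z_{t,j}$ with $z_{t-1,m}$ and $z_{t,j}$ nonadjacent in the DAG, the moral edge $z_{t-1,m}\sim z_{t,j}$ is \emph{present} in the moralized graph, not absent. So the presence/absence test you propose does not separate the chain from the collider case.

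The fix---which is what the paper does---is to stop working through the moralized graph and instead use component-wise identifiability directly: since each $\hat z_{t,\pi(i)}$ is an invertible scalar function of $z_{t,i}$, \emph{all} conditional-independence relations among the $\hat z$'s coincide with those among the $z$'s. This turns the problem into ordinary constraint-based causal discovery on observed variables: recover the DAG \emph{skeleton} (not the moralized graph) via d-separation, orient every cross-time skeleton edge forward in time, and then for each instantaneous skeleton edge $z_{t,i}-z_{t,j}$ pick $z_{t-1,m}$ in the symmetric difference $\mathrm{Pa}_d(z_{t,i})\triangle\mathrm{Pa}_d(z_{t,j})$ and test whether the middle node lies in the d-separating set of the other two (chain) or not (collider). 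Your instinct ``I would need to go back to the conditional-independence structure'' is exactly right; the argument just needs to be run on the skeleton rather than on $\mathcal{M}_{\rvc_t}$.
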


\textbf{Proof Sketch.} We first establish the identification of the skeleton of the causal graph using the results of Theorem~\ref{Th2}. The directions of time-delayed edges are fully determined since time can only move forward. Finally, we leverage the v-structure to identify the instantaneous edges. A detailed proof is provided in Appendix~\ref{app:identify_causal_struct}.



\begin{table}[t]
\centering
\caption{The summary of related work of causal representation learning.}
\label{tab: method_comparison}
\resizebox{\textwidth}{!}{%
\begin{tabular}{l|ccccc}
\hline
\textbf{}                             & No Intervention & No Grouping & Stationarity & Instantaneous Effect & Temporal Data \\ \hline
IDOL&                       $\checkmark$    &$\checkmark$ &$\checkmark$ &$\checkmark$ &$\checkmark$  \\
\cite{yao2022temporally}&   $\checkmark$    &$\checkmark$ &$\checkmark$ &$\times$ &$\checkmark$  \\
\cite{morioka2023causal}&   $\checkmark$    &$\times$ &$\checkmark$ &$\checkmark$ &$\checkmark$  \\
\cite{lippe2023causal}&     $\times$        &$\checkmark$ &$\checkmark$ &$\checkmark$ &$\checkmark$  \\
\cite{zhang2024causal}&     $\checkmark$    &$\checkmark$ &$\times$ &$\checkmark$ &$\times$      \\ \hline
\end{tabular}%
}
\end{table}

\subsection{Comparasion with Existing Methods}
The application of real-world scenarios presents numerous new opportunities and challenges. Compared to existing methods of temporally causal representation learning with instantaneous effects, our approach significantly relaxes the constraints on theoretical assumptions, as shown in Table~\ref{tab: method_comparison}. TDRL \citep{yao2022temporally} requires that there are no instantaneous effects during the transition process, which can be considered as a special case of our method. G-CaRL \citep{morioka2023causal} and iCITRIS \citep{lippe2023causal} require extra intervention or grouping information, which is not necessary in our method. \cite{zhang2024causal} focus on non-temporal scenarios and require extra domain labels. Instead, IDOL can handle the stationary temporal case. Besides, when the instantaneous effect is dense, which is beyond the capacity of \cite{zhang2024causal}, IDOL can still utilize the contextual information for better identifiability as long as the temporal transition is sparse.

\section{Identifiable Instantaneous Latent Dynamic Model}
\begin{wrapfigure}{r}{6cm}
    \centering
    \vspace{-5mm}
    \includegraphics[width=0.43\columnwidth]{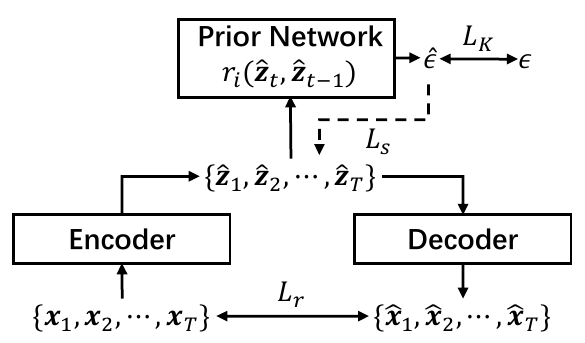}
    \caption{The framework of the \textbf{IDOL} model. The encoder and decoder are used for the extraction of latent variables and observation reconstruction. The prior network is used for prior distribution estimation, and  $L_s$ denotes the gradient-based sparsity penalty. The solid and dashed arrows denote the forward and backward propagation.}
    \vspace{-4mm}
    \label{fig:instan_model}
\end{wrapfigure}

\subsection{Temporally Variational Inference Architecture}
Based on the data generation process, we derive the evidence lower bound (ELBO) as shown in Equation (\ref{equ:elbo_full}) for general time series modeling. Please refer to Appendix \ref{app:elbo} for the derivation details of the EBLO for time series forecasting.
\begin{equation}
\color{black}
\small
\begin{split}
\label{equ:elbo_full}
ELBO=&\underbrace{\mathbb{E}_{q(\rvz_{1:T}|\rvx_{1:T})}\ln p(\rvx_{1:T|}|\rvz_{1:T})}_{\mathcal{L}_r} \\&- \underbrace{D_{KL}(q(\rvz_{1:T}|\rvx_{1:T})||p(\rvz_{1:T}))}_{\mathcal{L}_{K}},
\end{split}
\end{equation}
where $D_{KL}$ denotes the KL divergence. We use posterior $q(\rvz_{1:T}|\rvx_{1:T})$ to approximate the prior $p(\rvz_{1:T})$. Besides, $p(\rvx_{1:T}|\rvz_{1:T})$ is used to reconstruct the observations. The encoder and the decoder can be formalized as follows:
\begin{equation}
\small
    \rvz_{1:T}=\phi(\rvx_{1:T}), \quad \hat{\rvx}=\psi(\rvz_{1:T}),
\end{equation} 
where $\phi$ and $\psi$ denote the neural architecture based on convolution neural networks. Please refer to Table \ref{tab:imple} for the implementation details of $\phi$ and $\psi$.

Based on theoretical results, we develop the \textbf{IDOL} model as shown in Figure \ref{fig:instan_model}, which is built on the variational inference to model the distribution of observations. To estimate the prior distribution and enforce the independent noise assumption, we devise the prior networks. Moreover, we employ a gradient-based sparsity penalty to promote the sparse causal influence. 

\subsection{Prior Estimation} \label{sec:prior}

To enforce the independence of noise in Equation~(\ref{equ:g2}), we minimize the Kullback-Leibler (KL) divergence between the posterior distribution $\prod_i p(z_{t,i}|\text{Pa}_d(z_{t,i}),\text{Pa}_e(z_{t,i}),\rvx_t)$ and a prior $\prod_i p(z_{t,i}|\text{Pa}_d(z_{t,i}),\text{Pa}_e(z_{t,i}))$. This minimization implies that latent variables are mutually independent and conditioned on their historical and instantaneous parents. However, since the prior distribution may have any arbitrary density function, it is difficult to estimate such a prior. To solve this challenge, we follow \cite{chen2024caring}, \cite{yao2022temporally} and propose the prior networks. Specifically, we first let ${r_i}$ be an inversed function of $f$ from Equation~(\ref{equ:g2}) that are implemented by normalizing flow. \cite{papamakarios2021normalizing} take the estimated latent variables as input to estimate the noise term $\hat{\epsilon}_i$, i.e. $\hat{\epsilon}_{t,i}=r_i(\hat{\rvz}_t,\hat{\rvz}_{t-1})$\footnote{We use the superscript symbol to denote estimated latent variables}. And each $r_i$ is implemented by Multi-layer Perceptron networks (MLPs). Sequentially, we devise a transformation $\kappa := \{\hat{\rvz}_{t-1},\hat{\rvz}_t\}\rightarrow\{\hat{\rvz}_{t-1},\hat{\bm{\epsilon}}_{t}\}$, whose Jacobian can be formalized as ${\mathbf{J}_{\kappa}=
    \begin{pmatrix}
        \mathbb{I}&0\\
        \mathbf{J}_d & \mathbf{J}_e
    \end{pmatrix}}$, where $\mathbf{J}_d = \left(\frac{\partial r_i}{\partial \hat{z}_{t-1,i}}\right)$ and $\mathbf{J}_e = \left(\frac{\partial r_i}{\partial \hat{z}_{t,i}}\right)$. Hence we have Equation~(\ref{equ:change_of_var}) via the change of variables formula. 
\begin{equation}
\small
\label{equ:change_of_var}
    \log p(\hat{\rvz}_t, \hat{\rvz}_{t-1})=\log p(\hat{\rvz}_{t-1},\epsilon_t) + \log |\frac{\partial r_i}{\partial z_{t,i}}|.
\end{equation}
According to the generation process, the noise $\epsilon_{t,i}$ is independent with $\rvz_{t-1}$, so we can enforce the independence of the estimated noise term $\hat{\epsilon}_{t, i}$. And Equation~(\ref{equ:change_of_var}) can be further rewritten as
\begin{equation}
\small
    \log p(\hat{\rvz}_{1:T})=\log p(\hat{z}_1)+\sum_{\tau=2}^T\left(\sum_{i=1}^n \log p(\hat{\epsilon}_{\tau,i}) + \sum_{i=1}^n \log |\frac{\partial r_i}{\partial z_{t,i}}|  \right),
\end{equation}
where $p(\hat{\epsilon}_{\tau,i})$ is assumed to follow a Gaussian distribution. Please refer to Appendix \ref{app:prior_drivation} for more details of the prior derivation.

\subsection{Gradient-based Sparsity Regularization}
Ideally, the MLPs-based architecture $r_i$ can capture the causal structure of latent variables by restricting the independence of noise $\hat{\epsilon}_{t, i}$. However, without any further constraint, $r_i$ may bring redundant causal edges from ${\hat{\rvz}_{t-1}, \hat{\rvz}_{t, [n]\backslash i}}$ to $\hat{z}_{t,i}$, leading to the incorrect estimation of prior distribution and further the suboptimization of ELBO. 
As mentioned in Subsection \ref{sec:prior}, $\mathbf{J}_d$ and $\mathbf{J}_e$ intuitively denote the time-delayed and instantaneous causal structures of latent variables, since they describe how the ${\hat{\rvz}_{t-1}, \hat{\rvz}_{t, [n]\backslash i}}$ contribute to $\hat{z}_{t, i}$, which motivate us to remove these redundant causal edges with a sparsity regularization term $\mathcal{L}_S$ by simply applying the $\mathcal{L}_1$ on $\mathbf{J}_d$ and $\mathbf{J}_e$. Formally, we have
\begin{equation}
\color{black}
\small
    \mathcal{L}_S = ||\mathbf{J}_d||_1 + ||\mathbf{J}_e||_1.,
\end{equation}
where $||*||_1$ denotes the $\mathcal{L}_1$ Norm of a matrix. By employing the gradient-based sparsity penalty on the estimated latent causal processes, we can indirectly restrict the sparsity of Markov networks to satisfy the sparse latent process. Finally, the total loss of the \textbf{IDOL} model can be formalized as:
\begin{equation}
\small
\label{equ:sparsity_loss}
    \mathcal{L}_{total}=-\mathcal{L}_r-\alpha\mathcal{L}_{K} + \beta \mathcal{L}_S,
\end{equation}
where $\alpha, \beta$ denote the hyper-parameters.


\section{Experiments}

\subsection{Experiments on Simulation Data}
\subsubsection{Experimental Setup}
\textbf{Data Generation. }We generate the simulated time series data with the fixed latent causal process as introduced in Equations (1)-(2) and Figure \ref{fig:motivation} (c). To better evaluate the proposed theoretical results, we provide six synthetic datasets from A to F with 3, 5, 8, 8, 8, 16 latent variables, respectively.  Dataset D contains no instantaneous effects, which degenerate to the TDRL \citep{yao2022temporally} setting as introduced in Figure~\ref{fig:instan_example} (iii). Dataset E has a dense latent causal process that violates the assumption A3(Latent Process Sparsity). All datasets except E satisfy the assumption A3. Please refer to Appendix \ref{app:sim_data_gen} for the details of data generation and evaluation metrics. 

%

\textbf{Baselines.} To evaluate the effectiveness of our method, we consider the following compared methods. First, we consider the standard $\beta$-VAE \citep{higgins2017beta} and FactorVAE \citep{kim2018disentangling}, which ignores historical information and auxiliary variables. Moreover, we consider TDRL \citep{yao2022temporally}, iVAE \citep{khemakhem2020variational}, TCL \citep{hyvarinen2016unsupervised}, PCL \citep{hyvarinen2017nonlinear}, and SlowVAE \citep{klindt2020towards}, which use temporal information but do not assume instantaneous dependency on latent processes. Finally, we consider the iCITRIS \citep{lippe2023causal} and G-CaRL \citep{morioka2023causal}, which are devised for latent causal processes with instantaneous dependencies but require interventions or grouping of observations. As for G-CaRL, we follow the implementation description in the original paper and assign random grouping for observations. As for iCITRIS, we assign a random intervention variable for latent variables. We repeat each method over three random seeds and report the average results.

\subsubsection{Results and Discussion}
\textbf{Quantitative Results:} Experiment results of the simulation datasets are shown in Table \ref{tab:simulation}. The proposed \textbf{IDOL} model achieves the highest MCC performance, reflecting that our method can identify the latent variables under the temporally latent process with instantaneous dependency. According to the experiment results, we can obtain the following conclusions: 1) our \textbf{IDOL} method also outperforms G-CaRL and iCITRIS, which leverage grouping and interventions for identifiability of latent variables with instantaneous dependencies, reflecting that our method does not require a stricter assumption for identifiability. 2) Compared with the existing methods for temporal data like TDRL, PCL, and TCL, the experiment results show that our method can identify the general stationary latent process with instantaneous dependencies. 3) According to the results of the dense dataset E, the IDOL cannot well identify the latent variables, validating the boundary of our theoretical results.  4) We further analyze the effectiveness of the sparsity regularization. Please refer to Appendix~\ref{app:ablation},\ref{app: syn abl sec} for more details.

\begin{table}[]
\renewcommand{\arraystretch}{0.7}
 \setlength{\tabcolsep}{4pt}
\centering
\caption{Experiments results of MCC on simulation data. The iCITRIS algorithm encountered an out-of-memory (OOM) issue when running on dataset F due to the high dimensionality of the latent variables, which caused the code to fail to execute. This is indicated by the symbol '-'.}

\resizebox{\textwidth}{!}{%
\begin{tabular}{@{}c|cccccccccc@{}}
\toprule
\textbf{Datasets}    & \textbf{IDOL}   & \textbf{TDRL} & \textbf{G-CaRL} & \textbf{iCITRIS} & \textbf{$\beta-$VAE} & \textbf{SlowVAE} & \textbf{iVAE} & \textbf{FactorVAE} & \textbf{PCL} & \textbf{TCL} \\ \midrule
\textbf{A} & \textbf{0.9645} & 0.9416        & 0.9059          & 0.8219           & 0.8485        & 0.8512           & 0.6283        & 0.8512             & 0.8659       & 0.8625       \\
\textbf{B} & \textbf{0.9142} & 0.8727        & 0.6248          & 0.4120           & 0.4113        & 0.2875           & 0.5545        & 0.2158             & 0.5288       & 0.3311       \\
\textbf{C} & \textbf{0.9801} & 0.9001        & 0.5850          & 0.4234           & 0.4093        & 0.3420           & 0.6736        & 0.2417             & 0.3981       & 0.2796       \\
\textbf{D} & 0.9766 & \textbf{0.9796 }       & 0.5455         & 0.3343          & 0.2181        & 0.2641          & 0.3469        & 0.2527            & 0.4806       & 0.2461       \\

\textbf{E} & \textbf{0.7869} & 0.7228        & 0.5835          & 0.4646           & 0.4260        & 0.3986           & 0.6071        & 0.2319             & 0.4659       & 0.2881       \\ 
\textbf{F} & \textbf{0.9747} & 0.5899        & 0.5225          & -          & 0.4321        & 0.5157          & 0.3176        & 0.2648             & 0.5908       & 0.6678       \\
\bottomrule
\end{tabular}%
}
\label{tab:simulation}
\end{table}
\begin{figure}[t]
    \centering
    \includegraphics[width=\columnwidth]{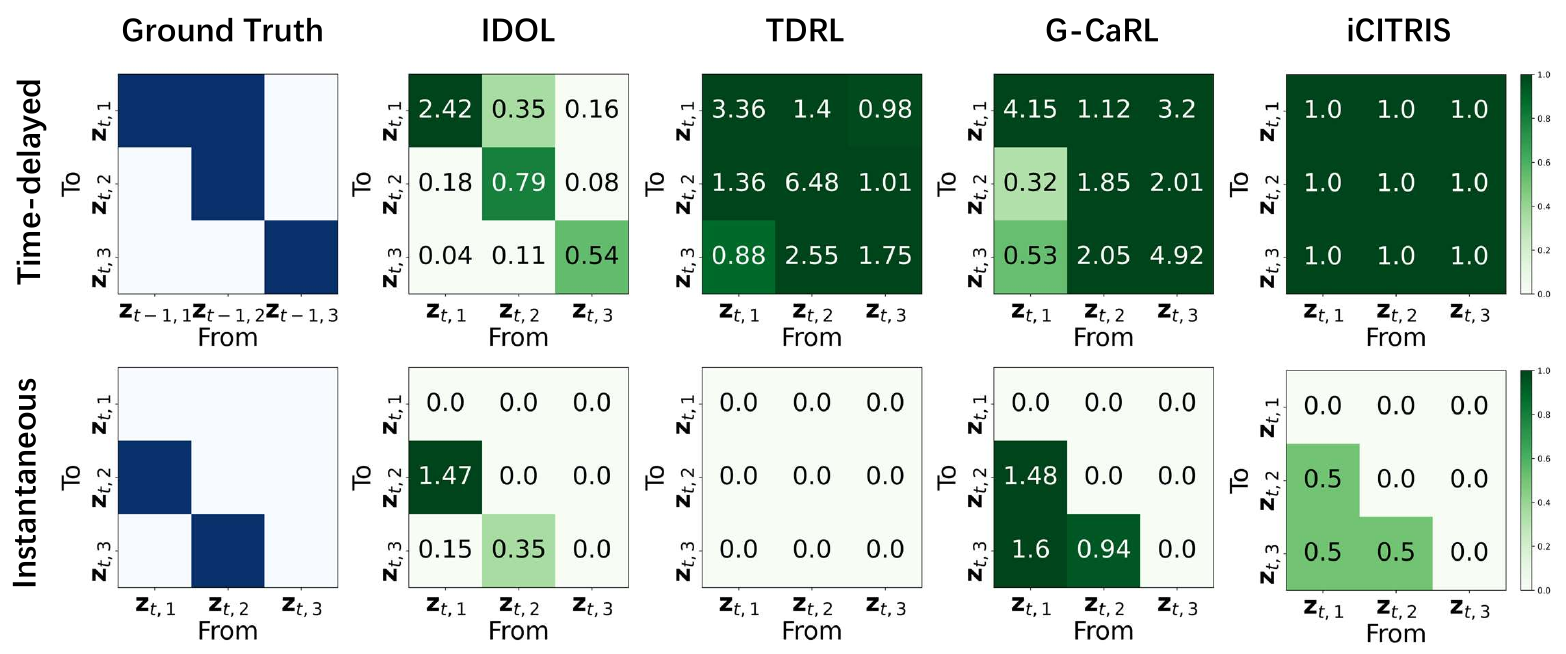}
    \vspace{-5mm}
    \caption{Visualization results of directed acyclic graphs of latent variables of different methods. The first and second rows denote time-delayed and instantaneous causal relationships of latent variables.}
    \label{fig:causal_graph}
\end{figure}

\textbf{Qualitative Results:} Figure \ref{fig:causal_graph} provides visualization results on dataset A of the latent causal process of IDOL, TDRL, G-CaRL, and iCITRIS. Note that the visualization results of our method are generated from $\mathbf{J}_d$ and $\mathbf{J}_e$. According to the experiment results, we can find that the proposed method can reconstruct the latent causal relationship well, which validates the theoretical results. \textcolor{black}{Please note that here, not only the Markov equivalence class but also the causal graph can be identified for dataset A, as shown in Figure~\ref{fig:causal_graph}. Please refer to Appendix~\ref{app:identify_causal_struct} for more details.} Moreover, since TDRL employs the conditional independent assumption, it does not reconstruct any instantaneous causal relationships. Besides, without any accurate grouping or intervention information, the G-CaRL and iCITRIS can not reconstruct the correct latent causal relationships. Please refer to Appendix~\ref{app: implementation_detail_for_synthetic} for more details.



\subsection{Experiments on Real-World Data}

\subsubsection{Experiment Setup}
\textbf{Datasets. }To evaluate the effectiveness of our \textbf{IDOL} method in real-world scenarios, we conduct experiments on two human motion datasets: Human3.6M \citep{6682899} and HumanEva-I \citep{DBLP:journals/ijcv/SigalBB10}, which record the location and orientation of local coordinate systems at each joint. We consider these datasets since the joints can be considered as latent variables and they naturally contain instantaneous dependencies. We choose several motions to conduct time series forecasting. For the Human3.6M datasets, we consider 4 motions: Gestures (Ge), Jog (J), CatchThrow (CT), and Walking (W). For HumanEva-I dataset, we consider 6 motions: Discussion (D), Greeting (Gr), Purchases (P), SittingDown (SD), Walking (W), and WalkTogether (WT). For each dataset, we select several motions and partition them into training, validation, and test sets. Please refer the Appendix \ref{app:dataset} for the dataset descriptions.

\textbf{Baselines.} To evaluate the effectiveness of the proposed IDOL, we consider the following state-of-the-art deep forecasting models for time series forecasting. 
First, we consider the conventional methods for time series forecasting including Autoformer \citep{wu2021autoformer}, TimesNet \citep{wu2022timesnet} and MICN \citep{wang2022micn}. Moreover, we consider several latest methods for time series analysis like CARD \citep{wang2023card}, FITS \citep{xu2024fits}, and iTransformer \citep{liu2024itransformer}. Finally, we consider the TDRL \citep{yao2022temporally}. We repeat each experiment over 3 random seeds and publish the average performance. Please refer to Appendix \ref{app:exp_ddd} for more experiment details.



\subsubsection{Results and Discussion}
\begin{table}[]
\caption{MSE and MAE results on the different motions.}
\vspace{2mm}
\resizebox{\textwidth}{!}{%
\begin{tabular}{@{}cc|c|cccccccccccccccc@{}}
\toprule
\multicolumn{2}{c|}{\multirow{2}{*}{dataset}}                                   & \multirow{2}{*}{\begin{tabular}[c]{@{}c@{}}Predict \\ Length\end{tabular}} & \multicolumn{2}{c}{IDOL}          & \multicolumn{2}{c}{TDRL} & \multicolumn{2}{c}{CARD} & \multicolumn{2}{c}{FITS} & \multicolumn{2}{c}{MICN} & \multicolumn{2}{c}{iTransformer} & \multicolumn{2}{c}{TimesNet} & \multicolumn{2}{c}{Autoformer} \\ \cmidrule(l){4-19} 
\multicolumn{2}{c|}{}                                                           &                                                                            & MSE             & MAE             & MSE         & MAE        & MSE            & MAE           & MSE            & MAE           & MSE       & MAE                & MSE                & MAE               & MSE              & MAE             & MSE               & MAE              \\ \midrule
\multicolumn{1}{c|}{\multirow{12}{*}{\rotatebox{90}{Humaneva}}} & \multirow{3}{*}{G}     & 100                                                                        & \textbf{0.0658} & \textbf{0.1623} & 0.0729      & 0.1806     & 0.0898         & 0.1999        & 0.0728         & 0.1758        & 0.0781    & 0.1896             & 0.0905             & 0.2013            & 0.0832           & 0.1949          & 0.1039            & 0.2232           \\
\multicolumn{1}{c|}{}                           &                               & 125                                                                        & \textbf{0.0809} & \textbf{0.1916} & 0.0878      & 0.1993     & 0.0896         & 0.1985        & 0.0834         & 0.1994        & 0.0832    & 0.1990             & 0.0901             & 0.2000            & 0.0830           & 0.1934          & 0.1010            & 0.2195           \\
\multicolumn{1}{c|}{}                           &                               & 150                                                                        & \textbf{0.0697} & \textbf{0.1754} & 0.0724      & 0.1796     & 0.0827         & 0.1894        & 0.0866         & 0.1950        & 0.0715    & 0.1789             & 0.0875             & 0.1965            & 0.0831           & 0.1942          & 0.1006            & 0.2192           \\ \cmidrule(l){2-19} 
\multicolumn{1}{c|}{}                           & \multirow{3}{*}{J}          & 125                                                                        & \textbf{0.1516} & \textbf{0.2479} & 0.2310      & 0.3392     & 0.2836         & 0.3486        & 0.3277         & 0.4058        & 0.1598    & 0.2587             & 0.2306             & 0.3258            & 0.2189           & 0.3071          & 0.3370            & 0.2532           \\
\multicolumn{1}{c|}{}                           &                               & 150                                                                        & \textbf{0.1572} & \textbf{0.2632} & 0.2333      & 0.3431     & 0.3614         & 0.3936        & 0.3396         & 0.4142        & 0.1648    & 0.2713             & 0.2874             & 0.3673            & 0.2695           & 0.3526          & 0.3367            & 0.3199           \\
\multicolumn{1}{c|}{}                           &                               & 175                                                                        & \textbf{0.1742} & \textbf{0.2786} & 0.2810      & 0.3710     & 0.3938         & 0.4246        & 0.3552         & 0.4329        & 0.1864    & 0.2945             & 0.3074             & 0.3841            & 0.3056           & 0.3707          & 0.3147            & 0.2934           \\ \cmidrule(l){2-19} 
\multicolumn{1}{c|}{}                           & \multirow{3}{*}{TC}   & 25                                                                         & \textbf{0.0060} & \textbf{0.0490} & 0.0086      & 0.0607     & 0.0101         & 0.0614        & 0.0116         & 0.0651        & 0.0086    & 0.0607             & 0.0104             & 0.0619            & 0.0147           & 0.0723          & 0.0254            & 0.1043           \\
\multicolumn{1}{c|}{}                           &                               & 50                                                                         & \textbf{0.0128} & 0.0718          & 0.0151      & 0.0811     & 0.0172         & 0.0801        & 0.0142         & 0.0725        & 0.0158    & 0.0835             & 0.0138             & \textbf{0.0711}   & 0.0219           & 0.0891          & 0.0263            & 0.1062           \\
\multicolumn{1}{c|}{}                           &                               & 75                                                                         & 0.0175          & 0.0844          & 0.0191      & 0.0896     & 0.0228         & 0.0686        & 0.0146         & 0.0729        & 0.0185    & 0.0867             & \textbf{0.0136}    & \textbf{0.0701}   & 0.0203           & 0.0869          & 0.0279            & 0.1108           \\ \cmidrule(l){2-19} 
\multicolumn{1}{c|}{}                           & \multirow{3}{*}{W}      & 25                                                                         & 0.0670          & \textbf{0.1338} & 0.0968      & 0.1704     & 0.1010         & 0.1641        & 0.1094         & 0.2117        & 0.0967    & 0.1851             & \textbf{0.0604}    & 0.1344            & 0.0958           & 0.1710          & 0.0940            & 0.1767           \\
\multicolumn{1}{c|}{}                           &                               & 50                                                                         & \textbf{0.1183} & \textbf{0.1814} & 0.1461      & 0.2172     & 0.2387         & 0.2578        & 0.2152         & 0.3089        & 0.1521    & 0.2228             & 0.1245             & 0.2043            & 0.1730           & 0.2389          & 0.3093            & 0.3498           \\
\multicolumn{1}{c|}{}                           &                               & 75                                                                         & \textbf{0.1977} & \textbf{0.2543} & 0.2091      & 0.2642     & 0.4777         & 0.3673        & 0.3156         & 0.3817        & 0.2124    & 0.2706             & 0.2239             & 0.2784            & 0.2202           & 0.2884          & 0.3854            & 0.4009           \\ \midrule
\multicolumn{1}{c|}{\multirow{18}{*}{\rotatebox{90}{Human}}}    & \multirow{3}{*}{D}   & 125                                                                        & \textbf{0.0071} & \textbf{0.0485} & 0.0074      & 0.0509     & 0.0080         & 0.0510        & 0.0085         & 0.0523        & 0.0080    & 0.0524             & 0.0076             & 0.0486            & 0.0097           & 0.0568          & 0.0104            & 0.0586           \\
\multicolumn{1}{c|}{}                           &                               & 250                                                                        & \textbf{0.0094} & \textbf{0.0563} & 0.0096      & 0.0590     & 0.0117         & 0.0600        & 0.0114         & 0.0590        & 0.0107    & 0.0598             & 0.0112             & 0.0581            & 0.0133           & 0.0643          & 0.0134            & 0.0656           \\
\multicolumn{1}{c|}{}                           &                               & 375                                                                        & \textbf{0.0102} & 0.0638          & 0.0106      & 0.0621     & 0.0138         & 0.0645        & 0.0124         & 0.0615        & 0.0109    & \textbf{0.0605}    & 0.0126             & 0.0617            & 0.0152           & 0.0675          & 0.0141            & 0.0678           \\ \cmidrule(l){2-19} 
\multicolumn{1}{c|}{}                           & \multirow{3}{*}{G}     & 125                                                                        & \textbf{0.0120} & \textbf{0.0641} & 0.0167      & 0.0757     & 0.0197         & 0.0763        & 0.0239         & 0.0866        & 0.0144    & 0.0703             & 0.0137             & 0.0649            & 0.0195           & 0.0784          & 0.0217            & 0.0845           \\
\multicolumn{1}{c|}{}                           &                               & 250                                                                        & \textbf{0.0158} & \textbf{0.0808} & 0.0218      & 0.0880     & 0.0283         & 0.0932        & 0.0298         & 0.0982        & 0.0203    & 0.0847             & 0.0217             & 0.0832            & 0.0277           & 0.0933          & 0.0287            & 0.0974           \\
\multicolumn{1}{c|}{}                           &                               & 375                                                                        & \textbf{0.0226} & \textbf{0.0902} & 0.0234      & 0.0914     & 0.0295         & 0.0970        & 0.0304         & 0.1011        & 0.0233    & 0.0912             & 0.0263             & 0.0920            & 0.0311           & 0.0988          & 0.0319            & 0.1029           \\ \cmidrule(l){2-19} 
\multicolumn{1}{c|}{}                           & \multirow{3}{*}{P}    & 125                                                                        & \textbf{0.0203} & \textbf{0.0778} & 0.0233      & 0.0866     & 0.0247         & 0.0837        & 0.0327         & 0.0987        & 0.0237    & 0.0862             & 0.0228             & 0.0793            & 0.0308           & 0.0939          & 0.0400            & 0.1108           \\
\multicolumn{1}{c|}{}                           &                               & 250                                                                        & \textbf{0.0296} & \textbf{0.1003} & 0.0303      & 0.1060     & 0.0407         & 0.1109        & 0.0426         & 0.1168        & 0.0358    & 0.1146             & 0.0434             & 0.1182            & 0.0554           & 0.1337          & 0.0546            & 0.1361           \\
\multicolumn{1}{c|}{}                           &                               & 375                                                                        & \textbf{0.0324} & \textbf{0.1104} & 0.0333      & 0.1148     & 0.0480         & 0.1268        & 0.0509         & 0.1315        & 0.0364    & 0.1199             & 0.0495             & 0.1312            & 0.0595           & 0.1439          & 0.0638            & 0.1498           \\ \cmidrule(l){2-19} 
\multicolumn{1}{c|}{}                           & \multirow{3}{*}{SD}  & 125                                                                        & \textbf{0.0142} & \textbf{0.0709} & 0.0157      & 0.0777     & 0.0175         & 0.0772        & 0.0209         & 0.0889        & 0.0163    & 0.0802             & 0.0162             & 0.0735            & 0.0236           & 0.0948          & 0.0279            & 0.1086           \\
\multicolumn{1}{c|}{}                           &                               & 250                                                                        & \textbf{0.0250} & \textbf{0.1046} & 0.0251      & 0.1050     & 0.0313         & 0.1113        & 0.0331         & 0.1176        & 0.0256    & 0.1069             & 0.0289             & 0.1064            & 0.0355           & 0.1212          & 0.0378            & 0.1302           \\
\multicolumn{1}{c|}{}                           &                               & 375                                                                        & \textbf{0.0290} & \textbf{0.1150} & 0.0301      & 0.1186     & 0.0400         & 0.1296        & 0.0409         & 0.1335        & 0.0300    & 0.1186             & 0.0371             & 0.1246            & 0.0440           & 0.1379          & 0.0441            & 0.1424           \\ \cmidrule(l){2-19} 
\multicolumn{1}{c|}{}                           & \multirow{3}{*}{W}      & 125                                                                        & \textbf{0.0093} & \textbf{0.0490} & 0.0102      & 0.0590     & 0.0113         & 0.0616        & 0.0404         & 0.0941        & 0.0111    & 0.0612             & 0.0123             & 0.0610            & 0.0124           & 0.0623          & 0.0682            & 0.1127           \\
\multicolumn{1}{c|}{}                           &                               & 250                                                                        & \textbf{0.0163} & \textbf{0.0728} & 0.0166      & 0.0729     & 0.0351         & 0.1033        & 0.0995         & 0.1441        & 0.0173    & 0.0742             & 0.0364             & 0.0981            & 0.0336           & 0.0947          & 0.0881            & 0.1381           \\
\multicolumn{1}{c|}{}                           &                               & 375                                                                        & \textbf{0.0193} & \textbf{0.0778} & 0.0207      & 0.0798     & 0.0470         & 0.1188        & 0.1135         & 0.1548        & 0.0206    & 0.0804             & 0.0483             & 0.1126            & 0.0435           & 0.1080          & 0.1219            & 0.1617           \\ \cmidrule(l){2-19} 
\multicolumn{1}{c|}{}                           & \multirow{3}{*}{WT} & 125                                                                        & \textbf{0.0129} & \textbf{0.0667} & 0.0135      & 0.0676     & 0.0195         & 0.0779        & 0.0646         & 0.1244        & 0.0145    & 0.0693             & 0.0180             & 0.0719            & 0.0216           & 0.0787          & 0.0733            & 0.1301           \\
\multicolumn{1}{c|}{}                           &                               & 250                                                                        & \textbf{0.0206} & \textbf{0.0815} & 0.0217      & 0.0830     & 0.0449         & 0.1147        & 0.1127         & 0.1623        & 0.0211    & 0.0821             & 0.0437             & 0.1083            & 0.0425           & 0.1058          & 0.1041            & 0.1566           \\
\multicolumn{1}{c|}{}                           &                               & 375                                                                        & \textbf{0.0233} & \textbf{0.0873} & 0.0248      & 0.0886     & 0.0552         & 0.1255        & 0.1149         & 0.1641        & 0.0245    & 0.0887             & 0.0474             & 0.1146            & 0.0456           & 0.1122          & 0.1165            & 0.1654           \\ \bottomrule
\end{tabular}%
}
\label{tab:main_exp}
\end{table}

\textbf{Quantitative Results: }Experiment results of the real-world datasets are shown in Table \ref{tab:main_exp}. We also conduct the Wilcoxon signed-rank test \cite{richard2021concepts} on the reported accuracies, our method significantly outperforms the baselines, with a p-value threshold of 0.05. According to the experiment results, our \textbf{IDOL} model significantly outperforms all other baselines on most of the human motion forecasting tasks. Specifically, our method outperforms the most competitive baseline by a clear margin of 4\%-34\% and promotes the forecasting accuracy substantially on complex motions like the SittingDown (SD) and the Walking (W). This is because the human motion datasets contain more complex patterns described by stable causal structures, and our method can learn the latent causal process with identifiability guarantees. It is noted that our method achieves a better performance than that of TDRL, which does not consider the instantaneous dependencies among latent variables. We also find that our model achieves a comparable performance in the motion of CatchThrow (CT), this is because the size of this dataset is too small for the model to model the distribution of the observations. 
We further consider a more complex synthetic mixture of human motion forecasting, experiment results are shown in Appendix \ref{app:more_exp}. Please refer to Appendix \ref{app:ablation} for the experiment results of ablation studies. \textcolor{black}{Moreover, we also investigate the proposed IDOL model on other high-dimension datasets in Appendix \ref{app:high_dim}.}


\vspace{-3.5mm}
\section{Conclusion}
\vspace{-3mm}
\textcolor{black}{This paper proposes a general framework for time series data with instantaneous dependencies to identify the latent variables and latent causal relations up to the Markov equivalence class. Furthermore, with mild assumption, the causal graph is also identifiable.}
Different from existing methods that require the assumptions of grouping observations and interventions, the proposed \textbf{IDOL} model employs the sparse latent process assumption, which is easy to satisfy in real-world time series data. We also devise a variational-inference-based method with sparsity regularization to build the gap between theories and practice. Experiment results on simulation datasets evaluate the effectiveness of latent variables identification and latent directed acyclic graphs reconstruction. Evaluation in human motion datasets with instantaneous dependencies reflects the practicability in real-world scenarios. There are two main limitations in our work. First, our method is not for high-dimensional time series data because modeling the complex dependencies through sparsity constraints is challenging. Second, our method relies on the invertible mixing process, but it may not hold in real-world scenarios. How to address these limitations and make our method more scalable will be an interesting direction. 

\section{Acknowledgment}
We would like to acknowledge the support from NSF Award No. 2229881, AI Institute for Societal Decision Making (AI-SDM), the National Institutes of Health (NIH) under Contract R01HL159805, and grants from Quris AI, Florin Court Capital, and MBZUAI-WIS Joint Program. Moreover, this research was supported in part by the National Science and Technology Major Project (2021ZD0111501),  Natural Science Foundation of China (U24A20233).

\bibliography{iclr2025_conference}
\bibliographystyle{iclr2025_conference}

\clearpage
\appendix
  \textit{\large Supplement to}\\ \ \\
      {\Large \bf ``On the Identification of Temporally Causal
Representation with Instantaneous Dependence''}\
\newcommand{\beginsupplement}{%
	\renewcommand{\thetable}{A\arabic{table}}%

        \setcounter{equation}{0}
	\renewcommand{\theequation}{A\arabic{equation}}%

	\renewcommand{\thefigure}{A\arabic{figure}}%
	
	\setcounter{algorithm}{0}
	\renewcommand{\thealgorithm}{A\arabic{algorithm}}%
	
	\setcounter{section}{0}\renewcommand{\theHsection}{A\arabic{section}}%

    \setcounter{theorem}{0}
    \renewcommand{\thetheorem}{A\arabic{theorem}}%

    \setcounter{corollary}{0}
    \renewcommand{\thecorollary}{A\arabic{corollary}}%

        \setcounter{lemma}{0}
    \renewcommand{\thelemma}{A\arabic{lemma}}%

}

\beginsupplement
{\large Appendix organization:}

\DoToC 
\clearpage

\clearpage


\section{Related Works}\label{app:related_works}

\subsection{Identifiability of Causal Representation Learning}
To achieve the causal representation \citep{rajendran2024learning,mansouri2023object,wendong2024causal} for time series data, several researchers leverage the independent component analysis (ICA) to recover the latent variables with identification guarantees \citep{yao2023multi,scholkopf2021toward,Liu2023CausalTriplet,gresele2020incomplete}. Conventional methods assume a linear mixing function from the latent variables to the observed variables \citep{comon1994independent,hyvarinen2013independent,lee1998independent,zhang2007kernel}. To relax the linear assumption, researchers achieve the identifiability of latent variables via nonlinear ICA by using different types of assumptions like auxiliary variables or sparse generation process \citep{zheng2022identifiability,hyvarinen1999nonlinear, hyvarinen2023identifiability,khemakhem2020ice,li2023identifying}. As for the methods that employ the auxiliary variables, \cite{khemakhem2020variational} first achieve the identifiability by assuming the latent sources with exponential family and introducing auxiliary variables e.g., domain indexes, time indexes, and class labels \citep{khemakhem2020variational,hyvarinen2016unsupervised,hyvarinen2017nonlinear,hyvarinen2019nonlinear}. To further relax the exponential family assumption, \cite{kong2022partial, xie2022multi, kong2023identification,yan2023counterfactual, xie2022multi} achieve the component-wise identification results for nonlinear ICA with a $2n+1$ number of auxiliary variables for $n$ latent variables. Recently,  \cite{li2024subspace} further relax to $n+1$ number of auxiliary variables and achieve subspace identifiability. To seek identifiability in an unsupervised manner, researchers employ the assumption of structural sparsity to achieve identifiability \citep{ng2024identifiability,lachapelle2022disentanglement,zheng2022identifiability, xu2024sparsity}. Specifically,  \cite{lachapelle2023synergies,lachapelle2022partial} proposed mechanism sparsity regularization as an inductive bias to identify the causal latent factors and achieve the identifiability on several scenarios like multi-task learning. They further show the identifiability results up to the consistency relationship \citep{lachapelle2024nonparametric}, which allows the partial disentanglement of latent variables. Recently, \citep{zhang2024causal} use the sparse structures of latent variables to achieve identifiability under distribution shift. Different from these methods that assume sparsity exists in the generation process between the latent source and the observation, we assume the \textit{sparse latent process}, where sparsity exists in the transition of latent variables and is common in real-world time series data.

\subsection{Nonlinear ICA for Time Series Data }
Our work is also related to the temporally causal representation. Existing methods for temporally causal representation \citep{ yan2023counterfactual,huang2023latent,halva2020hidden, lippe2022citris} usually rely on the conditional independence assumption, where the latent variables are mutually independent conditional on their time-delayed parents. Specifically, \cite{hyvarinen2016unsupervised} leverage the independent sources principle and the variability of data segments to achieve identifiability on nonstationary time series data. They further use permutation-based contrastive \citep{hyvarinen2017nonlinear} to achieve identifiability on stationary time series data. The recent advancements in temporally causal representation include LEAP \citep{yao2021learning}, TDRL \citep{yao2022temporally}, which leverage the properties of independent noises and variability historical information. However, this assumption is too strong to meet in real-world scenarios since instantaneous dependencies are common in real-world scenarios like human motion. To solve this problem, researchers introduce further assumptions. For example, \cite{lippe2023causal} propose the i-Citris, which demonstrates identifiability by assuming interventions on latent variables. In addition, \cite{morioka2023causal} propose G-CaRL, which yields identifiability with the assumption of the grouping of observational variables. However, these assumptions are still hard to meet in practice. Hence we propose a more relaxed assumption, transition sparsity, to achieve identifiability under time series data with instantaneous 

\subsection{Causal Discovery with Latent Variables}
Several studies are proposed to discover causally related latent variables \citep{chen2024identification,lachapelle2024additive,monti2020causal,zhang2012identifiability,tashiro2012estimation}. Specifically,  \citep{huang2022latent,kong2024identification} leverage the vanishing Tetrad conditions or rank constraints to identify latent variables in linear-Gaussian models.  \cite{xie2022identification,cai2019triad} further draw upon non-Gaussianity in their analysis for linear, non-Gaussian scenarios. Other methods aim to reconstruct the hierarchical structures of the latent variables, like \cite{xie2022identification,huang2022latent}. However, these methods usually use the linear assumption and can hardly handle the real-world time series data with complex nonlinear relationships.

\subsection{Instantaneous Dependencey of Time Series Data}
When the discrete time series data is sampled in a low-frequency or low-resolution manner, instantaneous dependence \citep{adak1996time,koutlis2019identification,gersch1985modeling,jamaludeen2022discovering,wu2023hierarchical} occurs where variables at each time step are not independent given their historical observations. \cite{swanson1997impulse} first discuss the instantaneous dependence from the perspective of noise. Lots of works investigate instantaneous dependency from the causal view.
Specifically, \cite{hyvarinen2010estimation} estimate both instantaneous and lagged effects via a non-Gaussian model. \cite{gong2015discovering, gong2017causal} discover causal structure with instantaneous dependency from subsampled and aggregated data. Recently, \cite{zhu2023beware} consider instantaneous dependency in reinforcement learning.

\subsection{Time Series Forecasting}
Time series forecasting \citep{hyndman2018forecasting,box1970distribution} is one of the most popular research problems in recent years. Recently, the deep-learning-based methods have made great progress, which can be categorized according to different types of neural architectures like the RNN-based methods \citep{hochreiter1997long,lai2018modeling,salinas2020deepar}, CNN-based models \citep{bai2018empirical,wang2022micn,wu2022timesnet}, and the methods based on state-space model \citep{gu2022parameterization,gu2021combining,gu2021efficiently}. Recently, Transformer-based methods \citep{zhou2021informer,wu2021autoformer,nie2022time} further push the development of time series forecasting. However, these methods seldom consider the instantaneous dependencies of time series data.

\section{Proof of Theory}

\subsection{Relationships between Ground-truth and Estimated Latent Variables}\label{app:the1}

\syf{
\begin{theorem}
\label{app: Th1}
\textcolor{black}{(\textbf{Identifiability of Temporally Latent Process})}
For a series of observations $\mathbf{x}_t\in\mathbb{R}^n$ and estimated latent variables $\mathbf{\hat{z}}_{t}\in\mathbb{R}^n$ with the corresponding process $\hat{f}_i, \hat{p}(\epsilon), \hat{g}$, where $\hat{g}$ is invertible, suppose the process subject to observational equivalence $\mathbf{x}_t = \mathbf{\hat{g}}(\mathbf{\hat{z}}_{t})$. Let $\mathcal{M}_{\mathbf{z}_t}$ be the variable set of two consecutive timestamps and the corresponding Markov network respectively. Suppose the following assumptions hold:
    \begin{itemize}[leftmargin=*]
        \item \underline{\textcolor{black}{A1 (Smooth and Positive Density):}} \textcolor{black}{The conditional probability function of the latent variables $\rvc_t$ is smooth and positive, i.e., $p(\rvz_t|\rvz_{t-1})$ is third-order differentiable and $p(\rvz_t|\rvz_{t-1})>0$ over $\mathbb{R}^{2n}$, }
        \item \underline{A2 (Sufficient Variability):} 
        Denote $|\mathcal{M}_{\rvc_t}|$ as the number of edges in Markov network $\mathcal{M}_{\rvc_t}$. Let
        \begin{equation}
        \small
        \begin{split}
            w(m)=
            &\Big(\frac{\partial^3 \log p(\rvz_t|\rvz_{t-1})}{\partial z_{t,1}^2\partial z_{t-1,m}},\cdots,\frac{\partial^3 \log p(\rvz_t|\rvz_{t-1})}{\partial z_{t,n}^2\partial z_{t-1,m}}\Big)\oplus \\
            &\Big(\frac{\partial^2 \log p(\rvz_t|\rvz_{t-1})}{\partial z_{t,1}\partial z_{t-1,m}},\cdots,\frac{\partial^2 \log p(\rvz_t|\rvz_{t-1})}{\partial z_{t,n}\partial z_{t-1,m}}\Big)\oplus \Big(\frac{\partial^3 \log p(\rvz_t|\rvz_{t-1})}{\partial z_{t,i}\partial z_{t,j}\partial z_{t-1,m}}\Big)_{(i,j)\in \mathcal{E}(\mathcal{M}_{\rvz_t})},
        \end{split}
        \end{equation}
    where $\oplus$ denotes concatenation operation and $(i,j)\in\mathcal{E}(\mathcal{M}_{\rvz_t})$ denotes all pairwise indice such that $z_{t,i},z_{t,j}$ are adjacent in $\mathcal{M}_{\rvz_t}$.
        For $m\in[1,\cdots,n]$, there exist $2n+|\mathcal{M}_{\rvz_t}|$ different values of $\rvz_{t-1,m}$, such that the $2n+|\mathcal{M}_{\rvz_t}|$ values of vector functions $w(m)$ are linearly independent.
    \end{itemize}
    Then for any two different entries \textcolor{black}{$\hat{z}_{t,k},\hat{z}_{t,l} $ of $ \hat{\rvz}_t \in \mathbb{R}^{n}$} that are \textbf{not adjacent} in the Markov network $\mathcal{M}_{\hat{\rvz}_t}$ over estimated $\hat{\rvz}_t$,\\
    \textbf{(i)} Each ground-truth latent variable \textcolor{black}{$z_{t,i}$ of $ \rvz_t \in \mathbb{R}^{n}$} is a function of at most one of $\hat{z}_{k}$ and $\hat{z}_{l}$,\\
    \textbf{(ii)} For each pair of ground-truth latent variables $z_{t,i}$ and $z_{t,j}$ \textcolor{black}{ of $ \rvz_t \in \mathbb{R}^{n}$} that are \textbf{adjacent} in $\mathcal{M}_{\rvz_t}$ over $\rvz_{t}$, they can not be a function of $\hat{z}_{t,k}$ and $\hat{z}_{t,l}$ respectively.
\end{theorem}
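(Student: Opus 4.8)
\textbf{Proof proposal for Theorem~\ref{Th1}.}

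The plan is to follow the standard nonlinear-ICA identification template but adapted to the Markov-network structure. First I would set up the bijection: since $\mathbf{x}_t=\mathbf{g}(\mathbf{z}_t)=\hat{\mathbf{g}}(\hat{\mathbf{z}}_t)$ with both mixing functions invertible, the map $h:=\hat{\mathbf{g}}^{-1}\circ\mathbf{g}$ is a diffeomorphism from $\mathbf{z}_t$-space to $\hat{\mathbf{z}}_t$-space, and there is a companion diffeomorphism at time $t-1$. I would write the change-of-variables identity for the conditional densities, $\log p(\mathbf{z}_t\mid\mathbf{z}_{t-1}) = \log \hat p(\hat{\mathbf{z}}_t\mid\hat{\mathbf{z}}_{t-1}) + \log|\det J_h|$, where the Jacobian term depends only on $\mathbf{z}_t$ (not on $\mathbf{z}_{t-1}$) because $h$ is a function of $\mathbf{z}_t$ alone. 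This is the object I will differentiate.

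Next I would exploit the Markov structure on the \emph{estimated} side. Because $\hat z_{t,k}$ and $\hat z_{t,l}$ are non-adjacent in $\mathcal{M}_{\hat{\mathbf{z}}_t}$, the cross-derivative criterion of \citet{lin1997factorizing} gives $\frac{\partial^2 \log \hat p(\hat{\mathbf{z}}_t\mid\hat{\mathbf{z}}_{t-1})}{\partial \hat z_{t,k}\,\partial \hat z_{t,l}} = 0$ identically, and moreover taking one more derivative with respect to $\hat z_{t-1,m}$ also yields $0$. Now I substitute the change-of-variables identity and expand $\frac{\partial^2}{\partial \hat z_{t,k}\partial\hat z_{t,l}}\big(\log p(\mathbf{z}_t\mid\mathbf{z}_{t-1}) - \log|\det J_h|\big)$ using the chain rule, writing everything in terms of $\mathbf{z}_t$-derivatives of $\log p$ and the entries of the Jacobian $\frac{\partial z_{t,i}}{\partial \hat z_{t,k}}$, etc. After differentiating once more in $\hat z_{t-1,m}$ (which only hits the $\log p(\mathbf{z}_t\mid\mathbf{z}_{t-1})$ factor, since $\log|\det J_h|$ and the $\frac{\partial z_{t,i}}{\partial\hat z_{t,\cdot}}$ terms do not depend on $\mathbf{z}_{t-1}$), I obtain for each $m$ a linear relation among exactly the quantities
\[
\frac{\partial^3\log p}{\partial z_{t,i}^2\,\partial z_{t-1,m}},\qquad
\frac{\partial^2\log p}{\partial z_{t,i}\,\partial z_{t-1,m}},\qquad
\frac{\partial^3\log p}{\partial z_{t,i}\,\partial z_{t,j}\,\partial z_{t-1,m}}\ \text{for }(i,j)\in\mathcal{E}(\mathcal{M}_{\mathbf{z}_t}),
\]
whose coefficients are products of Jacobian entries such as $\frac{\partial z_{t,i}}{\partial\hat z_{t,k}}\cdot\frac{\partial z_{t,i}}{\partial\hat z_{t,l}}$ and $\frac{\partial z_{t,i}}{\partial\hat z_{t,k}}\cdot\frac{\partial z_{t,j}}{\partial\hat z_{t,l}}$, together with $\frac{\partial^2 z_{t,i}}{\partial\hat z_{t,k}\partial\hat z_{t,l}}$. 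The key point is that the third-derivative terms of $\log p$ involving a pair $(i,j)$ that is \emph{not} an edge of $\mathcal{M}_{\mathbf{z}_t}$ vanish by the same conditional-independence argument applied on the true side, so only the listed terms survive — exactly the components of $w(m)$ in assumption A2.

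Then I invoke A2: evaluating this single linear relation at the $2n+|\mathcal{M}_{\mathbf{z}_t}|$ values of $z_{t-1,m}$ for which the vectors $w(m)$ are linearly independent gives a homogeneous linear system whose coefficient matrix is full rank, forcing every coefficient to be zero. Reading off the coefficients yields $\frac{\partial z_{t,i}}{\partial\hat z_{t,k}}\cdot\frac{\partial z_{t,i}}{\partial\hat z_{t,l}}=0$ for all $i$ (which is statement (i): $z_{t,i}$ depends on at most one of $\hat z_{t,k},\hat z_{t,l}$), $\frac{\partial z_{t,i}}{\partial\hat z_{t,k}}\cdot\frac{\partial z_{t,j}}{\partial\hat z_{t,l}}=0$ for all edges $(i,j)$ of $\mathcal{M}_{\mathbf{z}_t}$ (which is statement (ii): an adjacent pair $z_{t,i},z_{t,j}$ cannot be governed respectively by $\hat z_{t,k}$ and $\hat z_{t,l}$), and the auxiliary relation $\frac{\partial^2 z_{t,i}}{\partial\hat z_{t,k}\partial\hat z_{t,l}}=0$. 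I expect the main obstacle to be bookkeeping rather than conceptual: carefully verifying that after the chain-rule expansion the surviving terms are \emph{precisely} those indexed by $w(m)$ — in particular that the $\log|\det J_h|$ contribution and all non-edge third-derivative terms genuinely drop out — and that the coefficient matrix assembled from the $2n+|\mathcal{M}_{\mathbf{z}_t}|$ evaluations really is the one whose nonsingularity A2 asserts, up to a harmless reindexing. Handling the second-derivative block $\frac{\partial^2\log p}{\partial z_{t,i}\partial z_{t-1,m}}$ (which arises from differentiating the $\frac{\partial z_{t,i}}{\partial\hat z_{t,k}}$ factors through their dependence on $\mathbf{z}_t$ and then hitting the $z_{t-1,m}$ variable via a second-order term) is the fiddly piece that makes the dimension $2n$ rather than $n$.
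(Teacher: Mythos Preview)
Your proposal is correct and follows essentially the same route as the paper: establish the bijection, write the change-of-variables identity for the conditional log-density, use the Lin--Nguyen cross-derivative criterion to zero out the $(\hat z_{t,k},\hat z_{t,l})$ mixed derivative on the estimated side, expand by chain rule, drop the non-edge terms on the true side, differentiate once more in the past variable, and invoke A2 to force all coefficients to vanish. Two small bookkeeping points you will encounter: the paper differentiates in the \emph{true} past variable $z_{t-1,m}$ rather than $\hat z_{t-1,m}$ (so that the resulting coefficient vector is literally $w(m)$ and A2 applies without an extra Jacobian factor), and the coefficient attached to $\frac{\partial^3\log p}{\partial z_{t,i}\partial z_{t,j}\partial z_{t-1,m}}$ is the symmetrized sum $\frac{\partial z_{t,i}}{\partial\hat z_{t,k}}\frac{\partial z_{t,j}}{\partial\hat z_{t,l}}+\frac{\partial z_{t,j}}{\partial\hat z_{t,k}}\frac{\partial z_{t,i}}{\partial\hat z_{t,l}}$, so you first obtain that sum equal to zero and then combine it with statement~(i) to conclude each individual product vanishes.
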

 }

\begin{proof}
We start from the matched marginal distribution to develop the relationship between $\rvz_t$ and $\hat{\rvz}_t$ as follows:
    \begin{equation}
    \label{eq: realtion_between_z_zhat}
    \small
        p(\hat{\rvx}_t)=p(\rvx_t) 
        \Longleftrightarrow  p(\hat{g}(\hat{\rvz}_t))=p(g(z_t)) 
        \Longleftrightarrow  p((g^{-1}\circ \hat{g})(\hat{\rvz}_t))=p(\rvz_t)
        \Longleftrightarrow  p(h_z(\hat{\rvz}_t))=p(\rvz_t),
    \end{equation} where $\hat{g}:\mathcal{Z}\rightarrow \mathcal{X}$ denotes the estimated \syf{mixing} function, and $h:=g^{-1}\circ \hat{g}$ is the transformation between the ground-truth latent variables and the estimated ones. Since $\hat{g}$ and $g$ are invertible, $h$ is invertible as well. Since Equation~(\ref{eq: realtion_between_z_zhat}) holds true for all time steps, there must exist an invertible function $h_c$ such that $p(h_c(\hat{\rvc}_t))=p(\rvc_t)$, whose Jacobian matrix at time step $t$ is
\begin{equation}
    \mathbf{J}_{h_c,t}=
    \begin{bmatrix}
       \mathbf{J}_{h_z,t-1} & 0 \\
       0 & \mathbf{J}_{h_z,t}
    \end{bmatrix}.
\end{equation}

Then for each value of $\rvx_{t-2}$, the Jacobian matrix of the mapping from $(\rvx_{t-2}, \hat{\rvc}_t)$ to $(\rvx_{t-2}, \rvc_t)$ can be written as follows:
\[
\begin{bmatrix}
  \mathbf{I} & \mathbf{0} \\
  * & \mathbf{J}_{h_c,t} \\
\end{bmatrix},
\]
where $*$ denotes any matrix. Since $\rvx_{t-1}$ can be fully characterized by itself, the left top and right top blocks are $\textbf{1}$ and $\textbf{0}$ respectively, and the determinant of this Jacobian matrix is the same as $|\mathbf{J}_{h_z,t}|$. Therefore, we have:
\begin{equation}
\label{eq: a4}
    p(\hat{\rvz}_t,\rvx_{t-1})= p(\rvz_t,\rvx_{t-1})|\mathbf{J}_{h_z,t}|.
\end{equation}
Dividing both sides of Equation~(\ref{eq: a4}) by $p(\rvx_{t-1})$, we further have:
\begin{equation}
    p(\hat{\rvz}_t|\rvx_{t-1})= p(\rvz_t|\rvx_{t-1})|\mathbf{J}_{h_z,t}|.
\end{equation}
Since $p(\rvz_t|\rvx_{t-1})=p(\rvz_t|g(\rvz_{t-1}))=p(\rvz_t|\rvz_{t-1})$, and similarly \syf{$p(\hat{\rvz}_t|\rvx_{t-1})=p(\hat{\rvz}_t|\hat\rvz_{t-1})$}, we have:
\begin{equation}
\label{equ:A6}
    \log p(\hat{\rvz}_t|\hat\rvz_{t-1})=\log p({\rvz}_t|\rvz_{t-1}) + \log |\mathbf{J}_{h_z,t}|.
\end{equation}
\syf{Let $\hat{z}_{t,k},\hat{z}_{t,l}$ be two different variables that are not adjacent in the estimated Markov network $\mathcal{M}_{\hat\rvz_t}$ over $\hat{\rvz}_t=$.} We conduct the first-order derivative w.r.t. $\hat{z}_{t,k}$ and have
\begin{equation}
\frac{\partial \log p(\hat{\rvz}_t|\hat\rvz_{t-1})}{\partial \hat{z}_{t,k}}=\sum_{i=1}^{n}\frac{\partial \log p({\rvz}_t|\rvz_{t-1})}{\partial z_{t,i}}\cdot\frac{\partial z_{t,i}}{\partial \hat{z}_{t,k}} + \frac{\partial \log |\mathbf{J}_{h_z,t}|}{\partial \hat{z}_{t,k}}.
\end{equation}
We further conduct the second-order derivative w.r.t. $\hat{z}_{t,k}$ and $\hat{z}_{t,l}$, then we have:
\begin{equation}
\label{eq: relationship_c_hatc_second}
\begin{split}
    \frac{\partial^2 \log p(\hat{\rvz}_t|\hat\rvz_{t-1})}{\partial \hat{z}_{t,k}\partial \hat{z}_{t,l}}
    &=\sum_{i=1}^{n}\sum_{j=1}^{n}\frac{\partial^2 \log p(\rvz_t|\rvz_{t-1})}{\partial z_{t,i} \partial z_{t,j}}\cdot\frac{\partial z_{t,i}}{\partial \hat{z}_{t,k}}\cdot\frac{\partial z_{t,j}}{\partial \hat{z}_{t,l}}  \\
    &+ \sum_{i=1}^{2}\frac{\partial \log p({\rvz}_t|\rvz_{t-1})}{\partial z_{t,i}}\cdot\frac{\partial^2 z_{t,i}}{\partial \hat{z}_{t,k} \partial \hat{z}_{t,l}}+ \frac{\partial^2 \log |\mathbf{J}_{h_z,t}|}{\partial \hat{z}_{t,k}\partial \hat{z}_{t,l}}.
\end{split}
\end{equation}
Since $\hat{z}_{t,k},\hat{z}_{t,l}$ are not adjacent in $\mathcal{M}_{\hat\rvz_t}$, $\hat{z}_{t,k}$ and $\hat{z}_{t,l}$ are conditionally independent given $\hat{\rvz}_t \backslash \{\hat{z}_{t,k},\hat{z}_{t,l}\}$. Utilizing the fact that conditional independence can lead to zero cross derivative \citep{lin1997factorizing}, for each value of $\hat{\rvz}_{t-1}$, we have

\begin{equation}
\label{eq: second_derivative_zero}
\begin{split}
    \frac{\partial^2 \log p(\hat{\rvz}_t|\hat{\rvz}_{t-1})}{\partial \hat{z}_{t,k}\partial \hat{z}_{t,l}}=&\frac{\partial^2 \log p(\hat{z}_{t,k}|\hat{\rvz}_t \backslash \{\hat{z}_{t,k},\hat{z}_{t,l}\},\hat{\rvz}_{t-1})}{\partial \hat{z}_{t,k}\partial \hat{z}_{t,l}} + \frac{\partial^2 \log p(\hat{z}_{t,l}|\hat{\rvz}_t \backslash \{\hat{z}_{t,k},\hat{z}_{t,l}\},\hat{\rvz}_{t-1})}{\partial \hat{z}_{t,k}\partial \hat{z}_{t,l}}\\& + \frac{\partial^2 \log p(\hat\rvz_t \backslash \{\hat{z}_{t,k},\hat{z}_{t,l}\}|\hat{\rvz}_{t-1})}{\partial \hat{z}_{t,k}\partial \hat{z}_{t,l}}=0.
\end{split}
\end{equation}

Bring in Equation~(\ref{eq: second_derivative_zero}), Equation~(\ref{eq: relationship_c_hatc_second}) can be further derived as
\begin{equation}
\label{eq: split_into_4_parts}
\begin{split}
    0
    &=\underbrace{\sum_{i=1}^{n}\frac{\partial^2 \log p({\rvz}_t|\rvz_{t-1})}{\partial z_{t,i}^2}\cdot\frac{\partial z_{t,i}}{\partial \hat{z}_{t,k}}\cdot\frac{\partial z_{t,i}}{\partial \hat{z}_{t,l}}}_{\textbf{(i) }i=j} + \underbrace{\sum_{i=1}^{n}\sum_{j:(j,i)\in \mathcal{E}(\mathcal{M}_{\rvz_t})}\frac{\partial^2 \log p({\rvz}_t|\rvz_{t-1})}{\partial z_{t,i} \partial z_{t,j}}\cdot\frac{\partial z_{t,i}}{\partial \hat{z}_{t,k}}\cdot\frac{\partial z_{t,j}}{\partial \hat{z}_{t,l}}}_{\textbf{(ii)} z_{t,i} \text{ and } z_{t,j} \text{ are adjacent in $\mathcal{M}_{\rvz_t} $}} \\ &+\underbrace{\sum_{i=1}^{n}\sum_{j:(j,i)\notin \mathcal{E}(\mathcal{M}_{\rvz_t})}\frac{\partial^2 \log p({\rvz}_t|\rvz_{t-1})}{\partial z_{t,i} \partial z_{t,j}}\cdot\frac{\partial z_{t,i}}{\partial \hat{z}_{t,k}}\cdot\frac{\partial z_{t,j}}{\partial \hat{z}_{t,l}}}_{\textbf{(iii)} z_{t,i} \text{ and } z_{t,j} \text{ are \textbf{not} adjacent in $\mathcal{M}_{\rvz_t} $}} \\&+ \sum_{i=1}^{n}\frac{\partial \log p({\rvz}_t|\rvz_{t-1})}{\partial z_{t,i}}\cdot\frac{\partial^2 z_{t,i}}{\partial \hat{z}_{t,k} \partial \hat{z}_{t,l}} + \frac{\partial \log |\mathbf{J}_{h_z,t}|}{\partial \hat{z}_{t,k}\partial \hat{z}_{t,l}},
\end{split}
\end{equation}

where $(j,i)\in \mathcal{E}(\mathcal{M}_{\rvz_t})$ denotes that $z_{t,i}$ and $z_{t,j}$ are adjacent in $\mathcal{M}_{\rvz_t}$. Similar to Equation~(\ref{eq: second_derivative_zero}), we have $\frac{\partial^2 p(\rvz_t|\rvz_{t-1})}{\partial z_{t,i} \partial z_{t,j}}=0$ when $z_{t,i}, z_{t,j}$ are not adjacent in  $\mathcal{M}_{\rvz_t} $. Thus, Equation~(\ref{eq: split_into_4_parts}) can be rewritten as 
\begin{equation}
\label{eq: split_into_3_parts}
\begin{split}
    0=&\sum_{i=1}^{n}\frac{\partial^2 \log p(\rvz_t|\rvz_{t-1})}{\partial z_{t,i}^2}\cdot\frac{\partial z_{t,i}}{\partial \hat{z}_{t,k}}\cdot\frac{\partial z_{t,i}}{\partial \hat{z}_{t,l}} + \sum_{i=1}^{n}\sum_{j:(j,i)\in \mathcal{E}(\mathcal{M}_{\rvz})}\frac{\partial^2 \log p(\rvz_t|\rvz_{t-1})}{\partial z_{t,i} \partial z_{t,j}}\cdot\frac{\partial z_{t,i}}{\partial \hat{z}_{t,k}}\cdot\frac{\partial z_{t,j}}{\partial \hat{z}_{t,l}} \\ &+ \sum_{i=1}^{n}\frac{\partial \log p(\rvz_{t}|\rvz_{t-1})}{\partial z_{t,i}}\cdot\frac{\partial^2 z_{t,i}}{\partial \hat{z}_{t,k} \partial \hat{z}_{t,l}} + \frac{\partial \log |\mathbf{J}_{h_z,t}|}{\partial \hat{z}_{t,k}\partial \hat{z}_{t,l}}.
\end{split}
\end{equation}

Then for each $m=1,2,\cdots,n$ and each value of $z_{t-1,m}$, we conduct partial derivative on both sides of Equation~(\ref{eq: split_into_3_parts}) and have:
\begin{equation}
    \begin{split}
    0=&\sum_{i=1}^{n}\frac{\partial^3 \log p({\rvz}_t|\rvz_{t-1})}{\partial z_{t,i}^2 \partial z_{t-1,m}}\cdot\frac{\partial z_{t,i}}{\partial \hat{z}_{t,k}}\cdot\frac{\partial z_{t,i}}{\partial \hat{z}_{t,l}} + \sum_{i=1}^{n}\sum_{j:(j,i)\in \mathcal{E}(\mathcal{M}_{\rvz})}\frac{\partial^3 \log p({\rvz}_t|\rvz_{t-2})}{\partial z_{t,i} \partial z_{t,j} \partial z_{t-1,m}}\cdot\frac{\partial z_{t,i}}{\partial \hat{z}_{t,k}}\cdot\frac{\partial z_{t,j}}{\partial \hat{z}_{t,l}} \\ &+ \sum_{i=1}^{n}\frac{\partial^2 \log p(z_{t}|\rvz_{t-1})}{\partial z_{t,i} \partial z_{t-1,m}}\cdot\frac{\partial z_{t,i}^2}{\partial \hat{z}_{t,k} \partial \hat{z}_{t,l}} 
\end{split},
\end{equation}

Finally we have
\begin{equation}
\label{eq: final_linear_system}
    \begin{split}
    0=&\sum_{i=1}^{n}\frac{\partial^3 \log p({\rvz}_t|\rvz_{t-1})}{\partial z_{t,i}^2 \partial z_{t-1,m}}\cdot\frac{\partial z_{t,i}}{\partial \hat{z}_{t,k}}\cdot\frac{\partial z_{t,i}}{\partial \hat{z}_{t,l}} 
    + \sum_{i=1}^{n}\frac{\partial^2 \log p(z_{t}|\rvz_{t-1})}{\partial z_{t,i} \partial z_{t-1,m}}\cdot\frac{\partial z_{t,i}^2}{\partial \hat{z}_{t,k} \partial \hat{z}_{t,l}} \\
    & + \sum_{i,j:(j,i)\in \mathcal{E}(\mathcal{M}_{\rvz})}\frac{\partial^3 \log p({\rvz}_t|\rvz_{t-1})}{\partial z_{t,i} \partial z_{t,j} \partial z_{t-1,m}}\cdot\left(\frac{\partial z_{t,i}}{\partial \hat{z}_{t,k}}\cdot\frac{\partial z_{t,j}}{\partial \hat{z}_{t,l}} + \frac{\partial z_{t,j}}{\partial \hat{z}_{t,k}}\cdot\frac{\partial z_{t,i}}{\partial \hat{z}_{t,l}}\right) 
\end{split}.
\end{equation}

\syf{According to Assumption A2, we can construct $4n+|\mathcal{M}_{\rvc}|$ different equations with different values of $z_{t-1,m}$, and the coefficients of the equation system they form are linearly independent. To ensure that the right-hand side of the equations is always 0, the only solution is}
\begin{equation}
\label{eq: same_c_to_different_hatc}
\begin{split}
    \frac{\partial z_{t,i}}{\partial \hat{z}_{t,k}}\cdot\frac{\partial z_{t,i}}{\partial \hat{z}_{t,l}}=&0,
\end{split}
\end{equation}
\begin{equation}
\label{eq: different_c_to_different_hatc_pre}
\begin{split}
    \frac{\partial z_{t,i}}{\partial \hat{z}_{t,k}}\cdot\frac{\partial z_{t,j}}{\partial \hat{z}_{t,l}}
    + \frac{\partial z_{t,j}}{\partial \hat{z}_{t,k}}\cdot\frac{\partial z_{t,i}}{\partial \hat{z}_{t,l}}
    =&0,
\end{split}
\end{equation}
\begin{equation}
\begin{split}
    \frac{\partial z_{t,i}^2}{\partial \hat{z}_{t,k} \partial \hat{z}_{t,l}}=&0.
\end{split}
\end{equation}

Bringing Eq~\ref{eq: same_c_to_different_hatc} into Eq~\ref{eq: different_c_to_different_hatc_pre}, at least one product must be zero, thus the other must be zero as well. That is, 
\begin{equation}
\label{eq: different_c_to_different_hatc}
\begin{split}
    \frac{\partial z_{t,i}}{\partial \hat{z}_{t,k}}\cdot\frac{\partial z_{t,j}}{\partial \hat{z}_{t,l}}
    =&0.
\end{split}
\end{equation}

\syf{
According to the aforementioned results, for any two different entries $\hat{z}_{t,k},\hat{z}_{t,l} \in \hat{\rvz}_t$ that are \textbf{not adjacent} in the Markov network $\mathcal{M}_{\hat{\rvz}_t}$ over estimated $\hat{\rvz}_t$, we draw the following conclusions.\\
\textbf{(i)} Equation~(\ref{eq: same_c_to_different_hatc}) implies that, each ground-truth latent variable $z_{t,i}\in \rvz_t$ is a function of at most one of $\hat{z}_{t,k}$ and $\hat{z}_{t,l}$,\\
\textbf{(ii)} Equation~(\ref{eq: different_c_to_different_hatc}) implies that, for each pair of ground-truth latent variables $z_{t,i}$ and $z_{t,j}$ that are \textbf{adjacent} in $\mathcal{M}_{\rvz_t}$ over $\rvz_{t}$, they can not be a function of $\hat{z}_{t,k}$ and $\hat{z}_{t,l}$ respectively.
}
\end{proof}

\subsection{Extension to Multiple Lags and Sequence Lengths}
\label{app: Extension to Multiple Lags and Sequence Lengths}

For the sake of simplicity, we consider only one special case with $\tau=1$ and $L=2$ in Theorem~\ref{app: Th2}. Our identifiability theorem can be actually extended to arbitrary lags and subsequences easily. For any given $\tau$, and subsequence which is centered at $\rvz_t$ with previous $lo$ and following $hi$ steps, i.e., $\rvz_t=\{\rvz_{t-lo-1},\cdots,\rvz_{t-lo-\tau}\}$. In this case, the vector function $w(i,j,m)$ in Sufficient Variability Assumption should be modified as 
\begin{equation}
\small
\begin{split}
    w(i,j,m)=
    &\Big(\frac{\partial^3 \log p(\rvz_t|\rvz_{t-lo-1},\cdots,\rvz_{t-lo-\tau})}{\partial z_{t,1}^2\partial z_{t-lo-1,m}},\cdots,\frac{\partial^3 \log p(\rvz_t|\rvz_{t-lo-1},\cdots,\rvz_{t-lo-\tau})}{\partial z_{t,n}^2\partial z_{t-lo-1,m}}\Big)\oplus \\
    &\Big(\frac{\partial^2 \log p(z_t|\rvz_{t-lo-1},\cdots,\rvz_{t-lo-\tau})}{\partial z_{t,1}\partial z_{t-lo-1,m}},\cdots,\frac{\partial^2 \log p(z_t|\rvz_{t-lo-1},\cdots,\rvz_{t-lo-\tau})}{\partial z_{t,2n}\partial z_{t-lo-1,m}}\Big)\oplus \\
    & \Big(\frac{\partial^3 \log p(\rvz_t|\rvz_{t-lo-1},\cdots,\rvz_{t-lo-\tau})}{\partial z_{t,i}\partial z_{t,j}\partial z_{t-lo-1,m}}\Big)_{(i,j)\in \mathcal{E}(\mathcal{M}_{\rvz_t})}.
\end{split}
\end{equation}
Besides, $2 \times n \times (lo+hi+1)+|\mathcal{M}_{\rvc_t}|$ values of linearly independent vector functions in $z_{t',m}$ for $t'\in[t-lo-1,\cdots,t-lo-\tau]$ and $m\in[1,\cdots,n]$ are required as well. The rest of the theorem remains the same, and the proof can be easily extended in such a setting.

\subsection{Identifiability of Latent Variables}
\label{app:the2}

\begin{theorem}
\label{app: Th2}
\textbf{(Component-wise Identification of Latent Variables with instantaneous dependencies.)} Suppose that the observations are generated by Equation~(\ref{equ:g1})-(\ref{equ:g2}), $\mathcal{M}_{\rvc_t}$ and $\mathcal{M}_{\rvz_t}$ is the Markov network over \textcolor{black}{$\rvc_t=\{\rvz_{t-1},\rvz_{t}\}\in \mathbb{R}^{2n}$} and $\rvz_t$, respectively. Except for the assumptions A1 and A2 from Theorem \ref{Th1}, we further make the following assumption:
\begin{itemize}[leftmargin=*]
    \item \underline{A3 (Sparse Latent Process):} 1) For any $z_{it}\in \rvz_t$, the intimate neighbor set of $z_{it}$ is an empty set. 2) For any $z_{it} \in \rvz_t$, the intimate neighbor set of $z_{it}$ is not an empty set, but there exists a direct child $z_{jt}$, whose intimate neighbor set is an empty set.
\end{itemize}

\syf{When the observational equivalence is achieved} with the minimal number of edges of the estimated Markov network of $\mathcal{M}_{\hat{\rvc}_t}$, then we have the following two statements:

(i) The estimated Markov network $\mathcal{M}_{\hat{\rvc}_t}$ is isomorphic to the ground-truth Markov network $\mathcal{M}_{\rvc_t}$.

(ii) There exists a permutation $\pi$ of the estimated latent variables, such that $z_{t,i}$ and $\hat{z}_{t,\pi(i)}$ is one-to-one corresponding, i.e., $z_{t,i}$ is component-wise identifiable.
\end{theorem}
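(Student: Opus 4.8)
The plan is to convert observational equivalence into a diffeomorphism between true and estimated latents and then invoke Theorem~\ref{Th1} twice: once to pin down the Markov network, and once to collapse the Jacobian support onto a permutation. First I would, exactly as in the proof of Theorem~\ref{Th1}, pass from $p(h_c(\hat{\rvc}_t))=p(\rvc_t)$ to an invertible map $h_c$ between $\hat{\rvc}_t=\{\hat{\rvz}_{t-1},\hat{\rvz}_t\}$ and $\rvc_t$, whose Jacobian is block-diagonal with both blocks equal to the same $\mathbf{J}_{h_z}$ (the mixing functions $g,\hat g$ are time-invariant, so one map acts at every timestamp). Since $\mathbf{J}_{h_z}$ is invertible everywhere its determinant is not identically zero, and expanding $\det\mathbf{J}_{h_z}$ by the Leibniz formula shows some permutation $\sigma$ has all ``generalized diagonal'' entries $\partial z_{t,i}/\partial \hat z_{t,\sigma(i)}$ not identically zero. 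I relabel the estimated coordinates by $\sigma$ (the same relabeling at $t-1$ and $t$), so that $\partial z_{t,i}/\partial\hat z_{t,i}\not\equiv 0$ for all $i$.

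Next I would prove edge preservation. Take any edge $\{z_{t,i},z_{t,j}\}$ of $\mathcal{M}_{\rvc_t}$ (instantaneous, or time-delayed between $\rvz_{t-1}$ and $\rvz_t$). If $\hat z_{t,i},\hat z_{t,j}$ were non-adjacent in $\mathcal{M}_{\hat{\rvc}_t}$, then Theorem~\ref{Th1}(ii) applied to this non-adjacent estimated pair would forbid the adjacent true pair $z_{t,i},z_{t,j}$ from being, respectively, functions of $\hat z_{t,i}$ and $\hat z_{t,j}$, contradicting the nonzero diagonal. Hence the (relabeled) identity vertex map embeds $\mathcal{M}_{\rvc_t}$ into $\mathcal{M}_{\hat{\rvc}_t}$, so every observationally equivalent estimate has at least $|\mathcal{E}(\mathcal{M}_{\rvc_t})|$ edges; since the ground-truth model is itself such an estimate with exactly $|\mathcal{E}(\mathcal{M}_{\rvc_t})|$ edges, the minimal-edge estimate attains equality, the embedding is onto, and $\mathcal{M}_{\hat{\rvc}_t}$ is isomorphic to $\mathcal{M}_{\rvc_t}$ — statement (i).

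For statement (ii) I would fix the minimal estimate and set $S_i=\{k:\partial z_{t,i}/\partial\hat z_{t,k}\not\equiv 0\}\ni i$. By Theorem~\ref{Th1}(i) any two indices in $S_i$ are adjacent in $\mathcal{M}_{\hat{\rvz}_t}$, so $S_i$ is a clique, and via the isomorphism $\{z_{t,k}:k\in S_i\}$ is a clique of $\mathcal{M}_{\rvz_t}$ containing $z_{t,i}$. If $S_i\supsetneq\{i\}$, pick $j\in S_i\setminus\{i\}$; then $z_{t,j}$ is a neighbor of $z_{t,i}$, and for any other neighbor $z_{t,m}$ of $z_{t,i}$, non-adjacency of $z_{t,j},z_{t,m}$ would (again by Theorem~\ref{Th1}(ii), using that $z_{t,i}$ is a function of $\hat z_{t,j}$) forbid $z_{t,m}$ from being a function of $\hat z_{t,m}$, contradicting the diagonal. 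Thus $z_{t,j}$ is adjacent to $z_{t,i}$ and to all of its neighbors, i.e.\ $z_{t,j}\in\Psi_{\mathcal{M}_{\rvz_t}}(z_{t,i})$, contradicting clause 1) of A3. Hence $S_i=\{i\}$ and $z_{t,i}$ is an invertible function of $\hat z_{t,i}$ alone, which is the claimed component-wise identification. For clause 2) of A3 I would apply this argument to the direct child $z_{t,j}$ of $z_{t,i}$ whose intimate neighbor set is empty, obtaining its component-wise identification, and then recover $z_{t,i}$ by combining that with the structural equation~\eqref{equ:g2} for $z_{t,j}$ and the invertibility of $\mathbf{J}_{h_z}$ (which prevents two true coordinates from being routed through one estimated coordinate).

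The hard part will be the second round of Theorem~\ref{Th1}, i.e.\ upgrading Markov-network isomorphism to component-wise identifiability: one must squeeze the maximal amount out of Theorem~\ref{Th1} by simultaneously exploiting the clique constraint from part (i), the ``no adjacent true pair split across a non-edge'' constraint from part (ii), and the emptiness of the intimate neighbor set — and this is exactly where minimality of the estimated edge count is indispensable, since a denser estimate would have fewer non-adjacent pairs and Theorem~\ref{Th1} would yield nothing. The bookkeeping for clause 2) of A3 is the other delicate point, since identifiability there must be propagated from an identified child back to the parent through the structural equations rather than read off directly; by contrast, the diffeomorphism construction, the block-diagonal Jacobian, and the Leibniz/generalized-diagonal step are routine.
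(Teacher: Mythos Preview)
Your proposal tracks the paper's proof closely for statement~(i) and for statement~(ii) under clause~1) of A3. The Leibniz-formula step securing a permutation with nonzero diagonal, the edge-preservation argument via Theorem~\ref{Th1}(ii), the minimality-implies-isomorphism step, and the clique treatment of clause~1) are essentially what the paper does; your $S_i$ framing is just a repackaging of the paper's pairwise case split into ``$z_{t,i},z_{t,j}$ adjacent'' versus ``non-adjacent''. One notational caution: the intimate neighbor set in A3 is taken in the \emph{full} network $\mathcal{M}_{\rvc_t}$ (so the witness $z_{t',k}$ breaking the intimate relation may live at $t-1$), and your argument should allow cross-time neighbors $z_{t',m}$ rather than restricting to the same timestamp.

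The substantive gap is clause~2) of A3. Your plan --- identify the child $z_{t,j}$ first, then propagate back to the parent $z_{t,i}$ ``through the structural equation~\eqref{equ:g2}'' and ``invertibility of $\mathbf{J}_{h_z}$'' --- is not the paper's mechanism and, as written, is not a valid argument: knowing $z_{t,j}=f_j(\text{Pa}_d(z_{t,j}),\text{Pa}_e(z_{t,j}),\epsilon_{t,j})$ with $z_{t,i}\in\text{Pa}_e(z_{t,j})$ does not let you solve for $z_{t,i}$ (the noise $\epsilon_{t,j}$ and other parents obstruct inversion), and Jacobian invertibility only rules out two true coordinates collapsing onto one estimated coordinate, not one true coordinate spreading over several estimated ones. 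The paper instead exploits the conditional-independence factorization of $\log p(\rvz_t\mid\rvz_{t-1})$ implied by the local structure around $z_{t,i},z_{t,j}$, transports it to the estimated side via the change-of-variables identity (Equation~(\ref{equ:A6})), differentiates once with respect to $\hat z_{t,\pi(j)}$ (using that $z_{t,j}$ is already identified to kill the other terms), and then takes a second derivative with respect to a \emph{time-delayed} coordinate $\hat z_{t-1,k}$ to force $\partial z_{t,j}/\partial\hat z_{t,k}=0$, leaving $\hat z_{t,k}$ a function of $z_{t,i}$ alone. The key move you are missing is that identifiability of the parent is obtained by a cross-derivative against a previous-timestamp variable, in the spirit of the proof of Theorem~\ref{Th1}, not by inverting a structural equation at time $t$.
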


\begin{proof}
\textbf{Step1: {Identifiability of Markov networks}}

    First, we demonstrate that there always exists a row permutation for each invertible matrix such that the permuted diagonal entries are non-zero \citep{zhang2024causal}. By contradiction, if the product of the diagonal entry of an invertible matrix $A$ is zero for every row permutation, then we have Equation
    \begin{equation}
    \text{det}(A)=\sum_{\sigma\in \mathcal{S}_n}\Big(\text{sgn}(\sigma)\prod_{i=1}^n a_{\sigma(i),i}\Big),
    \end{equation}  
    by the Leibniz formula, where $\mathcal{S}_n$ is the set of $n$-permutations. Thus, we have
    \begin{equation}
    \prod_{i=1}^n a_{\sigma(i),i}=0, \quad \forall \sigma \in \mathcal{S}_n,
    \end{equation}
    which indicates that $det(A)=0$ and $A$ is non-invertible. It contradicts the assumption that $A$ is invertible, and a row permutation where the permuted diagonal entries are non-zero must exist. Since $h_z$ is invertible, for $\rvz_t$ at time step $t$, there exists a permuted version of the estimated latent variables, such that
    \begin{equation}
    \label{eq: z_to_z_permute}
    \frac{\partial z_{t,i}}{\partial \hat{z}_{t,\pi_t(i)}} \neq 0, \quad i=1,\cdots, n,
    \end{equation}
    where $\pi_t$ is the corresponding permutation at time step $t$.
    
    Second, we demonstrate that the instantaneous Markov network $\mathcal{M}_{\rvz_t}$ over $\rvz_t$ is identical to $\mathcal{M}_{\hat\rvz_{t}^{\pi' }}$, where $\mathcal{M}_{\hat\rvz_{t}^{\pi' }}$ denotes the Markov network of the permuted version of $\pi'(\hat{\rvz}_{t})$.

    On the one hand, for any pair of $(i,j)$ such that $z_{t,i},z_{t,j}$ are \textbf{adjacent} in $\mathcal{M}_{\rvz_t}$ while $\hat{z}_{t,\pi'(i)},\hat{z}_{t,\pi'(j)}$ are \textbf{not adjacent} in $\mathcal{M}_{\hat\rvz_t^{\pi'}}$, according to Equation~(\ref{eq: different_c_to_different_hatc}), we have $\frac{\partial z_{t,i}}{\partial \hat{z}_{t,\pi'(i)}}\cdot\frac{\partial z_{t,j}}{\partial \hat{z}_{t,\pi'(j)}}=0$, which is a contradiction with how $\pi'$ is constructed. Thus, any edge presents in $\mathcal{M}_{\rvz_t}$ must exist in $\mathcal{M}_{\hat\rvz_t^{\pi'}}$. On the other hand, since observational equivalence can be achieved by the true latent process $(g,f,p_{\rvz_t})$, the true latent process is clearly the solution with minimal edges. 
    
    Therefore, under the sparsity constraint on the edges of $\mathcal{M}_{\hat\rvz_t^{\pi'}}$, the permuted estimated Markov network $\mathcal{M}_{\hat\rvz_t^{\pi'}}$ must be identical to the true Markov network $\mathcal{M}_{\rvz_t}$. 

    Sequentially, we further provide that the Markov network between $\rvz_{t-1}$ and $\rvz_t$ is identifiable. We suppose that $z_{t-1,i}$ and $z_{t,j}$ are adjacent in the ground-truth Markov network while $\hat{z}_{t-1,i}$ and $\hat{z}_{t,j}$ are not adjacent in the estimated Markov network. Since $\hat{z}_{t-1, i}$ and $\hat{z}_{t,i}$ are adjacent, which implies that $\hat{z}_{t,i}$ and $\hat{z}_{t,j}$ are not adjacent in the estimated Markov network, it leads to a contradiction since $\frac{\partial z_{t,i}}{\partial \hat{z}_{t,\pi'(i)}}\cdot\frac{\partial z_{t,j}}{\partial \hat{z}_{t,\pi'(j)}}=0$. Therefore, any edge present in the ground-truth Markov network over time-delayed latent variables must exist in the estimated one. Similarly, on the other hand, since observational equivalence can be achieved by the true latent process $(g,f,p_{\rvz_t})$, the true latent process is clearly the solution with minimal edges. Therefore, the Markov networks over the time-delayed latent variables are identifiable.
   
    Finally, we claim that (i) the estimated Markov network $\mathcal{M}_{\hat{\rvc}_t}$ is isomorphic to the ground-truth Markov network $\mathcal{M}_{\rvc_t}$.

\textbf{Step2: {Component-wise Identifiability of Latent Variables}}

    Sequentially, under the same permutation $\pi_t$, we further give the proof that $z_{t,i}$ is only the function of $\hat{z}_{t,\pi_t(i)}$. According to the A3, we consider two cases: 1) For any $z_{it}\in\rvz_t$, the intimate neighbor set of $z_{t,i}$ is empty. 2) For any $z_{t,i}\in\rvz_t$, the intimate neighbor set of $z_{t,i}$ is not empty, but there exists a direct child $z_{zj}$, whose intimate neighbor is empty.
    
    We first consider the first case, where the intimate neighbor set of $z_{t,i}$ is an empty set. Since the permutation happens on each time step, the cross-time disentanglement is prevented clearly. 
    
    Now, let us focus on instantaneous disentanglement. Suppose there exists a pair of indices $i,j\in\{1,\cdots,n\}$. According to Equation~(\ref{eq: z_to_z_permute}), we have $\frac{\partial z_{t,i}}{\partial \hat{z}_{t,\pi_t(i)}} = 0$ and $\frac{\partial z_{t,j}}{\partial \hat{z}_{t,\pi_t(j)}} = 0$. Let us discuss it case by case.

    \begin{itemize}
        \item If $z_{t,i}$ is not adjacent to $z_{t,j}$, we have $\hat{z}_{t,\pi_t(i)}$ is not adjacent to $\hat{z}_{t,\pi_t(j)}$ as well according to the conclusion of identical Markov network. Using Equation~(\ref{eq: same_c_to_different_hatc}), we have $\frac{\partial z_{t,i}}{\partial \hat{z}_{t,\pi_t(i)}}\cdot \frac{\partial z_{t,i}}{\partial \hat{z}_{t,\pi_t(j)}} = 0$, which leads to $\frac{\partial z_{t,i}}{\partial \hat{z}_{t,\pi_t(j)}}=0$.
        \item If $z_{t,i}$ is adjacent to $z_{t,j}$, we have $\hat{z}_{t,\pi_t(i)}$ is adjacent to $\hat{z}_{t,\pi_t(j)}$. When the Assumption A3 (Sparse Latent Process) is assured, i.e., the intimate neighbor set of $z_{t,i}$ is empty, there exists at least one pair of $(t', k)$ such that $z_{t',k}$ is adjacent to $z_{t,i}$ but not adjacent to $z_{t,j}$. Similarly, we have the same structure on the estimated Markov network, which means that $\hat{z}_{t',\pi_{t'}(k)}$ is adjacent to $\hat{z}_{t,\pi_t(i)}$ but not adjacent to $\hat{z}_{t,\pi_t(j)}$. Using Equation~(\ref{eq: different_c_to_different_hatc}) we have $\frac{\partial z_{t,k}}{\partial \hat{z}_{t',\pi_{t'}(k)}}\cdot \frac{\partial z_{t,i}}{\partial \hat{z}_{t,\pi_t(j)}} = 0$, which leads to $\frac{\partial z_{t,i}}{\partial \hat{z}_{t,\pi_t(j)}} = 0$.
    \end{itemize} 

    Sequentially, we further consider the second case, where the intimate neighbor set of $z_{t,i}\in \rvz_t$ is not empty, but there exists a direct child $z_{t,j}$, whose intimate neighbor set is an empty set. According to Theorem A1, the second case means that there exists an estimated latent variable $\hat{z}_{t,k}$ and a function $h_k$, such that $\hat{z}_{t,k}=h_k(z_{t,i}, z_{t,j})$.
    
    The second case denotes that $z_{t,i}$ is independent of the remaining latent variables given $\rvz_{t-1}$ and $z_{t,j}$, which can be formalized as:
    \begin{equation}
        \log p(z_{t,i}, z_{t,j}, \rvz_{t,\\/i,j}|\rvz_{t-1}) = \log p(z_{t,i}|\rvz_{t-1}) + \log p(z_{t,j}|\rvz_{t-1}, z_{t,i}) + \log p(\rvz_{t,\\/i,j}|\rvz_{t-1}).
    \end{equation}
    Since the latent structures of $\rvz_{t-1}$ and $\rvz_t$ are the same, the aforementioned equation can be further rewritten as:
    \begin{equation}
        \log p(z_{t,i}, z_{t,j}, \rvz_{t,\\/i,j}|\rvz_{t-1})=\log p(\rvz_{t,i}|\rvz_{t-1,i}) + \log p(\rvz_{t,j}|\rvz_{t-1,j}, \rvz_{t,i}) + \log p(\rvz_{t,\\/i,j}|\rvz_{t-1,\\/i,j}).
    \end{equation}

    According to Equation (\ref{equ:A6}), we can further have:
    \begin{equation}
    \begin{aligned}
        \log p(\hat{z}_{t,k}|\hat{z}_{t-1,k})
        + &\log p(\hat{z}_{t,\pi(j)}|\hat{z}_{t-1,\pi(j)}, \hat{z}_{t,k})
        + \log p(\hat{\rvz}_{t,\backslash i,j}|\hat{\rvz}_{t-1,\backslash i,j}) \\
        =\;&
        \log p(z_{t,i}|z_{t-1,i})
        + \log p(z_{t,j}|z_{t-1,j}, z_{t,i})
        + \log p(\rvz_{t,\backslash i,j}|\rvz_{t-1,\backslash i,j}) + \log |\mathbf{J}_{h_z,t}|.
    \end{aligned}
    \end{equation}
    Sequentially, we further conduct the first-order derivative w.r.t $\hat{z}_{t,\pi(j)}$, so the aforementioned equation can be rewritten as follows:
    \begin{equation}
    \label{equ:25}
    \begin{split}
        \frac{\partial \log p(\hat{z}_{t,\pi(j)}|\hat{z}_{t-1,\pi(j)}, \hat{z}_{t,k})}{\partial \hat{z}_{t,\pi(j)}} = &\frac{\partial \log p(z_{t,i}|z_{t-1,i})}{\partial z_{t,i}}\cdot \frac{\partial z_{t,i}}{\partial \hat{z}_{t,\pi(j)}} + \frac{\partial \log p(z_{t,j}|z_{t-1,j}, z_{t,i})}{\partial z_{t,j}}\cdot\frac{\partial z_{t,j}}{\partial \hat{z}_{t,\pi(j)}} + \\&\sum_{z_{t,l}
        \in \rvz_{t,\backslash i,j}}\frac{\partial \log p(\rvz_{t,\backslash i,j}|\rvz_{t-1,\backslash i,j})}{\partial z_{t,l}}\cdot\frac{\partial z_{t,l}}{\partial \hat{z}_{t,\pi(j)}} + \frac{\partial |\mathbf{J}_{h_z,t}|}{\partial \hat{z}_{t,\pi(j)} }.
    \end{split}
    \end{equation}
    
    Since the intimate neighbor set of $z_{t,j}$ is empty, implying that $z_{t,j}$ is component-wise identifiable according to case (1), so $\frac{\partial z_{t,i}}{\partial \hat{z}_{t,\pi(j)}}=0$ and $\frac{\partial z_{t,l}}{\partial \hat{z}_{t,\pi(j)}}=0$. As a result, Equation (\ref{equ:25}) can be written as:
    \begin{equation}
        \frac{\partial \log p(\hat{z}_{t,\pi(j)}|\hat{z}_{t-1,\pi(j)}, \hat{z}_{t,k})}{\partial \hat{z}_{t,\pi(j)}} = \frac{\partial \log p(z_{t,j}|z_{t-1,j}, z_{t,i})}{\partial z_{t,j}}\cdot\frac{\partial z_{t,j}}{\partial \hat{z}_{t,\pi(j)}} + \frac{\partial |\mathbf{J}_{h_z,t}|}{\partial \hat{z}_{t,\pi(j)} }.
    \end{equation}
    
    Moreover, we further conduct the second-order derivative w.r.t $\hat{z}_{t-1,k}$, then we have:
    \begin{equation}
        0= \frac{\partial^2 \log p(z_{t,j}|z_{t-1,j}, z_{t,i})}{(\partial z_{t,j})^2}\cdot\frac{\partial z_{t,j}}{\partial \hat{z}_{t,\pi(j)}} \cdot\frac{\partial z_{t,j}}{\partial \hat{z}_{t-1,k}},
    \end{equation}
    which means that $\frac{\partial z_{t,j}}{\partial \hat{z}_{t,\pi(j)}} \cdot\frac{\partial z_{t,j}}{\hat{z}_{t-1,k}}=0$ with the sufficient variability of $\frac{\partial^2 \log p(z_{t,j}|z_{t-1,j}, z_{t,i})}{(\partial z_{t,j})^2}$. Since $\frac{\partial z_{t,j}}{\partial \hat{z}_{t,\pi(j)}} \cdot\frac{\partial z_{t,j}}{\partial \hat{z}_{t-1,k}}=\frac{\partial z_{t,j}}{\partial \hat{z}_{t,\pi(j)}} \cdot\frac{\partial z_{t,j}}{\partial \hat{z}_{t,k}}\cdot\frac{\partial \hat{z}_{t,k} }{{\partial \hat{z}_{t-1,k}}}$. 
    Moreover, $\frac{\partial z_{t,j}}{\partial \hat{z}_{t,\pi(j)}} \neq 0$ and $\frac{\partial \hat{z}_{t,k}} {\partial \hat{z}_{t-1,k}}\neq0$, then we have $\frac{\partial z_{t,j}}{\partial \hat{z}_{t,k}}=0$, 
    implying that $z_{t,j}$ is not the function of $\hat{z}_{t,k}$. Since we have $\hat{z}_{t,k}=h_k(z_{t,i}, z_{t,j})$, then we have $\hat{z}_{t,k}=h_k(z_{t,i})$, implying that $z_{t,i}$ is component-wise identifiable. 


    Thus, we have reached the conclusion that
    (ii) there exists a permutation $\pi$ of the estimated latent variables, such that $z_{t,i}$ and $\hat{z}_{t,\pi(i)}$ is one-to-one corresponding, i.e., $z_{t,i}$ is component-wise identifiable.
    
\end{proof}

\subsection{General Case for Component-wise Identifications}
\label{app: General Case for Component-wise Identifications}

In this part, we briefly give the proof for a more general case of our theorem.

\begin{corollary}
\label{coroll}
\textbf{(General Case for Component-wise Identification.)} 
Suppose that the observations are generated by Equation~(\ref{equ:g1})-(\ref{equ:g2}), and there exists \textcolor{black}{$\rvc_t=\{\rvz_{t-a},\cdots,\rvz_{t},\cdots,\rvz_{t+b}\}\in\mathbb{R}^{(a+b+1)\times n}$} with the corresponding Markov network $\mathcal{M}_{\rvc_t}$. Suppose assumptions A1 and A2 hold true, and for any $z_{t,i}\in \rvz_t$, the intimate neighbor set of $z_{t,i}$ is an empty set. 
When the observational equivalence is achieved with the minimal number of edges of estimated Markov network of $\mathcal{M}_{\hat{\rvc}}$, there must exist a permutation $\pi$ of the estimated latent variables, such that $z_{t,i}$ and $\hat{z}_{t,\pi(i)}$ is one-to-one corresponding, i.e., $z_{t,i}$ is component-wise identifiable.
\end{corollary}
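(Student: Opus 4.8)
The plan is to reuse the two-step architecture of the proof of Theorem~\ref{app: Th2} almost verbatim, adapting it in two ways: widening the context from three timestamps to the arbitrary window $\rvc_t=\{\rvz_{t-a},\dots,\rvz_t,\dots,\rvz_{t+b}\}$ by invoking the extended sufficient-variability assumption of Appendix~\ref{app: Extension to Multiple Lags and Sequence Lengths}, and specializing to the single regime in which every $z_{t,i}\in\rvz_t$ has an empty intimate neighbor set, so that the fallback ``direct child'' case of A3 is never needed. As a preliminary step I would re-derive the block-diagonal structure of the transformation: because the mixing $\rvx_s=g(\rvz_s)=\hat g(\hat\rvz_s)$ is per-timestamp, $h_z:=g^{-1}\circ\hat g$ acts identically at each timestamp and the Jacobian $\mathbf{J}_{h_c,t}$ is block-diagonal across time, so $\rvz_s=h_z(\hat\rvz_s)$ for every $s$ in the window. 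Running the extended Theorem~\ref{app: Th1} then yields, for every pair of entries of $\hat\rvc_t$ that are non-adjacent in $\mathcal{M}_{\hat\rvc_t}$, the two derivative identities~(\ref{eq: same_c_to_different_hatc}) and~(\ref{eq: different_c_to_different_hatc}), now valid across timestamps in the window.

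Step one (Markov-network isomorphism) I would carry out exactly as in Theorem~\ref{app: Th2}. The Leibniz-formula / invertibility argument produces, at each timestamp $s$, a permutation $\pi_s$ with non-vanishing diagonal derivatives, i.e. $\frac{\partial z_{s,i}}{\partial \hat z_{s,\pi_s(i)}}\neq0$ as in~(\ref{eq: z_to_z_permute}). If some edge $(z_{s,i},z_{s',j})$ of $\mathcal{M}_{\rvc_t}$ were absent from the permuted estimated graph, identity~(\ref{eq: different_c_to_different_hatc}) would force $\frac{\partial z_{s,i}}{\partial \hat z_{s,\pi_s(i)}}\cdot\frac{\partial z_{s',j}}{\partial \hat z_{s',\pi_{s'}(j)}}=0$, contradicting the construction of $\pi_s,\pi_{s'}$; hence every true edge survives. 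Since the true process attains observational equivalence, it is itself a minimal-edge solution, and minimality forces the permuted estimated graph to coincide with $\mathcal{M}_{\rvc_t}$, giving the isomorphism.

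Step two (component-wise identification) uses the empty-intimate-set hypothesis. Cross-time mixing is ruled out immediately by block-diagonality: $z_{t,i}$ is the $i$-th component of $h_z(\hat\rvz_t)$ and so cannot depend on any $\hat z_{s,\cdot}$ with $s\neq t$. For within-timestamp mixing I fix $z_{t,i}$ and any $j\neq i$ and show $\frac{\partial z_{t,i}}{\partial \hat z_{t,\pi_t(j)}}=0$. If $z_{t,i}$ and $z_{t,j}$ are non-adjacent, the isomorphic estimated pair is non-adjacent too and~(\ref{eq: same_c_to_different_hatc}) together with $\frac{\partial z_{t,i}}{\partial \hat z_{t,\pi_t(i)}}\neq0$ gives the vanishing. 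If they are adjacent, emptiness of $\Psi_{\mathcal{M}_{\rvc_t}}(z_{t,i})$ guarantees a neighbor $z_{t',k}$ of $z_{t,i}$ that is not adjacent to $z_{t,j}$; then $(\hat z_{t',\pi_{t'}(k)},\hat z_{t,\pi_t(j)})$ is non-adjacent, and~(\ref{eq: different_c_to_different_hatc}) applied to the adjacent true pair $(z_{t',k},z_{t,i})$ yields $\frac{\partial z_{t',k}}{\partial \hat z_{t',\pi_{t'}(k)}}\cdot\frac{\partial z_{t,i}}{\partial \hat z_{t,\pi_t(j)}}=0$, whence the desired derivative vanishes. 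Thus $z_{t,i}$ depends only on $\hat z_{t,\pi_t(i)}$, which is exactly component-wise identifiability.

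The hard part will be making the cross-timestamp bookkeeping rigorous: the separating neighbor $z_{t',k}$ may sit at a future or past timestamp, so identity~(\ref{eq: different_c_to_different_hatc}) must be known to hold for pairs spanning different timestamps of the window, not merely within $\rvz_t$. This is precisely what the enlarged assumption A2 must deliver, and I would verify that the modified vector functions of Appendix~\ref{app: Extension to Multiple Lags and Sequence Lengths} supply $2n(a+b+1)+|\mathcal{M}_{\rvc_t}|$ linearly independent equations, so the resulting linear system again admits only the trivial solution. A secondary point to confirm is that emptiness of the intimate neighbor set over the \emph{full} context graph $\mathcal{M}_{\rvc_t}$ (rather than over $\rvz_t$ in isolation) is the correct hypothesis guaranteeing, for each neighbor $z_{t,j}$, the existence of such a separating neighbor.
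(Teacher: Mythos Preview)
Your proposal is correct and follows essentially the same route as the paper. The paper's own proof is a one-line reference back to Theorem~\ref{app: Th2}, remarking only that with a different subsequence the variables supplying the separating neighbors may differ; you have spelled out precisely that adaptation---block-diagonality of $h_c$, the widened A2 from Appendix~\ref{app: Extension to Multiple Lags and Sequence Lengths}, and the specialization to the single empty-intimate-set regime---and correctly flagged the cross-timestamp validity of~(\ref{eq: different_c_to_different_hatc}) as the point requiring care.
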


\begin{proof}
    The proof is similar to that of Theorem~\ref{app: Th2}. The only difference is that given a different subsequence, the variables that are used to make intimate neighbors empty might be different. The rest part of the theorem remains the same.
\end{proof}

Here we further discuss the idea behind the Sparse Latent Process. For two latent variables $z_{t,i},z_{t,j}$ that are entangled at some certain timestamp, the contextual information can be utilized to recover these variables. Intuitively speaking, when $z_{t,i}$ is directly affected by some previous variable, says $z_{t-1,k}$, while $z_{t,j}$ is not. In this case, the changes that happen on $z_{t-1,k}$ can be captured, which helps to tell $z_{t,i}$ from $z_{t,j}$. Similarly, if $z_{t,i}$ directly affects $z_{t+1,k}$ while $z_{t,j}$ does not, we can distinguish $z_{t,i}$ from $z_{t,j}$ as well. When all variables are naturally conditionally independent, no contextual information will be needed. One more thing to note is that, even though the sparse latent process is not fully satisfied, as long as some structures mentioned above exist, the corresponding entanglement can be prevented.

\begin{figure}
    \centering
    \includegraphics[width=0.8\linewidth]{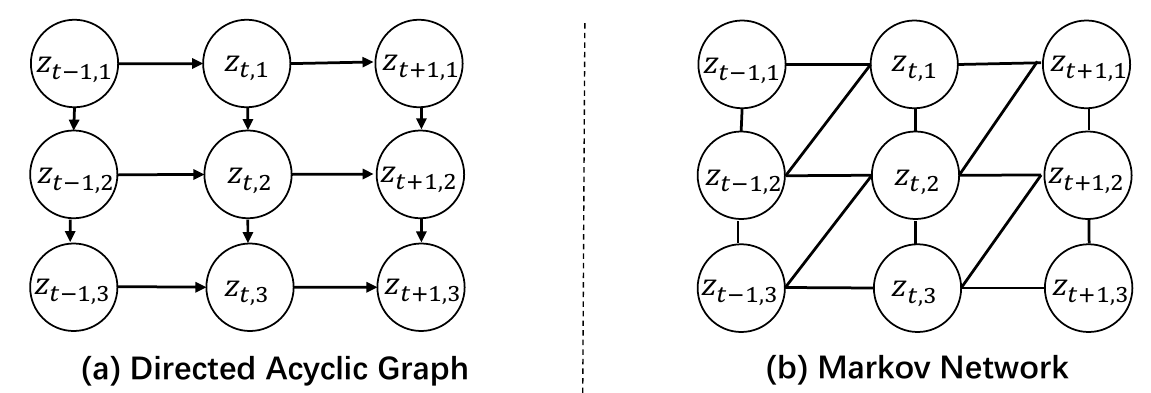}
    \caption{An example of DAG (a) and Markov Network (b).}
    \label{fig: mn and dag}
\end{figure}

\subsection{\textcolor{black}{Identifiability of Latent Causal Process}}
\label{app:identify_causal_struct}
\textcolor{black}{Building on the results of Theorem~\ref{Th2}, the latent variables are component-wise identifiable, which directly implies the identifiability of the latent causal process up to the Markov equivalence class. Leveraging temporal information allows one to further refine this identification beyond the equivalence class. If each latent variable $z_t$ has at least one temporal ancestor, one can establish full identifiability of the graph over the latent processes. }

\textcolor{black}{
\begin{theorem}
\textbf{(Identification of Latent Causal Process.)}
\label{th_app: identification_of_process}
Suppose that the observations are generated by Equation~(\ref{equ:g1})-(\ref{equ:g2}), and that $\mathcal{M}_{\rvc_t}$ is the Markov network over $\rvc_t=\{\rvz_{t-1},\rvz_{t}\}\in\mathbb{R}^{2n}$. Suppose that all assumptions for Theorem~\ref{Th2} hold. We further make the following assumption: for any pair of adjacent latent variables $z_{t,i}, z_{t,j}$ at time step $t$, their time-delayed parents are not identical, i.e., $\text{Pa}_d(z_{t,i})\not=\text{Pa}_d(z_{t,j})$.
Then the causal graph of the latent causal process is identifiable.
\end{theorem}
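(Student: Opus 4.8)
The plan is to recast this as a constraint-based causal-discovery problem on the latent process, which Theorem~\ref{Th2} makes accessible. Since the latent variables are component-wise identifiable, one recovers $(\rvz_{t-1},\rvz_t)$ up to a permutation and coordinate-wise invertible reparametrizations, and such maps leave every conditional-independence statement intact; moreover Theorem~\ref{Th2}(ii) pins down which recovered coordinate corresponds to which ground-truth coordinate. Hence the entire conditional-independence structure of the latent process over the window $\rvc_t=\{\rvz_{t-1},\rvz_t\}$ is identified, and by the Markov property ($\rvz_t$ is independent of everything before $t-1$ given $\rvz_{t-1}$) together with stationarity, the conditional-independence queries needed to fix the edges incident to $\rvz_t$ are insensitive to the unmodeled pre-$(t-1)$ history, so the two-slice window suffices. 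From there the argument proceeds in three stages: recover the skeleton, orient the time-delayed edges, orient the instantaneous edges.

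First I would recover the skeleton of the latent causal DAG. Theorem~\ref{Th2}(i) already delivers the Markov (moral) network $\mathcal{M}_{\hat\rvc_t}\cong\mathcal{M}_{\rvc_t}$ with labelled vertices; starting from this moral graph, a pair of latent variables is truly adjacent in the DAG exactly when it cannot be separated by any subset of the remaining variables in the window, so the moralizing edges are precisely those admitting a separating set. This strips the moral graph down to the true skeleton of both the time-delayed and the instantaneous edges. Second, I would orient every cross-slice edge by temporal precedence: an edge between $\rvz_{t-1}$ and $\rvz_t$ can only point forward in time, so all time-delayed edges are directed immediately; in particular every $z_{t-1,k}\to z_{t,i}$ with $z_{t-1,k}\in\mathrm{Pa}_d(z_{t,i})$ is now oriented.

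The third and decisive stage is to orient the instantaneous edges among $\rvz_t$. I would first run the usual v-structure detection and Meek propagation on the identified CPDAG, which already directs some instantaneous edges. For any instantaneous edge $z_{t,i}-z_{t,j}$ that is still undirected, I would use the new assumption $\mathrm{Pa}_d(z_{t,i})\neq\mathrm{Pa}_d(z_{t,j})$ and choose, say, $z_{t-1,k}\in\mathrm{Pa}_d(z_{t,i})\setminus\mathrm{Pa}_d(z_{t,j})$. Then $z_{t-1,k}\to z_{t,i}$ is a directed edge from Stage~2, and $z_{t-1,k}$ and $z_{t,j}$ are non-adjacent in the skeleton (there is no backward-in-time edge, and $z_{t-1,k}$ is not a parent of $z_{t,j}$), so $(z_{t-1,k},z_{t,i},z_{t,j})$ is an unshielded triple with middle node $z_{t,i}$. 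If $z_{t,i}$ were a collider on this triple, the v-structure step would already have forced $z_{t,j}\to z_{t,i}$, contradicting that the edge is undirected; hence $z_{t,i}$ is a non-collider there, and Meek's rule~1 orients $z_{t,i}\to z_{t,j}$. Since the assumption applies to every adjacent instantaneous pair, all instantaneous edges get oriented, and combined with Stage~2 the whole latent causal DAG is recovered.

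The main obstacle I anticipate is making Stage~3 fully rigorous. One must verify that the separating sets used in the unshielded-triple argument can be taken inside the two-slice window (this is exactly where the Markov property enters), that the two regimes — the edge already oriented by a v-structure or a Meek rule versus the edge oriented via the distinct-parents assumption — are exhaustive and never in conflict, and that the final orientation is acyclic and lies in the correct Markov equivalence class. One also leans, as elsewhere in the paper, on the minimal-edge/faithfulness-type regularity that makes the conditional-independence constraints informative. Stages~1 and~2 are essentially bookkeeping once Theorem~\ref{Th2} is available; the subtlety concentrates entirely in the instantaneous-edge orientation.
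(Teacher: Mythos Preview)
Your proposal is correct and follows essentially the same route as the paper: recover the skeleton from the conditional-independence structure granted by Theorem~\ref{Th2}, orient the cross-slice edges by temporal precedence, and orient each instantaneous edge via the unshielded triple created by the distinguishing time-delayed parent. The only cosmetic difference is that you package Stage~3 as ``v-structure detection plus Meek rule~1'' while the paper does the equivalent direct two-case analysis (chain versus collider at the middle node); the substance is the same, and the technical caveats you flag (faithfulness, separating sets within the two-slice window) are exactly the ones the paper leaves implicit.
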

}

\begin{proof}
\color{black}
    Since all assumptions for Theorem~\ref{Th2} hold, latent variable $\rvz_t$ is component-wise identifiable, i.e., there exists a permutation $\pi$ and invertible functions $h_i$ such that $z_{t,i}=h_i(\hat{z}_{t,\pi(i)})$. As $\rvc_t$ is nothing but a concatenation of $\rvz_t$ from different time steps, $\rvc_t$ is also component-wise identifiable with corresponding $\pi'$ and $h_i'$. Therefore, their conditional independence relationships are consistent: 
    \begin{equation}
        c_{t,i} \perp c_{t,j} \mid \{c_{t,k}\mid k\in S\} 
         \Rightarrow 
        \hat{c}_{t,\pi'(i)} \perp \hat{c}_{t,\pi'(j)} \mid \{\hat{c}_{t,\pi'(k)}\mid k\in S\}
    \end{equation}
    for all $S\subseteq \{1,2,\cdots,2n\}\backslash \{i,j\}$. In this way, we transform the problem of causal discovery for latent variables into a problem of causal discovery for observable variables.

    We first prove that the skeleton of the estimated causal graph is identical to the true skeleton.

    \begin{itemize}
        \item On the one hand, if $c_{t,i}$ and $c_{t,j}$ are not adjacent, there exists a d-separation set $S_d$ such that 
        $c_{t,i} \perp c_{t,j} \mid \{c_{t,k}\mid k\in S_d\}$. Meanwhile, we have
        $\hat{c}_{t,\pi'(i)} \perp \hat{c}_{t,\pi'(j)} \mid \{\hat{c}_{t,\pi'(k)}\mid k\in S_d\}$. Thus, $\hat{c}_{t,\pi'(i)}$ and $\hat{c}_{t,\pi'(j)}$ are not adjacent as well.

    \item  On the other hand, if $c_{t,i}$ and $c_{t,j}$ are adjacent, there will be no d-separation set for $\hat{c}_{t,\pi'(i)}, \hat{c}_{t,\pi'(j)}$, meaning that $\hat{c}_{t,\pi'(i)}$ and $\hat{c}_{t,\pi'(j)}$ are adjacent in this case.
    
    \end{itemize}

    In conclusion, the skeleton is identifiable.

    Then we prove that the time-delayed edges can be identified. Consider any $c_{t,i}$ from $\rvz_{t-1}$ and $c_{t,j}$ from $\rvz_{t}$. Similarly, we have $\hat{c}_{t,\pi'(i)}$ from $\hat{\rvz}_{t-1}$ and $\hat{c}_{t,\pi'(j)}$ from $\hat{\rvz}_{t}$. Since the skeleton is identified and the direction is implied by temporal information, specifically going from time step $t-1$ to $t$, there exists an edge from $\hat{c}_{t,\pi'(i)}$ to $\hat{c}_{t,\pi'(j)}$ if and only if $c_{t,i}$ points to $c_{t,j}$. Thus the temporally latent causal process can be identified.

    Finally, we determine the direction of instantaneous edges. Since the skeleton is already identifiable, we only need to determine the direction. Consider any pair of $c_{t,i}$ and $c_{t,j}$ from $\rvz_{t}$, where $c_{t,i}\rightarrow c_{t,j}$. Since $\text{Pa}_d(c_{t,i})\not=\text{Pa}_d(c_{t,j})$, there exists at least one $c_{t,k}$ from $\rvz_{t-1}$ such that $c_{t,k}$ points to exactly one of $c_{t,i}$ or $c_{t,j}$. We proceed with a case-by-case analysis.
    \begin{itemize}
        \item In case $1$, we have $c_{t,k}\rightarrow c_{t,i}\rightarrow c_{t,j}$ where $c_{t,k}, c_{t,j}$ are not adjacent, and $c_{t,i}$ is in the d-seperation of $c_{t,k}$ and $c_{t,j}$. Meanwhile, in the estimated skeleton, $\hat{c}_{t,\pi'(i)}$ is also in the d-seperation of  $\hat{c}_{t,\pi'(k)}$ and  $\hat{c}_{t,\pi'(j)}$. Since the direction of time-delayed edge $\hat{c}_{t,\pi'(k)}\rightarrow\hat{c}_{t,\pi'(i)}$ is the known in the estimated skeleton $\hat{c}_{t,\pi'(k)}-\hat{c}_{t,\pi'(i)}-\hat{c}_{t,\pi'(j)}$, we deduce $\hat{c}_{t,\pi'(i)}\rightarrow\hat{c}_{t,\pi'(j)}$ based on the d-seperation.
        \item In case $2$, we have $c_{t,i}\rightarrow c_{t,j}\leftarrow c_{t,k}$ where $c_{t,k}, c_{t,i}$ are not adjacent. Similarly, we have that $\hat{c}_{t,\pi'(j)}$ is not in the d-seperation of  $\hat{c}_{t,\pi'(k)}$ and  $\hat{c}_{t,\pi'(i)}$, and it can be shown that $\hat{c}_{t,\pi'(i)}\rightarrow\hat{c}_{t,\pi'(j)}$.
    \end{itemize} 

    In conclusion, the identifiability of the instantaneous latent causal graph is established. Consequently, the entire latent causal process, encompassing both the time-delayed and instantaneous components, is identifiable up to the causal graph.
    
\end{proof}

\textcolor{black}{
\textbf{Discussion}: To further illustrate this Corollary, we use dataset A as an example, whose causal graph is shown in Figure~\ref{fig:data_a_causal_graph}. Since the skeleton and directions of time-delayed edges are straightforward to determine, we primarily focus on analyzing the directions of instantaneous edges within $\rvz_t$. Since $z_{t-1,3}\rightarrow z_{t,3}\leftarrow z_{t,2}$ is a v-structure, $z_{t,3}\leftarrow z_{t,2}$ can be determined. Since $z_{t-1,1}\rightarrow z_{t,1}\rightarrow z_{t,2}$ is a chain, $z_{t-1,1}$ and $z_{t,2}$ are not adjacent, and $z_{t-1,1}\rightarrow z_{t,1}$ is known, we have $z_{t,1}\rightarrow z_{t,2}$. Thus, the causal graph is identifiable. As shown in Figure~\ref{fig:causal_graph}, the model can learn the true causal graph.
}

\begin{figure}
\color{black}
    \centering
    \includegraphics[width=0.2\linewidth]{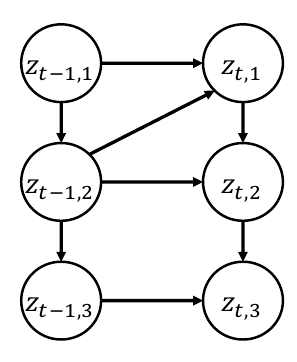}
    \caption{The example of dataset A whose causal graph is identifiable.}
    \label{fig:data_a_causal_graph}
\end{figure}

\subsection{Markov Network}
\label{app: Markov Network}
A Markov network (or Markov random field) is a graphical model that represents the joint distribution of a set of random variables using an undirected graph.
\begin{definition}[Markov Network]
    Markov network is an undirected graph $G=(V,E)$ with a set of random variables $X_{v\in V}$, where any two non-adjacent variables are conditionally independent given all other variables. That is,
    \begin{equation}
        X_a \perp X_b | X_{V\backslash\{a,b\}},\quad \forall (a,b)\notin E.
    \end{equation}
\end{definition}
Markov Networks and Directed Acyclic Graphs (DAGs) are both graphical models employed to represent joint distributions and to illustrate conditional independence properties. As shown in Figure~\ref{fig: mn and dag}, both of them are utilized to describe the latent causal process, yet they do not have to be equivalent.

\subsection{\textcolor{black}{Isomorphism of Markov networks}}
\label{app: Isomorphism}
\begin{definition}[\textcolor{black}{Isomorphism of Markov networks}] \textcolor{black}{We let the $V(\cdot)$ be the vertical set of any graphs, an isomorphism of Markov networks $M$ and $\hat{M}$ is a bijection between the vertex sets of $M$ and $\hat{M}$} 
\begin{equation}\nonumber
\color{black}
    f:V(M)\rightarrow V(\hat{M})
\end{equation}
\textcolor{black}{such that any two vertices $u$ and $v$ of $M$ are adjacent in $G$ if and only if $f(u)$ and $f(v)$ are adjacent in $\hat{M}$.}
\end{definition}

\subsection{Illustration of Intimate Neighbor Set}
\label{app: Intimate Neighbor Set}

Here, we provide an example for a better understanding of the Intimate Neighbor Set. Take Figure~\ref{fig: mn and dag}(b) as an example. If we consider only one time step, says, $\mathbf{z}_t$, $\{z_{t,1},z_{t,2},z_{t,3}\}$ forms a clique. Thus we have $\Psi(z_{t,1})=\{z_{t,2},z_{t,3}\},$ since $z_{t,2}$ is adjacent to $z_{t,1}$ and all other neighbours of $z_{t,1}$, i.e., $z_{t,3}$.  Similarily, we have $ \Psi(z_{t,1})=\{z_{t,1},z_{t,3}\}, \Psi(z_{t,1})=\{z_{t,1},z_{t,2}\}$ as well. In this case, none of them is identifiable. In contrast, if we take all 3 time steps into consideration, the Intimate set of all latent varibles becomes empty. For example, $z_{t,2}$ is adjacent to $z_{t,1}$ but not adjacent to at least one neighbour of $z_{t,1}$, i.e., $z_{t-1,1},$ thus $z_{t,2}\not\in\Psi(z_{t,1}).$ Similarly, do this test for all other pairs variables, and the conclusion can be achieved.


\section{Discussion of Assumptions}
\label{app:assumption_discussion}

To enhance understanding of our theoretical results, we provide some explanations of the assumptions, their connections to real-world scenarios, as well as the potential boundary of theoretical results.

First, The smooth and positive density assumption is standard in the literature on nonlinear ICA \citep{khemakhem2020variational, yao2022temporally, kong2022partial}, meaning that the latent variables $\rvz_t$ change continuously based on historical information. For instance, in a weather dataset, temperature varies smoothly over time. However, this assumption may be violated if we cannot fully capture the transition probabilities from the observations. To mitigate this, we can sample a larger dataset to better estimate the latent causal process.



Second, the sufficient variability assumption has also been commonly adopted for the existing identifiable results of temporally causal representation learning \citep{yao2021learning,yao2022temporally,chen2024caring}. This assumption describes the changeability of latent variables. Take human motion forecasting as an example, the latent variable may represent joint location at different time steps. The linear independence of the latent variables means that the changes of each joint cannot be linearly represented by others. Besides, while the sufficiency assumption is foundational to the theory of identifiability, it is not excessively restrictive. Even in cases where this assumption is not fully satisfied, it is still possible to achieve a degree of subspace identifiability \citep{kong2022partial}.

Finally, the sparse latent process is the key assumption of our method, which is common in real-world scenarios. Intuitively, in human motion forecasting, the joints of humans can be considered as latent variables. Since there are few connections among the joints of the human body, the latent process described by the skeleton motion trajectory is sparse. Even though the sparsity assumption is not completely satisfied, we can still attain a subspace level of identifiability \citep{li2024subspace}. In this case, each true variable can be a function of, at most, an estimated version of its corresponding variable and those within the intimate set. \textcolor{black}{Let us provide a simple example here. In a video of a moving car, it might be hard to have individual identifiability of the separate car wheels and car body; however, they can be considered as essential parts of the macro variable 'car'. This macro representation might be sufficient for the purpose of modeling the interactions between the car and other objects.}

\section{The importance of considering instantaneous dependence}
\label{app:instantaneous_motivation}
Here, we provide an example as shown in Figure~\ref{app:motivation} to show why instantaneous dependence is important in time-series modeling. As shown in Figure \ref{app:motivation} (a), without considering the instantaneous dependencies of a knee and an ankle, an abnormal motion might be generated, where the leg is bent at a distorted angle. Meanwhile, by taking the instantaneous dependencies into account as shown in Figure \ref{app:motivation} (b), the predicted motion complies with human physiological structure.

\begin{figure}[t]
    \centering
    \resizebox{0.7\columnwidth}{!}{%
    \includegraphics[]{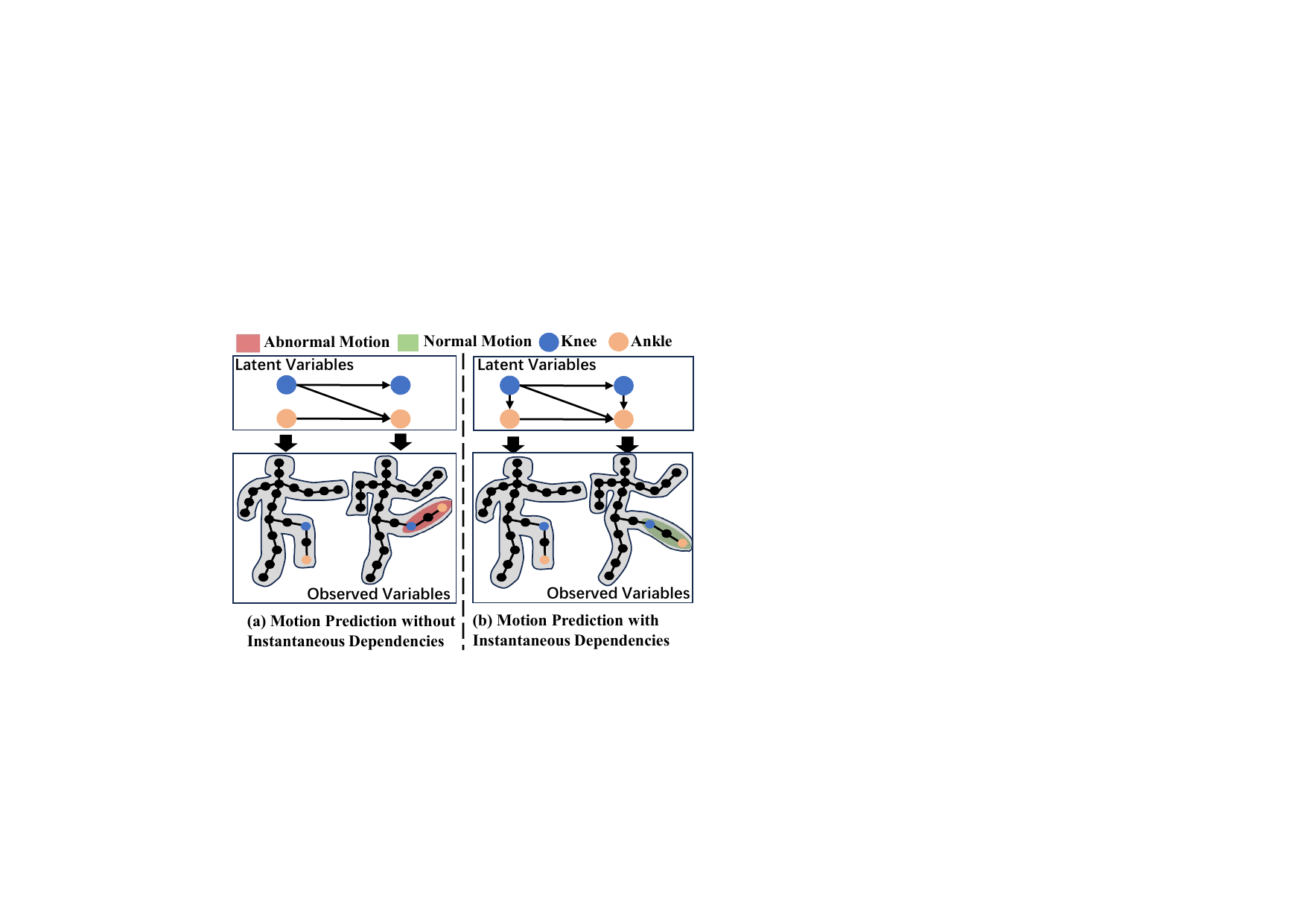}
    }
    \caption{An example of human motion forecasting, where joints can be considered as latent variables and the latent skeleton decides the motion.}
     \label{app:motivation}
\end{figure}

\section{Implementation Details}\label{app:exp_detail2}
\subsection{Prior Likelihood Derivation}\label{app:prior_drivation}
We first consider the prior of $\ln p(\rvz_{1:t} )$. We start with an illustrative example of stationary latent causal processes with two time-delay latent variables, i.e. $\rvz _t=[z _{t,1}, z _{t,2}]$ with maximum time lag $L=1$, i.e., $z_{t,i} =f_i(\rvz_{t-1} , \epsilon_{t,i} )$ with mutually independent noises. Then we write this latent process as a transformation map $\mathbf{f}$ (note that we overload the notation $f$ for transition functions and for the transformation map):
    \begin{equation}
\begin{gathered}\nonumber
    \begin{bmatrix}
    \begin{array}{c}
        z_{t-1,1}  \\ 
        z_{t-1,2}  \\
        z_{t,1}    \\
        z_{t,2} 
    \end{array}
    \end{bmatrix}=\mathbf{f}\left(
    \begin{bmatrix}
    \begin{array}{c}
        z_{t-1,1}  \\ 
        z_{t-1,2}  \\
        \epsilon_{t,1}    \\
        \epsilon_{t,2} 
    \end{array}
    \end{bmatrix}\right).
\end{gathered}
\end{equation}
By applying the change of variables formula to the map $\mathbf{f}$, we can evaluate the joint distribution of the latent variables $p(z_{t-1,1} ,z_{t-1,2} ,z_{t,1} , z_{t,2} )$ as 
\begin{equation}
\small
\label{equ:p1}
    p(z_{t-1,1} ,z_{t-1,2} ,z_{t,1} , z_{t,2} )=\frac{p(z_{t-1,1} , z_{t-1,2} , \epsilon_{t,1} , \epsilon_{t,2} )}{|\text{det }\mathbf{J}_{\mathbf{f}}|},
\end{equation}
where $\mathbf{J}_{\mathbf{f}}$ is the Jacobian matrix of the map $\mathbf{f}$, where the instantaneous dependencies are assumed to be a low-triangular matrix:
\begin{equation}
\small
\begin{gathered}\nonumber
    \mathbf{J}_{\mathbf{f}}=\begin{bmatrix}
    \begin{array}{cccc}
        1 & 0 & 0 & 0 \\
        0 & 1 & 0 & 0 \\
        \frac{\partial z_{t,1} }{\partial z_{t-1,1} } & \frac{\partial z_{t,1} }{\partial z_{t-1,2} } & 
        \frac{\partial z_{t,1} }{\partial \epsilon_{t,1} } & 0 \\
        \frac{\partial z_{t,2} }{\partial z_{t-1, 1} } &\frac{\partial z_{t,2} }{\partial z_{t-1,2} } & \frac{\partial z_{t,2}}{\partial \epsilon_{t,1}} & \frac{\partial z_{t,2} }{\partial \epsilon_{t,2} }
    \end{array}
    \end{bmatrix}.
\end{gathered}
\end{equation}
Given that this Jacobian is triangular, we can efficiently compute its determinant as $\prod_i \frac{\partial z_{t,i} }{\epsilon_{t,i} }$. Furthermore, because the noise terms are mutually independent, and hence $\epsilon_{t,i}  \perp \epsilon_{t,j} $ for $j\neq i$ and $\epsilon_{t}  \perp \rvz_{t-1} $, so we can with the RHS of Equation (\ref{equ:p1}) as follows
\begin{equation}
\small
\label{equ:p2}
\begin{split}
    p(z_{t-1,1} , z_{t-1,2} , z_{t,1} , z_{t,2} )=p(z_{t-1,1} , z_{t-1,2} ) \times \frac{p(\epsilon_{t,1} , \epsilon_{t,2} )}{|\mathbf{J}_{\mathbf{f}}|}=p(z_{t-1,1} , z_{t-1,2} ) \times \frac{\prod_i p(\epsilon_{t,i} )}{|\mathbf{J}_{\mathbf{f}}|}.
\end{split}
\end{equation}
Finally, we generalize this example and derive the prior likelihood below. Let $\{r_i \}_{i=1,2,3,\cdots}$ be a set of learned inverse transition functions that take the estimated latent causal variables, and output the noise terms, i.e., $\hat{\epsilon}_{t,i} =r_i (\hat{z}_{t,i} , \{ \hat{\rvz}_{t-\tau} \})$. Then we design a transformation $\mathbf{A}\rightarrow \mathbf{B}$ with low-triangular Jacobian as follows:
\begin{equation}
\small
\begin{gathered}
    \underbrace{[\hat{\rvz}_{t-L} ,\cdots,{\hat{\rvz}}_{t-1} ,{\hat{\rvz}}_{t} ]^{\top}}_{\mathbf{A}} \text{  mapped to  } \underbrace{[{\hat{\rvz}}_{t-L} ,\cdots,{\hat{\rvz}}_{t-1} ,{\hat{\epsilon}}_{t,i} ]^{\top}}_{\mathbf{B}}, \text{ with } \mathbf{J}_{\mathbf{A}\rightarrow\mathbf{B}}=
    \begin{bmatrix}
    \begin{array}{cc}
        \mathbb{I}_{n_s\times L} & 0\\
                    * & \text{diag}\left(\frac{\partial r _{i,j}}{\partial {\hat{z}} _{t,j}}\right)
    \end{array}
    \end{bmatrix}.
\end{gathered}
\end{equation}
Similar to Equation (\ref{equ:p2}), we can obtain the joint distribution of the estimated dynamics subspace as:
\begin{equation}
    \log p(\mathbf{A})=\underbrace{\log p({\hat{\rvz}} _{t-L},\cdots, {\hat{\rvz}} _{t-1}) + \sum^{n_s}_{i=1}\log p({\hat{\epsilon}} _{t,i})}_{\text{Because of mutually independent noise assumption}}+\log (|\text{det}(\mathbf{J}_{\mathbf{A}\rightarrow\mathbf{B}})|)
\end{equation}
Finally, we have:
\begin{equation}
\small
    \log p({\hat{\rvz}}_t |\{{\hat{\rvz}}_{t-\tau} \}_{\tau=1}^L)=\sum_{i=1}^{n_s}p({\hat{\epsilon}_{t,i} }) + \sum_{i=1}^{n_s}\log |\frac{\partial r _i}{\partial {\hat{z}} _{t,i}}|
\end{equation} 
Since the prior of $p(\hat{\rvz}_{t+1:T} |\hat{\rvz}_{1:t} )=\prod_{i=t+1}^{T} p(\hat{\rvz}_{i} |\hat{\rvz}_{i-1} )$ with the assumption of first-order Markov assumption, we can estimate $p(\hat{\rvz}_{t+1:T} |\hat{\rvz}_{1:t} )$ in a similar way.
\subsection{Evident Lower Bound}\label{app:elbo}
In this subsection, we show the evident lower bound. We first factorize the conditional distribution according to the Bayes theorem.
\begin{equation}
\begin{split}
    \ln p(\rvx_{1:T})&=\ln\frac{ p(\rvx_{1:T},\rvz_{1:T})}{p(\rvz_{1:T}|\rvx_{1:T})}=\mathbb{E}_{q(\rvz_{1:T}|\rvx_{1:T})}\ln\frac{ p(\rvx_{1:T},\rvz_{1:T})q(\rvz_{1:T}|\rvx_{1:T})}{p(\rvz_{1:T}|\rvx_{1:T})q(\rvz_{1:T}|\rvx_{1:T})}\\&\geq \underbrace{\mathbb{E}_{q(\rvz_{1:T}|\rvx_{1:T})}\ln p(\rvx_{1:T|}|\rvz_{1:T})}_{L_r}- \underbrace{D_{KL}(q(\rvz_{1:T}|\rvx_{1:T})||p(\rvz_{1:T}))}_{L_{KLD}}=ELBO
\end{split}
\end{equation}

As for the time-series forecasting task, we let $\rvx_{1:t}$ and $\rvx_{t+1:T}$ be the historical and future observed variables, then the ELBO can be further derived as follows:
\begin{equation}
\label{equ:forecast_elbo}
\begin{split}
&\ln p(\rvx_{t+1:T},\rvx_{1:t})=\ln\frac{p(\rvx_{t+1:T},\rvz_{1:T},\rvx_{1:t})}{p(\rvz_{1:T}|\rvx_{1:t},\rvx_{t+1:T})}=\ln\frac{p(\rvx_{t+1:T},\rvz_{1:t},\rvz_{t+1:T},\rvx_{1:t})}{p(\rvz_{1:T}|\rvx_{1:T})}
\\\\\geq&
\mathbb{E}_{q(\rvz_{1:T}|\rvx_{1:t})}\ln \frac{p(\rvx_{t+1:T}|\rvz_{t+1:T})p(\rvx_{1:t}|\rvz_{1:t})p(\rvz_{1:T})}{q(\rvz_{1:T}|\rvx_{1:t})}\\\\=&
\underbrace{\mathbb{E}_{q(\rvz_{1:T}|\rvx_{1:t})} \ln p(\rvx_{1:t}|\rvz_{1:T})}_{\mathcal{L}_{r}} + 
\underbrace{\mathbb{E}_{q(\rvz_{1:T}|\rvx_{1:t})}\ln p(\rvx_{t+1:T}|\rvz_{1:T})}_{\mathcal{L}_{pre}} - \underbrace{D_{KL}(q(\rvz_{1:T}|\rvx_{1:t})||p(\rvz_{1:T}))}_{\mathcal{L}_{KLD}},
\end{split}
\end{equation}
where $\mathcal{L}_r$ denotes the reconstruct loss of historical data, $\mathcal{L}_{pre}$ denotes the forecasting loss of future data.

\subsection{Model Details}\label{app:imple}
We choose MICN \cite{wang2022micn} as the encoder backbone of our model on real-world datasets. Specifically, given the MICN extracts the hidden feature, we apply a variational inference block and then an MLP-based decoder. Architecture details of the proposed method are shown in Table \ref{tab:imple}. 


%

\subsubsection{Reproducibility of Simulation Experiments.} 
\label{app: implementation_detail_for_synthetic}
For the implementation of baseline models, we utilized publicly released code for TDRL and iCRITIS. However, since the author did not release the code for G-CaRL, we implemented it ourselves based on the paper. It is important to note that the original code of iCRITIS only accepts images as input. To adapt it to our needs, we replaced the encoder and decoder with a Variational Autoencoder, with the same hyperparameters used in IDOL. Our code is modified based on the code of TDRL, and shared hyperparameters remain the same.

When calculating the MCC for all methods, we use the mean value from the VAE encoder. Regarding the latent causal graph, we employ different approaches depending on the method. For IDOL, TDRL, and G-CaRL, we utilize the Jacobian matrix as a proxy for capturing the causal process. On the other hand, for iCRITIS, we rely on the "get\_adj\_matrix()" function provided in its original code to obtain the latent causal graph.



\subsubsection{Reproducibility of Real-world Experiments.} The model details of our method are shown in Table \ref{tab:imple}. For a fair comparison, we employ the official implementations and use the default hyperparameters. Since our model is modified from the official implementations of TDRL, both our model and the baselines share the same hyperparameters. To achieve the results from the well-fit baselines, we employed the default hyper-parameters and tried different values of learning rate for the best models. Please refer to Table \ref{app:tdrl} to \ref{app:FITS} for the implementation details of the baselines.

\begin{table}[t]
\centering
\small
\caption{Architecture details. $T$, length of time series. $|\rvx_t|$: input dimension. $n$: latent dimension. LeakyReLU: Leaky Rectified Linear Unit. Tanh:  Hyperbolic tangent function.
}
{%
\begin{tabular}{@{}l|l|l@{}}
\toprule
Configuration   & Description             & Output                    \\ \midrule\midrule
$\phi$              & Latent Variable Encoder &                           \\ \midrule\midrule
Input:$\rvx_{1:t}$     & Observed time series    & Batch Size$\times$t$\times$ $\rvx$ dimension \\
Dense           & |$\rvx_t$| neurons            &  Batch Size$\times$t$\times$|$\rvx_t$|                 \\
Concat zero     & concatenation           &  Batch Size$\times$T$\times$|$\rvx_t$|                 \\
Dense           & n neurons               &  Batch Size$\times$T$\times$n                    \\ \midrule\midrule
$\psi$               & Decoder                 &                           \\\midrule
Input:$\rvz_{1:T}$      & Latent Variable         &  Batch Size$\times$T$\times$n                    \\
Dense           & |$\rvx_t$| neurons, Tanh       &  Batch Size$\times$T$\times$|$\rvx_t$|                 \\ \midrule\
r               & Modular Prior Networks  &                           \\ \midrule\midrule
Input: $\rvz_{1:T}$      & Latent Variable         &  Batch Size$\times$(n+1)                  \\
Dense           & 128 neurons,LeakyReLU   & (n+1)$\times$128                 \\
Dense           & 128 neurons,LeakyReLU   & 128$\times$128                   \\
Dense           & 128 neurons,LeakyReLU   & 128$\times$128                   \\
Dense           & 1 neuron                &  Batch Size$\times$1                      \\
Jacobian Compute & Compute log(det(J))     &  Batch Size                        \\ \bottomrule
\end{tabular}%
}
\label{tab:imple}
\end{table}

\begin{table}[h]
\centering
\caption{TDRL architecture details. $T$: length of time series. $|x_t|$: input dimension. $n$: latent dimension. LeakyReLU: Leaky Rectified Linear Unit. Tanh:  Hyperbolic tangent function.
}
\resizebox{0.8\columnwidth}{!}{%
\begin{tabular}{lll}
\hline
Configuration   & Description             & Output                                \\ \midrule\midrule
$\phi $         & Latent Variable Encoder &                                           \\ \hline
Input:$x_{1:t}$ & Observed time series    & Batch Size$\times $t$\times $ X dimension \\
Dense           & $|x_t|$ neurons         & Batch Size$\times $t$\times $$|x_t|$      \\
Concat zero     & concatenation           & Batch Size$\times $T$\times $$|x_t|$      \\
Dense           & n neurons               & Batch Size$\times $T$\times $n            \\ \midrule\midrule
$\psi $         & Decoder                 &                                           \\ \hline
Input:z1:T      & Latent Variable         & Batch Size$\times $T$\times $n            \\
Dense           & $|x_t|$ neurons,Tanh    & Batch Size$\times $T$\times $$|x_t|$      \\ \midrule\midrule
r               & Modular Prior Networks  &                                           \\ \hline
Input:z1:T      & Latent Variable         & Batch Size$\times $(n+1)                  \\
Dense           & 128 neurons,LeakyReLU   & (n+1)$\times $128                         \\
Dense           & 128 neurons,LeakyReLU   & 128$\times $128                           \\
Dense           & 128 neurons,LeakyReLU   & 128$\times $128                           \\
Dense           & 1 neuron                & Batch Size$\times $1                      \\
JacobianCompute & Compute log(det(J))     & Batch Size                                \\ \hline
\end{tabular}%
}
\label{app:tdrl}
\end{table}

\begin{table}[t]
\centering
\small
\caption{CARD architecture details. $T$: length of time series. $|x_t|$: input dimension. $p$: patch number.
$n$: patch length.
}
\resizebox{0.8\columnwidth}{!}{%
\begin{tabular}{lll}
\hline
Configuration   & Description          & Output                                    \\ \hline
Input:$x_{1:t}$ & Observed time series & Batch Size$\times $t$\times $ X dimension \\
Permute         & Matrix Transpose     & BS$\times |x_t|\times t$                  \\
unfold          & unfold               & BS$\times |x_t|\times p\times $n          \\
Add random      & add                  & BS$\times |x_t|\times $p$\times $n        \\
Dense           & d neurons, dropout   & BS$\times |x_t|\times $p$\times $d        \\
Concat random   & concatenation        & BS$\times |x_t|\times $(p+1)$\times $d    \\
Dense           & Attenion             & BS$\times |x_t|\times $(p+1)$\times $d    \\
reshape         & reshape              & BS$\times |x_t|\times $((p+1)$\times $d)  \\
Dense           & (T-t) neurons        & BS$\times |x_t|\times $(T-t)              \\
Permute         & Matrix Transpose     & BS$\times $(T-t)$\times |x_t|$            \\ \hline
\end{tabular}%
}
\label{app:CARD}
\end{table}

\begin{table}[t]
\centering
\small
\caption{iTransformer architecture details. $T$: length of time series. $|x_t|$: input dimension. 
}
\resizebox{0.8\columnwidth}{!}{%
\begin{tabular}{lll}
\hline
Configuration   & Description          & Output                                    \\ \hline
Input:$x_{1:t}$ & Observed time series & Batch Size$\times $t$\times $ X dimension \\
Permute         & Matrix Transpose     & BS$\times |x_t|\times $t                  \\
Dense           & Embedding            & BS$\times |x_t|\times $d                  \\
Dense           & Attention            & BS$\times |x_t|\times $d                  \\
Dense           & T-t neurons          & BS$\times |x_t|\times $(T-t)              \\
Permute         & Matrix Transpose     & BS$\times $(T-t)$\times |x_t|$            \\ \hline
\end{tabular}%
}
\label{app:iTransformer}
\end{table}

\begin{table}[t]
\centering
\small
\caption{Auoformer architecture details. $T$: length of time series. $|x_t|$: input dimension. 
}
\resizebox{0.8\columnwidth}{!}{%
\begin{tabular}{lll}
\hline
Configuration          & Description          & Output                                      \\ \midrule\midrule
Input:$x_{1:t}$        & Observed time series & Batch Size$\times $t$\times $ X dimension \\
seasonal, trend        & AvgPool1d            & BS$\times $t$\times |x_t|$                \\
seasonal concat zero   & concatenation        & BS$\times $T$\times |x_t|$                \\
trend concat x\_mean   & concatenation        & BS$\times $T$\times |x_t|$                \\
seasonal               & Embedding            & BS$\times $T$\times $d                      \\ \midrule\midrule
Input:$x_{1:t}$        & Observed time series & Batch Size$\times $t$\times $ X dimension \\
Dense                  & Embedding            & BS$\times $t$\times $d                    \\
x\_enc                 & Attention            & BS$\times $t$\times $d                      \\ \midrule\midrule
Input:x\_enc, seasonal & Conv1d               & BS$\times $T$\times $d                    \\
Dense                  & Conv1d               & BS$\times $T$\times |x_t|$                \\
Add trend              & Add                  & BS$\times $T$\times |x_t|$                \\ \hline
\end{tabular}%
}
\label{app:Auoformer}
\end{table}

\begin{table}[t]
\centering
\small
\caption{TimesNet architecture details. $T$: length of time series. $|x_t|$: input dimension. 
}
\resizebox{0.8\columnwidth}{!}{%
\begin{tabular}{lll}
\hline
Configuration   & Description          & Output                                    \\ \hline
Input:$x_{1:t}$ & Observed time series & Batch Size$\times $t$\times $ X dimension \\
Dense           & Embedding            & BS$\times $t$\times $d                    \\
Permute         & Matrix Transpose     & BS$\times $d$\times $t                    \\
Dense           & T neurons            & BS$\times $d$\times $T                    \\
Permute         & Matrix Transpose     & BS$\times $T$\times $d                    \\
Dense           & Conv1d,LayerNorm     & BS$\times $T$\times $d                    \\
Dense           & $|x_t|$ neurons      & BS$\times $T$\times |x_t|$                \\ \hline
\end{tabular}%
}
\label{app:TimesNet}
\end{table}

\begin{table}[t]
\centering
\small
\caption{MICN architecture details. $T$: length of time series. $|x_t|$: input dimension. LeakyReLU: Leaky Rectified Linear Unit. Tanh:  Hyperbolic tangent function.
}
\resizebox{0.8\columnwidth}{!}{%
\begin{tabular}{lll}
\hline
Configuration        & Description           & Output                                    \\ \hline
Input:$x_{1:t}$      & Observed time series  & Batch Size$\times $t$\times $ X dimension \\
seasonal, trend      & AvgPool1d             & BS$\times $t$\times |x_t|$                \\
seasonal concat zero & concatenation         & BS$\times $T$\times |x_t|$                \\
trend permute        & Matrix Transpose      & BS$\times |x_t|\times $t                  \\
trend                & (T-t) neurons         & BS$\times |x_t|\times $(T-t)              \\
trend permute        & Matrix Transpose      & BS$\times $(T-t)$\times |x_t|$            \\
seasonal             & d neurons             & BS$\times $T$\times $d                    \\
seasonal             & Conv1d,LayerNorm,Tanh & BS$\times $T$\times $d                    \\
seasonal             & $|x_t|$ neurons       & BS$\times $T$\times |x_t|$                \\
seasonal add trend   & Add                   & BS$\times $(T-t)$\times |x_t|$            \\ \hline
\end{tabular}%
}
\label{app:MICN}
\end{table}

\begin{table}[t]
\centering
\small
\caption{FITS architecture details. $T$: length of time series. $|x_t|$: input dimension. $p$: patch number.
$n$: patch length.
}
\resizebox{0.8\columnwidth}{!}{%
\begin{tabular}{lll}
\hline
Configuration   & Description          & Output                                    \\ \hline
Input:$x_{1:t}$ & Observed time series & Batch Size$\times $t$\times $ X dimension \\
FFT             & rfft                 & BS$\times $(t/2+1)$\times |x_t|$          \\
Permute         & Matrix Transpose     & BS$\times |x_t|\times $(t/2+1)            \\
Dense           & T/2+1 neurons        & BS$\times |x_t|\times $(T/2+1)            \\
Permute         & Matrix Transpose     & BS$\times $(T/2+1)$\times |x_t|$          \\
IFFT            & irfft                & BS$\times $T$\times |x_t|$                \\ \hline
\end{tabular}%
}
\label{app:FITS}
\end{table}
\clearpage

\section{Experiment Details}


\subsection{Simulation Experiment}\label{sim_exp}
\subsubsection{Data Generation Process}\label{app:sim_data_gen}
As for the temporally latent processes, we use MLPs with the activation function of LeakyReLU to model the sparse time-delayed and instantaneous relationships of temporally latent variables. For all datasets, we set sequence length as $5$ and transition lag as $1$. That is: $$z_ {t,i} = \left(LeakyReLU(W_{i,:}\cdot\mathbf{z}_ {t-1}, 0.2)+ V_{<i,i}\cdot\mathbf{z}_ {t,<i}\right)\cdot \epsilon_ {t,i} + \epsilon_ {t,i},$$ where $W_{i,:}$ is the $i$-th row of $W$ and $V_{<i,i}$ is the first $i-1$ columns in the $i$-th row of $V$. Moreover, each independent noise $\epsilon_{t,i}$ is sampled from the distribution of normal distribution.  We further let the data generation process from latent variables to observed variables be MLPs with the LeakyReLU units.

We provide 6 synthetic datasets, with 3,5,8,8,8,16 latent variables from A to F. For dataset A, we have $W_A = 
[[1, 1, 0],[0, 1, 0],[0, 0, 1]]$. For dataset $B$, we have $W_B$ as an eye matrix with 2 extra nonzero entries. For dataset $C, F$, we have $W_C,W_F$ as eye matrices. For datasets $D$ and $E$, $W_D$ and $W_E$ are dense matrices with all nonzero entries, which are generated by the data generator of TDRL\citep{yao2022temporally}. When it comes to $V$, we have and only have $V_{i-1,i}=1\ \ \forall i>0$ for dataset $A,B,C,E,F$. We also set $V_D=\mathbf{0}$, which means that there are no instantaneous effects.

The total size of the dataset is 100,000, with 1,024 samples designated as the validation set. The remaining samples are the training set.


\subsubsection{Evaluation Metrics.} To evaluate the identifiability performance of our method under instantaneous dependencies, we employ the Mean Correlation Coefficient (MCC) between the ground-truth $\rvz_t$ and the estimated $\hat{\rvz}_t$. A higher MCC denotes a better identification performance the model can achieve. In addition, we also draw the estimated latent causal process to validate our method. Since the estimated transition function will be a transformation of the ground truth, we do not compare their exact values, but only the activated entries.

\subsubsection{More Simulation Experiment Results}\label{app:implementation}
We run each experiment 3 times, with seeds $769,770,771$. Standard deviation results of the simulation datasets are shown in Table \ref{app:simulation_var}.
\begin{table}[]
\centering
\renewcommand{\arraystretch}{0.7}
 \setlength{\tabcolsep}{4pt}
\caption{Standard Deviation results on simulation data.}
\resizebox{\textwidth}{!}{%
\begin{tabular}{@{}c|cccccccccc@{}}
\toprule
\textbf{Datasets}    & \textbf{IDOL}   & \textbf{TDRL} & \textbf{G-CaRL} & \textbf{iCITRIS} & \textbf{$\beta-$VAE} & \textbf{SlowVAE} & \textbf{iVAE} & \textbf{FactorVAE} & \textbf{PCL} & \textbf{TCL} \\ \midrule
A      & 0.029                    & 0.045                    & 0.003                      & 0.105                       & 0.014                    & 0.025                    & 0.064                    & 0.025                    & 0.011                   & 0.008                   \\
B      & 0.013                    & 0.022                    & 0.004                      & 0.039                       & 0.013                    & 0.005                    & 0.011                    & 0.035                    & 0.073                   & 0.028                   \\
C      & 0.002                    & 0.028                    & 0.003                      & 0.039                       & 0.031                    & 0.010                    & 0.024                    & 0.027                    & 0.021                   & 0.038                   \\
D      & 0.006                    & 0.003                    & 0.051                      & 0.010                       & 0.028                    & 0.074                    & 0.034                    & 0.027                    & 0.033                   & 0.007                   \\
E  & 0.037                    & 0.005                    & 0.001                      & 0.021                       & 0.015                    & 0.002                    & 0.009                    & 0.002                    & 0.039                   & 0.016                   \\

F  & 0.029  & 0.018 & 0.004 & - & 0.165 & 0.134 & 0.014 & 0.031 & 0.018 & 0.007                   \\

\bottomrule
\end{tabular}
}
\label{app:simulation_var}
\end{table}

\subsubsection{Ablation Study}
\label{app: syn abl sec}

To further illustrate the significance of sparsity, we compare our IDOL model with its variant that lacks a sparsity constraint (IDOL-S). The experimental results on the synthetic dataset are presented in Table~\ref{app: syn abl}. The result shows that sparsity constraint is crucial to the model.

\begin{table}[]
    \centering
    \caption{MCC results of IDOL and IDOL-S with mean and standard deviation.}
    \begin{tabular}{c|cc}
    \hline
         & IDOL & IDOL-S \\
    \hline
       MCC  & 0.8595 (0.0599) & 0.8498 (0.0481)\\
    \hline
       
    \end{tabular}
    
    \label{app: syn abl}
\end{table}

\begin{table}[]
\caption{MSE and MAE results of different methods on the transformed HumanEva-I datase.}
\resizebox{\columnwidth}{!}{%
\begin{tabular}{c|c|cccccccccccccccc}
\midrule
\multirow{2}{*}{dataset}      & \multicolumn{1}{c|}{\multirow{2}{*}{\begin{tabular}[c]{@{}c@{}}Predict \\ Length\end{tabular}}} & \multicolumn{2}{c}{IDOL}        & \multicolumn{2}{c}{TDRL} & \multicolumn{2}{c}{CARD} & \multicolumn{2}{c}{FITS} & \multicolumn{2}{c}{MICN} & \multicolumn{2}{c}{iTransformer} & \multicolumn{2}{c}{TimesNet} & \multicolumn{2}{c}{Autoformer} \\ \cmidrule{3-18} 
                              & \multicolumn{1}{c|}{}                                                                           & MSE            & MAE            & MSE         & MAE        & MSE            & MAE           & MSE            & MAE           & MSE       & MAE                & MSE                & MAE               & MSE              & MAE             & MSE               & MAE              \\ \midrule
\multirow{3}{*}{D}   & 125                                                                                             & \textbf{0.082} & \textbf{0.196} & 0.084       & 0.202      & 0.127          & 0.251         & 0.108          & 0.237         & 0.083     & 0.201              & 0.106              & 0.227             & 0.111            & 0.235           & 0.133             & 0.267            \\
                              & 250                                                                                             & \textbf{0.101} & \textbf{0.226} & 0.134       & 0.274      & 0.204          & 0.319         & 0.141          & 0.273         & 0.107     & 0.236              & 0.161              & 0.287             & 0.192            & 0.312           & 0.188             & 0.326            \\
                              & 375                                                                                             & \textbf{0.113} & \textbf{0.244} & 0.144       & 0.288      & 0.230          & 0.342         & 0.180          & 0.306         & 0.115     & 0.247              & 0.193              & 0.315             & 0.226            & 0.344           & 0.224             & 0.352            \\ \midrule
\multirow{3}{*}{G}     & 125                                                                                             & \textbf{0.091} & \textbf{0.220} & 0.097       & 0.235      & 0.098          & 0.227         & 0.168          & 0.305         & 0.095     & 0.225              & 0.097              & 0.223             & 0.117            & 0.247           & 0.141             & 0.283            \\
                              & 250                                                                                             & \textbf{0.103} & \textbf{0.254} & 0.139       & 0.285      & 0.144          & 0.274         & 0.183          & 0.320         & 0.130     & 0.272              & 0.123              & 0.260             & 0.134            & 0.271           & 0.154             & 0.300            \\
                              & 375                                                                                             & \textbf{0.138} & \textbf{0.276} & 0.155       & 0.296      & 0.179          & 0.310         & 0.182          & 0.321         & 0.154     & 0.295              & 0.146              & 0.286             & 0.165            & 0.308           & 0.161             & 0.308            \\ \midrule
\multirow{3}{*}{P}    & 125                                                                                             & \textbf{1.195} & \textbf{0.680} & 1.337       & 0.834      & 1.531          & 0.787         & 2.306          & 1.027         & 1.259     & 0.786              & 1.488              & 0.793             & 1.905            & 0.911           & 2.941             & 1.210            \\
                              & 250                                                                                             & \textbf{1.961} & \textbf{1.045} & 2.411       & 1.307      & 3.124          & 1.346         & 3.221          & 1.293         & 2.498     & 1.346              & 3.152              & 1.383             & 3.908            & 1.513           & 4.169             & 1.633            \\
                              & 375                                                                                             & \textbf{2.326} & \textbf{1.164} & 2.514       & 1.326      & 3.948          & 1.558         & 4.020          & 1.537         & 2.377     & 1.274              & 4.075              & 1.631             & 4.315            & 1.662           & 4.951             & 1.815            \\ \midrule
\multirow{3}{*}{SD}  & 125                                                                                             & \textbf{0.365} & \textbf{0.405} & 0.373       & 0.418      & 0.512          & 0.482         & 0.595          & 0.543         & 0.379     & 0.425              & 0.517              & 0.490             & 0.523            & 0.501           & 0.785             & 0.632            \\
                              & 250                                                                                             & \textbf{0.551} & \textbf{0.526} & 0.573       & 0.540      & 0.874          & 0.671         & 0.923          & 0.710         & 0.569     & 0.532              & 0.813              & 0.655             & 0.815            & 0.656           & 1.061             & 0.769            \\
                              & 375                                                                                             & \textbf{0.649} & \textbf{0.579} & 0.653       & 0.581      & 0.980          & 0.720         & 1.075          & 0.781         & 0.655     & 0.587              & 0.959              & 0.721             & 0.966            & 0.726           & 1.193             & 0.826            \\ \midrule
\multirow{3}{*}{W}      & 125                                                                                             & \textbf{0.045} & \textbf{0.162} & 0.047       & 0.164      & 0.122          & 0.254         & 0.288          & 0.431         & 0.046     & \textbf{0.162}     & 0.124              & 0.257             & 0.067            & 0.184           & 0.314             & 0.405            \\
                              & 250                                                                                             & \textbf{0.104} & \textbf{0.256} & 0.143       & 0.302      & 0.333          & 0.434         & 0.589          & 0.618         & 0.141     & 0.302              & 0.340              & 0.438             & 0.170            & 0.292           & 0.629             & 0.614            \\
                              & 375                                                                                             & \textbf{0.150} & \textbf{0.313} & 0.167       & 0.331      & 0.431          & 0.508         & 0.691          & 0.673         & 0.175     & 0.341              & 0.425              & 0.506             & 0.253            & 0.370           & 1.048             & 0.761            \\ \midrule
\multirow{3}{*}{WT} & 125                                                                                             & \textbf{0.062} & \textbf{0.189} & 0.078       & 0.214      & 0.118          & 0.262         & 0.227          & 0.385         & 0.071     & 0.204              & 0.115              & 0.256             & 0.131            & 0.271           & 0.223             & 0.361            \\
                              & 250                                                                                             & \textbf{0.127} & \textbf{0.277} & 0.152       & 0.311      & 0.297          & 0.421         & 0.407          & 0.520         & 0.153     & 0.312              & 0.287              & 0.414             & 0.195            & 0.338           & 0.395             & 0.509            \\
                              & 375                                                                                             & \textbf{0.151} & \textbf{0.305} & 0.171       & 0.328      & 0.365          & 0.473         & 0.425          & 0.530         & 0.169     & 0.327              & 0.313              & 0.441             & 0.289            & 0.431           & 0.430             & 0.538            \\ \bottomrule
\end{tabular}%
}
\label{tab:transform_exp}
\end{table}

\begin{table}[]
\caption{Standard deviation of MSE and MAE results on the different motion.}
\label{tab:my-table}
\resizebox{\textwidth}{!}{%
\begin{tabular}{@{}cc|c|cccccccccccccccc@{}}
\toprule
\multicolumn{2}{c|}{\multirow{2}{*}{dataset}}                                        & \multicolumn{1}{c|}{\multirow{2}{*}{\begin{tabular}[c]{@{}c@{}}Predict \\ Length\end{tabular}}} & \multicolumn{2}{c}{IDOL} & \multicolumn{2}{c}{TDRL} & \multicolumn{2}{c}{CARD} & \multicolumn{2}{c}{FITS} & \multicolumn{2}{c}{MICN} & \multicolumn{2}{c}{iTransformer} & \multicolumn{2}{c}{TimesNet} & \multicolumn{2}{c}{Autoformer} \\ \cmidrule(l){4-19} 
\multicolumn{2}{c|}{}                                                                & \multicolumn{1}{c|}{}                                                                           & MSE         & MAE        & MSE         & MAE        & MSE            & MAE           & MSE            & MAE           & MSE            & MAE           & MSE                & MAE               & MSE              & MAE             & MSE               & MAE              \\ \midrule
\multicolumn{1}{c|}{\multirow{12}{*}{\rotatebox{90}{H36M}}}      & \multirow{3}{*}{G}     & 100                                                                                             & 0.0008      & 0.0004     & 0.0018      & 0.0026     & 0.0008         & 0.0010        & 0.0008         & 0.0010        & 0.0007         & 0.0027        & 0.0016             & 0.0019            & 0.0038           & 0.0049          & 0.0008            & 0.0015           \\
\multicolumn{1}{c|}{}                                &                               & 125                                                                                             & 0.0009      & 0.0012     & 0.0009      & 0.0010     & 0.0006         & 0.0008        & 0.0005         & 0.0008        & 0.0017         & 0.0033        & 0.0008             & 0.0009            & 0.0004           & 0.0015          & 0.0012            & 0.0012           \\
\multicolumn{1}{c|}{}                                &                               & 150                                                                                             & 0.0021      & 0.0022     & 0.0026      & 0.0029     & 0.0004         & 0.0005        & 0.0002         & 0.0003        & 0.0002         & 0.0008        & 0.0001             & 0.0003            & 0.0033           & 0.0023          & 0.0012            & 0.0011           \\ \cmidrule(l){2-19} 
\multicolumn{1}{c|}{}                                & \multirow{3}{*}{J}          & 125                                                                                             & 0.0144      & 0.0217     & 0.0115      & 0.0085     & 0.0499         & 0.0263        & 0.0026         & 0.0004        & 0.0032         & 0.0081        & 0.0256             & 0.0131            & 0.0872           & 0.0296          & 0.0238            & 0.0171           \\
\multicolumn{1}{c|}{}                                &                               & 150                                                                                             & 0.0099      & 0.0161     & 0.0088      & 0.0083     & 0.0108         & 0.0050        & 0.0023         & 0.0010        & 0.0106         & 0.0135        & 0.0183             & 0.0116            & 0.0837           & 0.0325          & 0.0085            & 0.0089           \\
\multicolumn{1}{c|}{}                                &                               & 175                                                                                             & 0.0003      & 0.0038     & 0.0126      & 0.0123     & 0.0478         & 0.0130        & 0.0030         & 0.0010        & 0.0088         & 0.0129        & 0.0355             & 0.0200            & 0.0289           & 0.0109          & 0.0603            & 0.0133           \\ \cmidrule(l){2-19} 
\multicolumn{1}{c|}{}                                & \multirow{3}{*}{TC}   & 25                                                                                              & 0.0001      & 0.0005     & 0.0009      & 0.0028     & 0.0009         & 0.0034        & 0.0009         & 0.0022        & 0.0001         & 0.0010        & 0.0001             & 0.0004            & 0.0015           & 0.0023          & 0.0026            & 0.0055           \\
\multicolumn{1}{c|}{}                                &                               & 50                                                                                              & 0.0004      & 0.0022     & 0.0011      & 0.0023     & 0.0018         & 0.0038        & 0.0003         & 0.0010        & 0.0003         & 0.0013        & 0.0001             & 0.0004            & 0.0017           & 0.0046          & 0.0021            & 0.0042           \\
\multicolumn{1}{c|}{}                                &                               & 75                                                                                              & 0.0005      & 0.0026     & 0.0007      & 0.0022     & 0.0019         & 0.0030        & 0.0001         & 0.0002        & 0.0004         & 0.0015        & 0.0002             & 0.0004            & 0.0029           & 0.0078          & 0.0031            & 0.0066           \\ \cmidrule(l){2-19} 
\multicolumn{1}{c|}{}                                & \multirow{3}{*}{W}      & 25                                                                                              & 0.0019      & 0.0039     & 0.0016      & 0.0020     & 0.0053         & 0.0046        & 0.0061         & 0.0069        & 0.0005         & 0.0012        & 0.0018             & 0.0015            & 0.0023           & 0.0027          & 0.0887            & 0.0796           \\
\multicolumn{1}{c|}{}                                &                               & 50                                                                                              & 0.0003      & 0.0009     & 0.0020      & 0.0019     & 0.0432         & 0.0136        & 0.0009         & 0.0004        & 0.0005         & 0.0005        & 0.0022             & 0.0023            & 0.0064           & 0.0078          & 0.0660            & 0.0419           \\
\multicolumn{1}{c|}{}                                &                               & 75                                                                                              & 0.0154      & 0.0217     & 0.0011      & 0.0019     & 0.0630         & 0.0131        & 0.0033         & 0.0025        & 0.0009         & 0.0014        & 0.0033             & 0.0017            & 0.0320           & 0.0169          & 0.0499            & 0.0349           \\ \midrule
\multicolumn{1}{c|}{\multirow{18}{*}{\rotatebox{90}{HumanEVA-I}}} & \multirow{3}{*}{D}   & 125                                                                                             & 0.0002      & 0.0028     & 0.0011      & 0.0015     & 0.0012         & 0.0020        & 0.0013         & 0.0013        & 0.0010         & 0.0017        & 0.0015             & 0.0015            & 0.0048           & 0.0037          & 0.0007            & 0.0008           \\
\multicolumn{1}{c|}{}                                &                               & 250                                                                                             & 0.0012      & 0.0016     & 0.0024      & 0.0024     & 0.0067         & 0.0038        & 0.0002         & 0.0005        & 0.0010         & 0.0013        & 0.0009             & 0.0006            & 0.0128           & 0.0089          & 0.0044            & 0.0042           \\
\multicolumn{1}{c|}{}                                &                               & 375                                                                                             & 0.0056      & 0.0062     & 0.0001      & 0.0009     & 0.0109         & 0.0064        & 0.0003         & 0.0004        & 0.0020         & 0.0026        & 0.0033             & 0.0021            & 0.0063           & 0.0012          & 0.0043            & 0.0030           \\ \cmidrule(l){2-19} 
\multicolumn{1}{c|}{}                                & \multirow{3}{*}{G}     & 125                                                                                             & 0.0043      & 0.0090     & 0.0066      & 0.0111     & 0.0026         & 0.0030        & 0.0058         & 0.0050        & 0.0064         & 0.0105        & 0.0053             & 0.0054            & 0.0016           & 0.0019          & 0.0076            & 0.0066           \\
\multicolumn{1}{c|}{}                                &                               & 250                                                                                             & 0.0095      & 0.0174     & 0.0105      & 0.0147     & 0.0093         & 0.0059        & 0.0027         & 0.0022        & 0.0016         & 0.0021        & 0.0063             & 0.0071            & 0.0006           & 0.0018          & 0.0003            & 0.0002           \\
\multicolumn{1}{c|}{}                                &                               & 375                                                                                             & 0.0014      & 0.0017     & 0.0005      & 0.0006     & 0.0059         & 0.0034        & 0.0013         & 0.0004        & 0.0037         & 0.0047        & 0.0085             & 0.0079            & 0.0035           & 0.0036          & 0.0046            & 0.0043           \\ \cmidrule(l){2-19} 
\multicolumn{1}{c|}{}                                & \multirow{3}{*}{P}    & 125                                                                                             & 0.0311      & 0.0277     & 0.0951      & 0.0434     & 0.0465         & 0.0229        & 0.0169         & 0.0062        & 0.0333         & 0.0277        & 0.0251             & 0.0032            & 0.2698           & 0.0744          & 0.0700            & 0.0013           \\
\multicolumn{1}{c|}{}                                &                               & 250                                                                                             & 0.0719      & 0.0225     & 0.0601      & 0.0318     & 0.1735         & 0.0154        & 0.0317         & 0.0086        & 0.0155         & 0.0069        & 0.0123             & 0.0024            & 0.5525           & 0.0976          & 0.0708            & 0.0093           \\
\multicolumn{1}{c|}{}                                &                               & 375                                                                                             & 0.2728      & 0.0168     & 0.0889      & 0.0335     & 0.1108         & 0.0319        & 0.0051         & 0.0907        & 0.0382         & 0.0171        & 0.0222             & 0.0034            & 0.0472           & 0.0040          & 0.0880            & 0.0243           \\ \cmidrule(l){2-19} 
\multicolumn{1}{c|}{}                                & \multirow{3}{*}{SD}  & 125                                                                                             & 0.0056      & 0.0031     & 0.0119      & 0.0116     & 0.0045         & 0.0038        & 0.0043         & 0.0027        & 0.0157         & 0.0155        & 0.0131             & 0.0099            & 0.0219           & 0.0167          & 0.0362            & 0.0235           \\
\multicolumn{1}{c|}{}                                &                               & 250                                                                                             & 0.0024      & 0.0002     & 0.0075      & 0.0077     & 0.0415         & 0.0176        & 0.0008         & 0.0011        & 0.0229         & 0.0146        & 0.0064             & 0.0029            & 0.0172           & 0.0069          & 0.0061            & 0.0017           \\
\multicolumn{1}{c|}{}                                &                               & 375                                                                                             & 0.0149      & 0.0145     & 0.0072      & 0.0079     & 0.0107         & 0.0044        & 0.0025         & 0.0014        & 0.0050         & 0.0020        & 0.0110             & 0.0057            & 0.0105           & 0.0051          & 0.0336            & 0.0100           \\ \cmidrule(l){2-19} 
\multicolumn{1}{c|}{}                                & \multirow{3}{*}{W}      & 125                                                                                             & 0.0029      & 0.0043     & 0.0078      & 0.0150     & 0.0034         & 0.0041        & 0.0037         & 0.0039        & 0.0016         & 0.0034        & 0.0062             & 0.0087            & 0.0028           & 0.0032          & 0.1585            & 0.1221           \\
\multicolumn{1}{c|}{}                                &                               & 250                                                                                             & 0.0082      & 0.0105     & 0.0236      & 0.0300     & 0.0157         & 0.0127        & 0.0002         & 0.0006        & 0.0125         & 0.0142        & 0.0130             & 0.0101            & 0.0105           & 0.0107          & 0.1344            & 0.0748           \\
\multicolumn{1}{c|}{}                                &                               & 375                                                                                             & 0.0089      & 0.0108     & 0.0273      & 0.0296     & 0.0035         & 0.0053        & 0.0019         & 0.0014        & 0.0212         & 0.0225        & 0.0121             & 0.0088            & 0.0081           & 0.0067          & 0.2889            & 0.0771           \\ \cmidrule(l){2-19} 
\multicolumn{1}{c|}{}                                & \multirow{3}{*}{WT} & 125                                                                                             & 0.0031      & 0.0029     & 0.0065      & 0.0106     & 0.0059         & 0.0066        & 0.0032         & 0.0035        & 0.0019         & 0.0022        & 0.0021             & 0.0027            & 0.0056           & 0.0042          & 0.0453            & 0.0455           \\
\multicolumn{1}{c|}{}                                &                               & 250                                                                                             & 0.0014      & 0.0030     & 0.0044      & 0.0066     & 0.0123         & 0.0102        & 0.0007         & 0.0010        & 0.0054         & 0.0085        & 0.0113             & 0.0111            & 0.0554           & 0.0523          & 0.0168            & 0.0152           \\
\multicolumn{1}{c|}{}                                &                               & 375                                                                                             & 0.0146      & 0.0188     & 0.0020      & 0.0027     & 0.0431         & 0.0283        & 0.0019         & 0.0014        & 0.0029         & 0.0027        & 0.0077             & 0.0054            & 0.0328           & 0.0246          & 0.0054            & 0.0043           \\ \bottomrule
\end{tabular}%
}
\end{table}

\subsection{Real-world Experiments}
\label{app:exp_ddd}

\subsubsection{Real-world Dataset Description}\label{app:dataset}
\textbf{Human3.6M Dataset} is collected over 3.6 million different human poses, viewed from 4 different angles, using an accurate human motion capture system. The motions were executed by 11 professional actors, and cover a diverse set of everyday scenarios including conversations, eating, greeting, talking on the phone, posing, sitting, smoking, taking photos, waiting, walking in various non-typical scenarios. We randomly use four motions, i.e., Gestures (Ge), Jog (J), CatchThrow (CT), and Walking (W), for the task of the human motion forecasting. The data is obtained from the joint angles (provided by Vicon’s skeleton fitting procedure) by applying forward kinematics on the skeleton of the subject. The parametrization is called relative because there is a specially designated joint, usually called the root (roughly corresponding to the pelvis bone position), which is taken as the center of the coordinate system, while the other joints are estimated relative to it.

In this dataset, the joint positions can be considered as latent variables. And the kinematic representation can be considered as the observed variables. The kinematic representation (KR) considers the relative joint angles between limbs. We consider the process from joint position to joint angles as a mixture process. 

\textbf{HumanEva-I dataset} comprises 3 subjects each performing several action categories. Each pose has 15 joints with three axis. We choose 6 motions, i.e., : Discussion (D), Greeting (Gr), Purchases (P), SittingDown (SD), Walking (W), and WalkTogether (WT) for the task of human motion forecasting. Specifically, the ground truth motion of the body was captured using a commercial motion capture (MoCap) system from ViconPeak.5
The system uses reflective markers and six 1M-pixel cameras to recover the 3D position of the markers and thereby
estimate the 3D articulated pose of the body. We consider the joints as latent variables and the signals recorded from the system as observations. 

The sample number of different datasets are shown in Table \ref{tab:data_size}.

\begin{table}[]
\centering
\caption{Sample number of different datasets.}
\label{tab:data_size}
\resizebox{0.4\textwidth}{!}{%
\begin{tabular}{@{}ccc@{}}
\toprule
\multicolumn{2}{c}{Dataset} & Size \\ \midrule
\multicolumn{1}{c|}{\multirow{4}{*}{Humaneva}} & \multicolumn{1}{c|}{Gestures} & 1686 \\
\multicolumn{1}{c|}{} & \multicolumn{1}{c|}{Jog} & 2050 \\
\multicolumn{1}{c|}{} & \multicolumn{1}{c|}{ThrowCatch} & 1021 \\
\multicolumn{1}{c|}{} & \multicolumn{1}{c|}{Walking} & 2454 \\
\midrule
\multicolumn{1}{c|}{\multirow{6}{*}{Human}} & \multicolumn{1}{c|}{Discussion} & 13759 \\
\multicolumn{1}{c|}{} & \multicolumn{1}{c|}{Greeting} & 8416 \\
\multicolumn{1}{c|}{} & \multicolumn{1}{c|}{Purchases} & 7480 \\
\multicolumn{1}{c|}{} & \multicolumn{1}{c|}{SittingDown} & 16438 \\
\multicolumn{1}{c|}{} & \multicolumn{1}{c|}{Walking} & 16257 \\
\multicolumn{1}{c|}{} & \multicolumn{1}{c|}{WalkTogether} & 10658 \\ \bottomrule
\end{tabular}%
}
\end{table}

\subsubsection{More Experiments Results}\label{app:more_exp}
We further consider a more complex mixture process. To achieve is, we further apply a transformation on the observed variables, i.e., $\bar{\rvx}_t=f_o(\rvx_t)$, where $f_o$ is a linear transformation. Then we can consider the sensors as latent variables with instantaneous dependencies and conduct motion forecasting on the transformed datasets. Experiment results on the transformed HumanEva-I dataset are shown in Table \ref{tab:transform_exp}. 

\begin{figure}[t]
    \centering
    \includegraphics[width=0.8\columnwidth]{./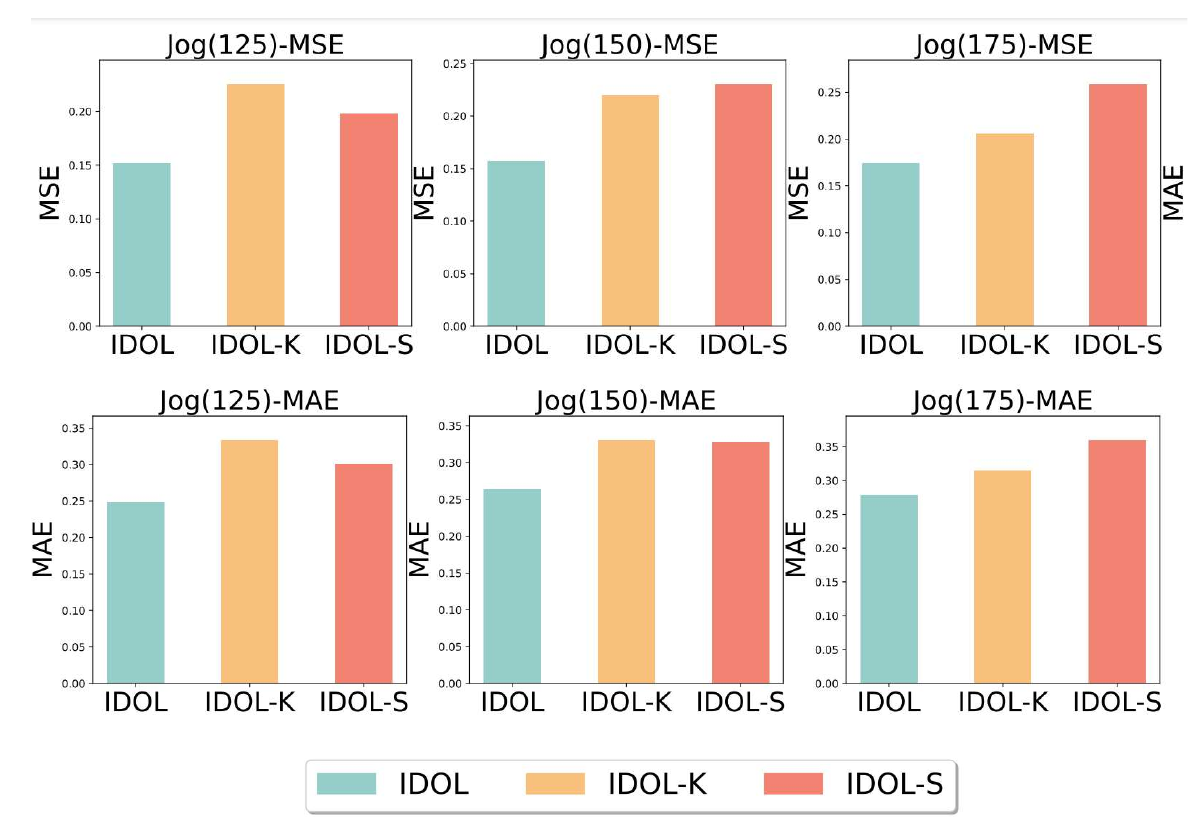}
    \vspace{-5mm}
    \caption{Ablation study on the Jog motion. we explore the impact of different loss terms.}
    \label{app:abl} 
\end{figure}

\subsubsection{Qualitative Results}

To further show the effectiveness of the proposed method, we also randomly select a batch of test samples and visualize the forecasting results of different baselines. Visualization results in Figure \ref{fig:motion_vis} show the predicted results of ``Discussion'', ``Greeting'', and  ``Purchases'', respectively, which illustrate how the forecasting results align the ground-truth motions, where the red lines denote the ground-truth motions and the lines in other colors denote the forecasting results of different methods. According to the experiment results, we can find that the forecasting results of our method achieve the best alignment of the ground truth. In the meanwhile, the other methods that do not consider the instantaneous dependencies of latent variables may generate some exaggerated body movements, for example, the hyperflexed knee joint and the actions that violate the physics rules, which reflects the importance of considering the instantaneous dependencies of latent causal process of time series data. 

\begin{figure}[t]
    \centering
    \includegraphics[width=1\columnwidth]{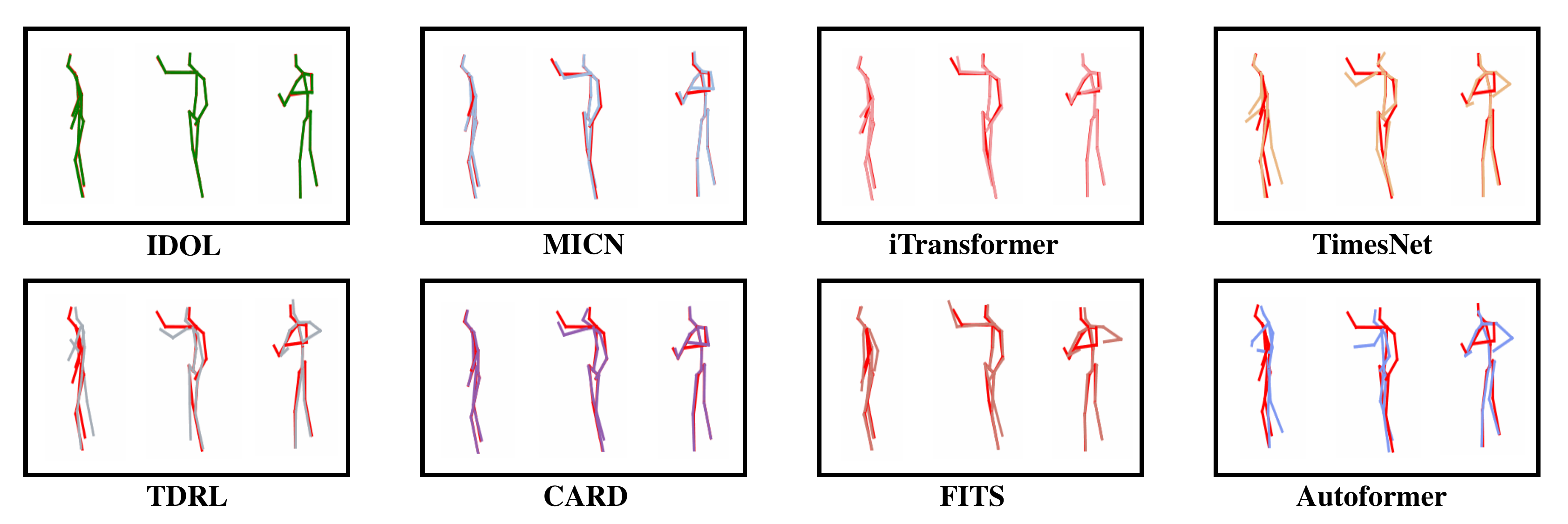}
    \caption{The illustration of visualization of different methods of Walking. The red lines denote the ground-truth motions,  those lines in other colors denote the prediction motions of different methods. }
    \label{fig:motion_vis}
\end{figure}

\subsubsection{Ablation Study} \label{app:ablation}
To evaluate the effectiveness of individual loss terms, we also devise the two model variants. 1) \textbf{IDOL-K}: remove the KL divergence restriction of prior estimation. 2) \textbf{IDOL-S}: remove sparsity restriction of latent dynamics. Experiment results on the Jog dataset are shown in Figure~\ref{app:abl}. We can find that the sparsity restriction of latent dynamics plays an important role in the model performance, reflecting that the restriction of latent dynamics can benefit the identifiability of latent variables. We also discover that incorporating the KL divergence has a positive impact on the overall performance of the model, which shows the necessity of identifiability.

\begin{table}[t]
\color{black}
\centering
\caption{Mean and standard deviation of wall-clock training times for one epoch.}
\label{tab:wall_clock_time}
\resizebox{\textwidth}{!}{%
\begin{tabular}{@{}l|llllllll@{}}
\toprule
Methods & IDOL         & TDRL          & CARD          & FITS          & MICN           & iTransformer  & TimesNet        & Autoformer     \\ \midrule
Second  & 65.960(7.556) & 33.902(5.781) & 62.994(3.035) & 92.941(0.715) & 45.547(13.034) & 38.155(6.441) & 324.941(25.286) & 51.648(15.673) \\ \bottomrule
\end{tabular}%
}
\end{table}

\section{\textcolor{black}{Complexity Analysis}}

\subsection{\textcolor{black}{Wall-clock Training Times}}

\textcolor{black}{To evaluate the computational complexity of our method, we provide the wall-clock training times of different methods. Specifically, we use a consistent hardware setup including the same GPU, CPU, and memory configurations for each model to ensure comparability. Sequentially, in our codes, we wrap the training process in a timer to measure the actual elapsed time from start to finish of one epoch. For each method, we repeat three different times and further report the mean and standard deviation. The wall-clock training times are shown in Table \ref{tab:wall_clock_time}. }


\textcolor{black}{Based on the experimental results, we draw the following conclusions: 
\begin{itemize}
    \item Compared to our baseline model TDRL, the wall-clock training time of the proposed IDOL is nearly twice as slow. Theoretically speaking, the primary computational difference is that IDOL calculates the Jacobian matrix for both time-delayed and instantaneous relationships, while TDRL only calculates the time-delayed component.
    \item Compared to other mainstream baselines, our IDOL method is slower than some models like MICN and iTransformer. However, our method is still faster than models like FITS.
\end{itemize}
}


\subsection{\textcolor{black}{Model Efficiency Analysis}}


\textcolor{black}{To evaluate the model efficiency, we provide model efficiency comparison on the low-dimension dataset (e.g., the Human dataset) and the high-dimension dataset (the MoCap dataset) by evaluating the model efficiency from three aspects: forecasting performance, training speed, and memory footprint. As shown in Figure~\ref{fig:model_eff}, in low-dimensional datasets, IDOL performs nearly as efficiently as top methods like Autoformer and MICN and outperforms others like CARD. However, in high-dimensional datasets (117 observations), IDOL requires more training time due to the added complexity of Jacobian calculations for instantaneous effects.
\label{app:model_eff}}

\begin{figure}
\color{black}
    \centering
    \includegraphics[width=\linewidth]{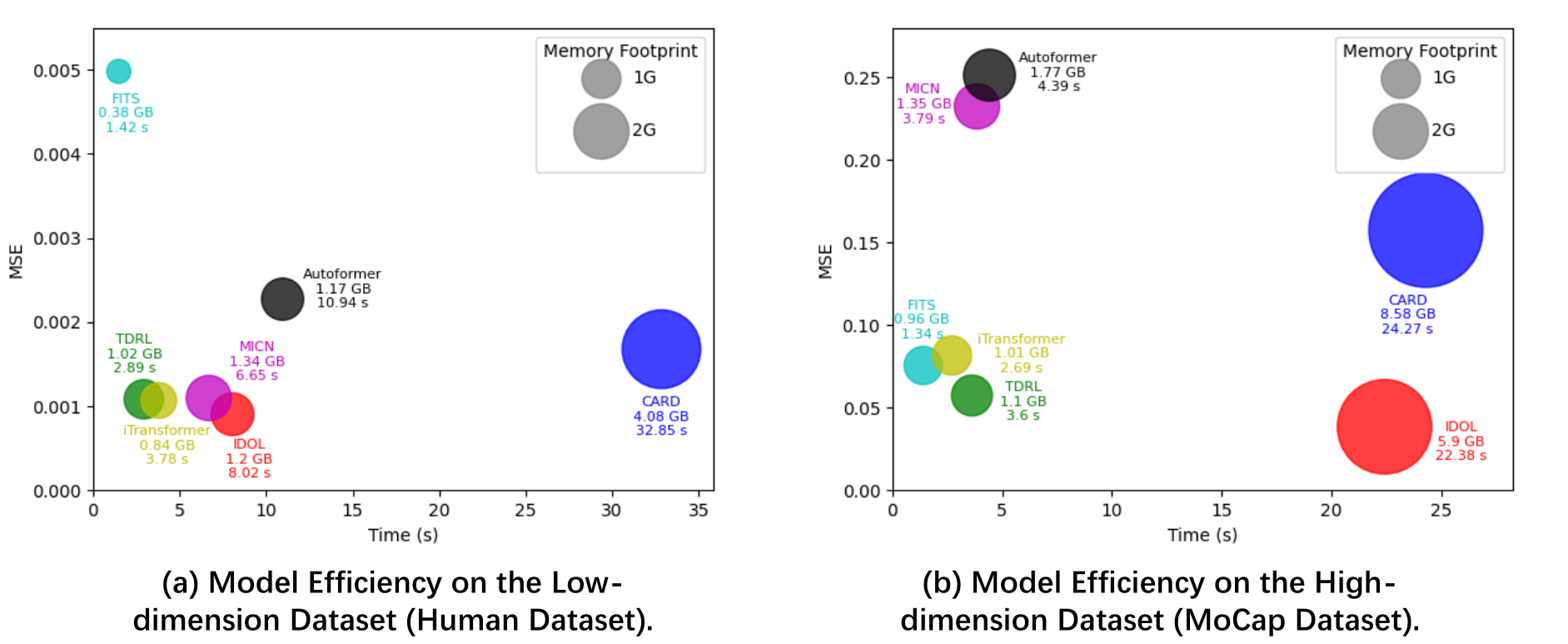}
    \caption{Model efficiency comparison for one training step on different datasets.}
    \label{fig:model_eff}
\end{figure}

\section{\textcolor{black}{More Discussions}}


\subsection{\textcolor{black}{Analysis of High-Dimensional Data}}
\label{app:high_dim}

\textcolor{black}{Identifying causal relationships in high-dimensional latent variable settings is a well-recognized challenge in the causal inference community \citep{lopez2022large,cheng2024cuts+}. As the dimensionality increases, the complexity of identifying causal structures grows due to the expanding search space. We conduct experiments on both synthetic and real-world datasets to verify it and propose a potential solution.}

\subsubsection{\textcolor{black}{Simulation Experiments}}


\textcolor{black}{As for the experiment on the synthetic datasets, we follow the same data generation process to generate simulation datasets with latent variable dimensions of 8, 16, 24, and 32. All these datasets share a similar latent causal process: chain-like instantaneous effects and ono-to-one temporal effects. We then measured the Mean Correlation Coefficient (MCC) between the ground truth $\rvz_t$ and the estimated $\hat{\rvz}_t$. The experimental results are presented in Table \ref{tab:diff_dim_syn}. According to the experiment result, as the dimension of latent variables increases, the value of MCC is still acceptable, despite possible performance loss in high-dimensional problems.}

\begin{table}[t]
\color{black}
\centering
\small
\caption{Experiments results of IDOL on simulation data on different dimensions of latent variables.}
\label{tab:diff_dim_syn}
\begin{tabular}{@{}c|ccccc@{}}
\toprule
Dimension  & 8  & 16       & 24        & 32     \\ \midrule
MCC      & 0.9801(0.002)  & 0.9747 (0.0029)  & 0.9243 (0.0173)  & 0.8640 (0.0071) \\ \bottomrule
\end{tabular}
\end{table}

\subsubsection{\textcolor{black}{Real-world Experiments}}


\textcolor{black}{For the real-world datasets, we consider the CMU-MoCap dataset\footnote{http://mocap.cs.cmu.edu/}. It contains various motion capture recordings and 117 skeleton-based measurements, which is a higher dimensionality compared to the Human dataset. We choose the Running and Soccer motions and take the input-prediction length as 50-25 and 50-50, respectively. As shown in Table~\ref{tab:mocup_result}, the IDOL model achieved a comparable forecasting performance in the high-dimensional dataset Running.}

\subsubsection{\textcolor{black}{Potential Solution for High-Dimension Data}}
\textcolor{black}{To better address this challenge, here we propose several potential solutions to more effectively address the challenges of high-dimensional time-series data. One idea is to make use of the divide-and-conquer strategy. One possible way is to leverage independent relations in the measure of time series data, if any. For instance, if processes $X_1:=\{x_{t,1} | t\in T\}$ and $X_2:=\{x_{t,2} | t\in T\}$ happen to be independent of processes $X_3$ and $X_4$, then we can just learn the underlying processes for $(X_1, X_2)$ and $(X_3, X_4)$ separately. Another potential way is to use the conditional independent relations in the measured time series data. For example, if processes $X_1$ and $X_2$ are independent from $X_3$ and $X_4$ given $X_5$ and $X_6$, then we can just learn the underlying processes for $(X_1, X_2, X_5, X_6)$ and $(X_3, X_4, X_5, X_6)$ separately. In this way, we can reduce the search space and further reduce the complexity even in high-dimensional time series data. We hope that some other developments in reducing the computational load in deep learning can also be helpful.}

\begin{table}[t]
\color{black}
\caption{Experiment results of the CMU MoCap datasets.}
\label{tab:mocup_result}
\resizebox{\textwidth}{!}{%
\begin{tabular}{@{}c|c|cccccccccccccccc@{}}
\toprule
Motion                   & \begin{tabular}[c]{@{}c@{}}Predicted\\  Length\end{tabular} & \multicolumn{2}{c}{IDOL}        & \multicolumn{2}{c}{TDRL} & \multicolumn{2}{c}{CARD} & \multicolumn{2}{c}{FITS}        & \multicolumn{2}{c}{MICN} & \multicolumn{2}{c}{iTransformer} & \multicolumn{2}{c}{TimesNet} & \multicolumn{2}{c}{Autoformer} \\ \midrule
                         &                                                             & MSE            & MAE            & MSE         & MAE        & MSE         & MAE        & MSE            & MAE            & MSE         & MAE        & MSE             & MAE            & MSE           & MAE          & MSE            & MAE           \\ \midrule
\multirow{2}{*}{Running} & 50-25                                                       & 0.110           & 0.082          & 0.448       & 0.108      & 0.998       & 0.135      & \textbf{0.076} & \textbf{0.064} & 0.658       & 0.135      & 0.458           & 0.106          & 2.616         & 0.202        & 1.033          & 0.179         \\
                         & 50-50                                                       & 0.286          & 0.109          & 1.779       & 0.167      & 2.217       & 0.153      & \textbf{0.266} & \textbf{0.103} & 0.978       & 0.161      & 1.831           & 0.170           & 4.720          & 0.252        & 3.944          & 0.283         \\ \midrule
\multirow{2}{*}{Soccer}  & 50-25                                                       & \textbf{0.022} & \textbf{0.043} & 0.026       & 0.047      & 0.206       & 0.082      & 0.076          & 0.065          & 0.057       & 0.066      & 0.032           & 0.051          & 0.063         & 0.068        & 0.211          & 0.105         \\
                         & 50-50                                                       & \textbf{0.079} & \textbf{0.071} & 0.084       & 0.073      & 0.397       & 0.108      & 0.265          & 0.103          & 0.284       & 0.120       & 0.133           & 0.082          & 0.392         & 0.107        & 0.452          & 0.143         \\ \bottomrule
\end{tabular}%
}
\end{table}

\subsection{\textcolor{black}{Noisy Environments}}

\textcolor{black}{In some real-world scenarios, it is possible for the mixing process not to be invertible, for example, if the mixing process is highly noisy in each observed process, which might be the case in financial data \citep{hu2008instrumental}.  There are some developments relying on the additive noise assumptions. If one makes strong assumptions on the noise, such as assuming an additive noise model, it is possible to develop a certain type of identifiability just like the extension of nonlinear ICA to the additive noise model case in \citep{khemakhem2020variational}. However, general approaches to deal with non-parametric noise terms to developed in this field in the future. To further evaluate the insight of this potential solution, we have conducted experiments on synthetic data. Specifically, we first follow Equation~(\ref{equ:g2}) to generate latent variables with temporal and instantaneous dependencies and generate observed variables by $\rvx_t=\rvg(\rvz_t, \varepsilon)$, where $\varepsilon$ is the noise introduced to the mixing process. Experiment results in Table \ref{tab:noise_exp} show that our method can still achieve relatively good identifiability results even in a noisy environment, proving the potential of our insight. }


\begin{table}[]
\color{black}
\centering
\caption{The mean and standard deviation of MCC of the IDOL model under noise environment with different noise scales. The noise scale is defined as the ratio of the noise variance to the observation variance, expressed as a percentage.}
\label{tab:noise_exp}
\begin{tabular}{c|cccc}
\hline
Noise Scale & 0.1    & 0.3    & 0.5    & 0.7    \\ \hline
MCC         & 0.9257 & 0.8362 & 0.7381 & 0.6095 \\ \hline
\end{tabular}
\end{table}

\subsection{\textcolor{black}{Analysis of Sample Size}}
\textcolor{black}{To assess the impact of sample size on identification performance, we conduct an ablation study focusing on sample size. Specifically, we create subsets of Dataset A containing 1,000, 10,000, 25,000, and 50,000 samples. For each dataset, we used the same hyperparameters, such as learning rate, random seed, and batch size. We repeated the experiment three times with different random seeds and reported the values of mean and standard deviation. The results in Table~\ref{tab:low_sample_size_exp} show that as the sample size decreases, the performance of the model gradually declines.  However, even with just 1k samples, our method achieves relatively good performance, demonstrating its robustness in small sample scenarios.}

\begin{table}[t]
\color{black}

\centering
\caption{The mean and standard deviation of MCC on subsets of dataset A with different sizes.}
\label{tab:low_sample_size_exp}
\begin{tabular}{@{}c|ccccc@{}}
\toprule
Sample Size          & 1,000         & 10,000        & 25000        & 50,000 & 100,000     \\ \midrule
MCC        & 0.853(0.102) & 0.884(0.011) & 0.912(0.025) & 0.945(0.023) & 0.965(0.029) \\ \bottomrule
\end{tabular}
\end{table}

\subsection{\textcolor{black}{Subsampled Time Series}}
\textcolor{black}{Sequentially, we consider another case where the time series data are sampled with low resolutions. When the time series data is sampled with low frequency, additional edges are introduced into the Markov network, making it denser. We can further assume a sparse mixing process \cite{zheng2022identifiability} to achieve identifiability. Specifically, When conditioned on historical information that provides sufficient changes, the sparse mixing procedure assumption imposes structural constraints on the mapping from estimated to true latent variables. This compensates for potentially insufficient distribution changes, enabling identifiability even when the time series data with low-sample regimes. To further evaluate this insight, we also conduct experiments on synthetic datasets with low-sample regimes. Specifically, we first generate time series data with temporal and instantaneous dependencies. Then we randomly subsample the synthetic time series data. To enforce the sparsity of the estimated mixing process, we add an extra L1 constraint regarding the partial differential between the estimated observed and latent variables, i.e. $|\frac{\partial \hat{\rvx}_t}{\partial \hat{\rvz}_t}|_1$. Experiment results are shown in Table \ref{tab:subsample_exp}.}
\begin{table}[h]
\color{black}
\centering
\caption{The mean and standard deviation of MCC between the standard IDOL and the IDOL with sparse mixing constraint.}
\label{tab:subsample_exp}
\begin{tabular}{@{}c|cc@{}}
\toprule
Model & IDOL+Sparse Mixing Constraint & IDOL         \\ \midrule
MCC   & 0.837(0.078)                  & 0.786(0.085) \\ \bottomrule
\end{tabular}
\end{table}

\textcolor{black}{From the experimental results, we can draw the following conclusions: 1) When time series data is sampled at low frequency, the identifiable performance of the IDOL model decreases. This is because the causal process between latent variables becomes dense, which is consistent with our theoretical results. 2) by employing a sparse mixing constraint, we found that the identifiability performance of the model has been significantly improved, which proves that the above insight is effective.}

\end{document}